\documentclass[lettersize,journal,twoside]{IEEEtran}
\usepackage{amsmath,amsfonts}
\usepackage{algorithmic}
\usepackage{algorithm}
\usepackage{array}
\usepackage{textcomp}
\usepackage{stfloats}
\usepackage{url}
\usepackage{verbatim}
\usepackage{graphicx}
\usepackage{cite}
\usepackage{float} 
\usepackage{subcaption}
\usepackage{ifthen}
\usepackage{amssymb}
\usepackage{arydshln}
\usepackage[usenames,dvipsnames]{color}

\newtheorem{Theo}{\bf Theorem}
\newtheorem{Lem}{\bf Lemma}
\newtheorem{Assum}{\bf Assumption}
\newtheorem{Remark}{Remark}

\begin{document}
\title{Polar Code Based Federated Learning: Convergence Analysis and Resource Allocation
\thanks{This work is partially supported by the National Natural Science Foundation of China under Grants $62361146853$, and $62371129$, and~the Research Fund of the National Mobile Communications Research Laboratory, Southeast University, under Grant 2026A05.}
\thanks{Han Xiao and Wei Kang is with the School of Information Science and Engineering, Southeast University, Nanjing 211189, China (e-mail:hanxiao@seu.edu.cn; wkang@seu.edu.cn)}
\thanks{Nan Liu are with the National Mobile Communications Research Laboratory, Southeast University, Nanjing 211189, China (e-mail: nanliu@seu.edu.cn)}
}

\author{
	\IEEEauthorblockN{
		Han Xiao\IEEEauthorrefmark{*},
		Wei Kang\IEEEauthorrefmark{*} and
		Nan Liu\IEEEauthorrefmark{+}}
		}

\DeclareRobustCommand*{\IEEEauthorrefmark}[1]{\raisebox{0pt}[0pt][0pt]{\textsuperscript{\footnotesize\ensuremath{#1}}}}



\maketitle

\begin{abstract}
Federated learning (FL) enables collaborative model training across distributed devices without sharing raw data; however, it faces significant communication bottlenecks and channel impairments in practice. Conventional network layer treatments either idealize the channel as error free or apply equal error protection (EEP) to transmitted model updates, failing to account for the inherently unequal importance of quantization bits within a single local model. To address this limitation, we propose a cross layer polar code based FL scheme that leverages the unequal error protection (UEP) property of polar codes under finite block lengths. Specifically, the proposed design selectively protects more significant quantization bits, thereby mitigating the detrimental effects of channel noise. We further provide a rigorous convergence analysis of the proposed scheme, deriving an upper bound on the convergence gap, which we then jointly optimize over the number of quantization bits and the polar code block length across all training iterations. Experimental results demonstrate that both constant and variable block length configurations of our polar code based scheme consistently achieve substantial performance gains over uncoded and LDPC-based EEP benchmarks, with the advantage becoming increasingly pronounced as the channel quality deteriorating. These findings confirm the efficacy of our cross-layer design in enhancing FL robustness and efficiency under realistic channel conditions.
\end{abstract}

\section{Introduction}
Conventional machine learning relies on a centralized paradigm, wherein training data is stored and processed in a data center or cloud infrastructure \cite{zhu2020toward}. In practice, however, privacy regulations and communication resource limitations often prevent users from directly transmitting their raw data to the central server \cite{konevcny2016federated}. Benefiting from the rapidly increasing computing and storage capabilities of mobile devices, distributed learning, which enables local model training on individually collected data, has recently gained prominence \cite{mcmahan2017communication}. Federated learning (FL), one of the most promising distributed learning frameworks, allows users to collaboratively train a shared model without exposing their private datasets. This approach offers substantial advantages in preserving data privacy and achieving low latency communication, as highlighted in recent work \cite{oh2022communication}, \cite{zhang2022communication}. 

\begin{figure}[t]
	\centering
	\includegraphics[width=\linewidth]{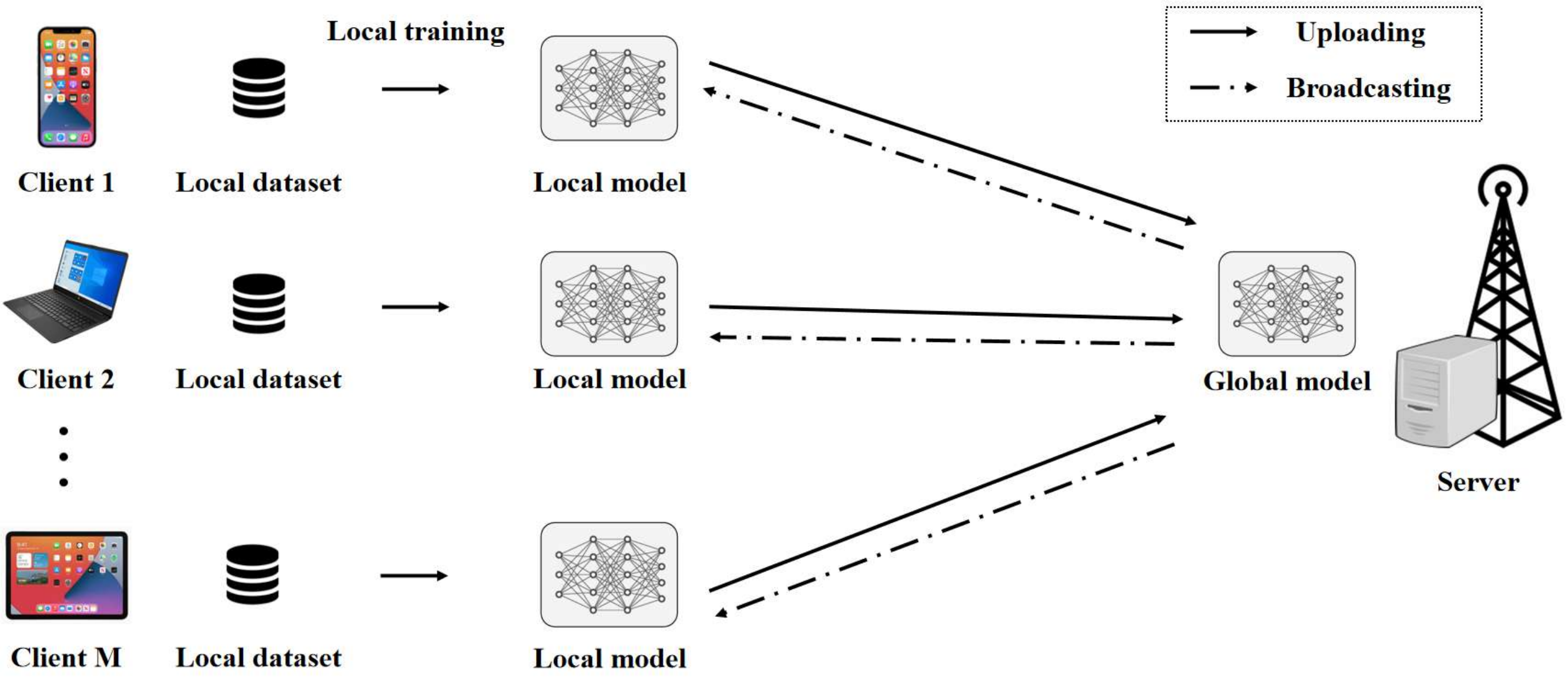}
	\caption{Federated learning system.}
	\label{Fig_FL_framework}
\end{figure}

Most existing studies on federated learning restrict their analysis to the network layer by assuming that each local model update is encapsulated within a single packet for transmission. Specifically, the majority of these works, e.g., \cite{10438393}, \cite{10145043}, \cite{10318063}, \cite{10346989}, \cite{10417012}, \cite{10367814}, \cite{10443546}, \cite{10419354}, \cite{10255731}, \cite{9878150}, disregards the inherently noisy nature of channels by assuming a rate limited yet error free channel. Other works, such as \cite{10253642}, \cite{2021A}, \cite{2022Quantized}, \cite{2022Federated}, \cite{LiBoning2024Learning}, \cite{10368103}, \cite{9864124}, \cite{2019Scheduling}, explicitly consider the noisy transmissions and treat the packet error probability as as a tunable  physical layer parameter, influenced by factors including channel conditions, bandwidth allocation, power control, and client selection, etc., to directly optimize the overall convergence performance.

Despite their diversity, the above network layer treatments suffer from two fundamental flaws. The first one is latency: the error free channel assumption  is theoretically justified only under infinite latency constraints, as per information theory  \cite{cover2006elements}. In practice, some works, e.g. \cite{2021Cellular}, employ an acknowledgment-and-retransmission mechanism, which may incur unbounded delays on a small subset of users, causing the straggling effect that disrupts synchronized global aggregation. The second flaw lies in the separation principle: at the network layer, compression and channel coding are treated as independent building blocks, with the physical-layer implementation details largely abstracted away. While this separation is valid for a sequence of sources  according to information theory \cite{cover2006elements}, it breaks down in FL, where each local model, whether scalar or vector, constitutes a single source symbol. During scalar quantization, each bit corresponds to a distinct quantization level; yet conventional channel coding provides equal error protection (EEP) uniformly across all bits. This design is suboptimal, as the quantized bits of a local model exhibit inherently uneven importance, thereby necessitating unequal error protection (UEP).

To overcome these limitations, this paper studies the FL problem from a cross-layer design perspective and proposes a polar code based FL scheme. By exploiting finite block length polarization, we intrinsically realize UEP to protect the quantized model bits according to their significance. We further derive an upper bound on the convergence gap of the proposed scheme and optimize this bound over the number of quantization bits and the polar code block length across training iterations. Our approach offers two key advantages. First, the inherent UEP property of polar codes provides stronger protection for the more important quantization bits compared with traditional EEP channel coding schemes. Second, our optimization reveals that allocating greater channel resources, specifically, longer polar code block lengths, to the later stages of training effectively suppresses the cumulative errors introduced by quantization, channel noise, and other impairments. Numerical experiments demonstrate that our proposed scheme achieves significant performance gains over both uncoded and LDPC-based EEP benchmarks. 


\section{System Model}\label{section2}
We consider a federated learning system, see Fig.\ref{Fig_FL_framework}, consisting of $M$ users, each indexed as $k \in \mathcal{M}=\{1,2, \ldots, M\}$ and  a server. The task of the federated learning system is to minimize the following function $F(\boldsymbol{w})$
\begin{align}
	F(\boldsymbol{w})\triangleq\frac{1}{M} \sum_{k=1}^{M} F_{k}(\boldsymbol{w}),
\end{align}
where $F_{k}(\boldsymbol{w})$ is the strongly-convex local loss function for the $k$-th user, and $\boldsymbol{w} \in \mathbb{R}^{d}$ denotes the $d$-dimensional model parameter vector. Every user has its own training data set $D_k$ with size $|D_k|$, and the data sets for different users are disjoint, i.e., $D_k\cap D_j=\emptyset$, for $k\ne j$. The overall data set for all users is denoted as $\mathcal{D}=\bigcup_{k \in \mathcal{M}} \mathcal{D}_{k}$, while the size of all training data is $|D| = \sum_{k=1}^{M} |D_k|$. 

Let $\boldsymbol{\xi}_{k}$ be the  mini-batch of the $k$-th user with size $b$, which is sampled from the dataset $D_k$ independently.  We denote the local empirical loss function with respect to mini-batch samples $\boldsymbol{\xi}_{k}$ as
\begin{align}
	F_{k}\left(\boldsymbol{w},\boldsymbol{\xi}_{k}\right)=\frac{1}{b} \sum_{i=1}^{b} f\left(\boldsymbol{w}, \xi_{k i}\right) .
\end{align}
where ${\xi}_{ki}$ is the i-th sample in the mini-batch $\boldsymbol{\xi}_{k}$, and $f\left(\boldsymbol{w}, \xi_{ki}\right)$ is the loss function with respect to ${\xi}_{ki}$. Each sample ${\xi}_{ki}$ consists of a pair $ (x_{ki},y_{ki})$, where $x_{ki}$ is the feature and $y_{ki}$ is the label. 

Let $\boldsymbol{w}^{*}$ be the optimal model weight vector, and let $F^{*}$ and $F_{k}^{*}$ be the minimum values of $F$ and $F_{k}$, respectively. Then, $\Gamma=F^{*}-\frac{1}{M} \sum_{k=1}^{M} F_{k}^{*}$ represents the degree of data heterogeneity.
\begin{figure}[t]
	\centering
	\includegraphics[width=1\linewidth]{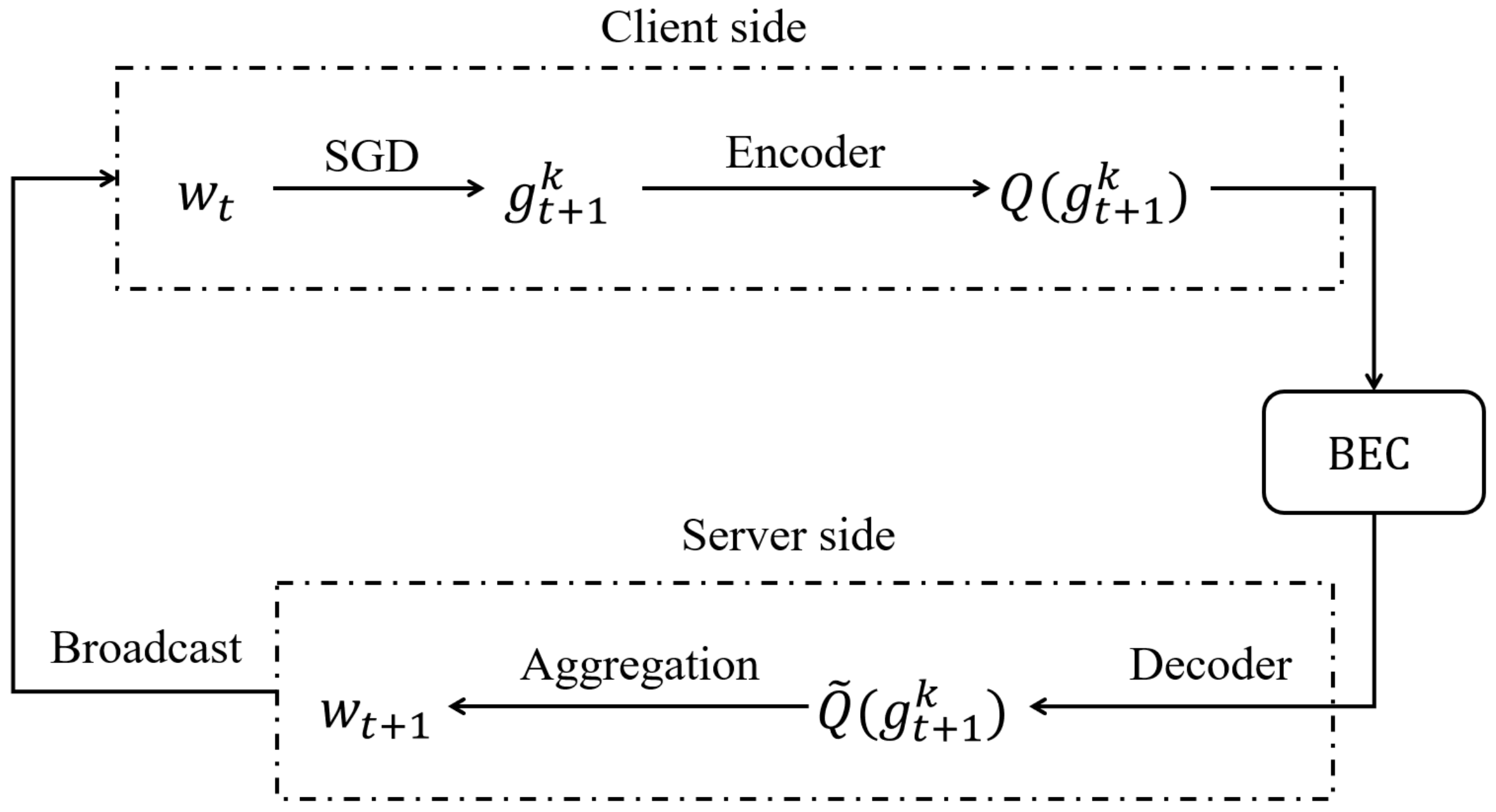}
	\caption{Flowchart of the learning procedure in a single round.}
	\label{Fig_FL_flow}
\end{figure}

The system diagram for the FL system studied in this paper is depicted  in Fig.\ref{Fig_FL_flow}. In each round of the iteration, the system will conduct the following operations:

	\textbf{Broadcasting:} Due to the limit of communication resources, not all the clients can participate in each learning round. At the beginning of the $(t + 1)$-th round,  the central server randomly select $K$ users out of $M$ users uniformly. The set of selected users in the $(t + 1)$-th round is denoted by $\boldsymbol S_{t+1}$, with $|S_{t+1} | = K$.   In the $(t + 1)$-th round, the central server broadcasts the aggregated model parameter vector from the $t$-th round, $\boldsymbol w_{t}$, to the selected users in the set $\boldsymbol S_{t+1}$. 

	\textbf{Local Model Updating:}  Each user in the set $\boldsymbol S_{t+1}$  conducts a mini-batch stochastic gradient descent (SGD) method with the received global model $\boldsymbol {w}_{t}$. The gradient  $\boldsymbol{g}_{t+1}^{k}$ is computed as 	
	\begin{align}
		\boldsymbol{g}_{t+1}^{k} = \nabla_{\boldsymbol{w}_{t}} F_{k}\left(\boldsymbol{w}_{t},\boldsymbol{\xi}_{t+1}^{k}\right) ,
	\end{align}
which can be used to update	 the model parameter vector  locally  as follows
	\begin{align}
		\boldsymbol{w}_{t+1}^{k}=\boldsymbol{w}_{t} - {\eta}_{t+1} \boldsymbol{g}_{t+1}^{k} ,
	\end{align}
However, here we will transmit the gradient $\boldsymbol{g}_{t+1}^{k}$ instead of the model parameter vector $\boldsymbol{w}_{t+1}^{k}$ through the channel \cite{zheng2020design}.
We note that $\boldsymbol{g}_{t+1}^{k}\in\mathbb{R}^d$ is a $d$-dimensional vector and we denote the $j$-th component of $\boldsymbol{g}_{t+1}^{k}$ as $\boldsymbol{g}_{t+1,j}^{k}$.

	\textbf{Binary Erasure Channel:} 
	Each user in the set $\boldsymbol S_{t+1}$ uses a binary erasure channel with erasure probability  $\epsilon$ (BEC($\epsilon$)) to transmit the local gradient $\boldsymbol{g}_{t+1}^{k}$ to the server. Assume that the input of BEC($\epsilon$) is $X$, and the output of the BEC($\epsilon$) is $Y$. The BEC($\epsilon$) is characterized as	\begin{align}
		Y=\left\{\begin{array}{ll}
			X, \quad &\text { w.p. } 1-\epsilon  \\
			E, \quad &\text { w.p. } \epsilon 
		\end{array}\right . ,
	\end{align}
	where $E$ represents the erasure occurred in the channel.
We denote $W: \mathcal{X} \rightarrow \mathcal{Y}$ as the  BEC($\epsilon$) with the binary input alphabet $ \mathcal{X}=\{0,1\}$, the ternary output alphabet $\mathcal{Y}=\{0,1,E\}$ and transition probabilities $W(y | x), x \in \mathcal{X}, y \in \mathcal{Y} $. 
We assume that BEC($\epsilon$) $W$ is used $N$ times consecutively and  denote the $N$ memoryless uses of the channel as $W^{N}: \mathcal{X}^{N} \rightarrow \mathcal{Y}^{N}$ with $W^{N}\left(y_{1}^{N} | x_{1}^{N}\right) = \prod_{i=1}^{N} W\left(y_{i} | x_{i}\right)$. To deliver the gradient $\boldsymbol{g}_{t+1}^{k}$ through $N$ user of BEC($\epsilon$), we define encoder and decoder
\begin{align}
\psi:\mathbb R^d\mapsto \mathcal X^N\\
\varphi:\mathcal Y^N\mapsto\mathbb R^d
\end{align}
More specifically, the channel input $X^n=\psi(\boldsymbol{g}_{t+1}^{k})$ and the reconstruction at the channel output is $\tilde{\boldsymbol{g}}_{t+1}^{k}=\varphi(Y^n)$.
	
	\textbf{Aggregation:} 
	 After reconstructing all the gradients of the users in the set $\boldsymbol S_{t+1}$, the server aggregates the gradients and updates the global model parameter vector $\boldsymbol{w}_{t+1}$ as follows
	\begin{align}
		\boldsymbol{w}_{t+1}=\boldsymbol{w}_{t} - \frac{{\eta}_{t+1}}{K} \sum_{k \in \mathcal{S}_{t+1}} \tilde{\boldsymbol {g}}_{t+1}^{k} .\label{agg}
	\end{align}
	where ${\eta}_{t+1}$ is called step size or learning rate.

\section{Proposed polar code based FL scheme}\label{scheme1}

Existing works typically adhere to the separation principle, wherein the encoding and decoding blocks are decomposed into two distinct functions: quantization and channel coding \cite{chen2024communication}. In this section, we propose a novel scheme comprising a joint design of unbiased quantization and polar codes. Specifically, we first employ unbiased quantization to reduce the communication overhead. Next, we adopt polar codes as the channel coding method to realize unequal error protection (UEP). Finally, we coordinate the compression and channel coding by assigning UEP to the quantization bits according to their relative significance.
\subsection{Quantization}
As mentioned above, to ease the burden of communication resources, the local gradients $\boldsymbol S_{t+1}$ of the $k$-th user will be converted into a quantized version, which is denoted as ${Q}(\boldsymbol {g}_{t+1}^{k})$. We assume that the gradient updates are bounded in the range $[B_{min},B_{max}]$.  We assign $n$ quantization bits to transfer the real values into the discrete values represented by a binary sequence of length $n$. The quantization levels $\left\{s_{0},s_{1},\cdots,s_{2^{n}-1}\right\}$ are uniformly placed between the upper bound and lower bound, i.e., $[B_{min},B_{max}]$ is equally divided into $2^{n}-1$ intervals where the width of each interval is 
\begin{align}
	\Delta = \frac{B_{max} - B_{min}}{2^{n}-1}\label{quant} .
\end{align}
For the $i$-th interval $[s_{i}, s_{i+1}]$, we note that
\begin{align}
	s_{i}=B_{min} + i \times \Delta, \quad i=0, \cdots, 2^{n}-1 .
\end{align}

So if the j-th component of the d-dimensional local update  $\boldsymbol{g}_{t+1,j}^{k} \in [s_{i}, s_{i+1}]$, it will be quantized to 
\begin{align}
	Q(\boldsymbol{g}_{t+1,j}^{k})=\left\{\begin{array}{ll}
		s_{i}, \quad &\text { w.p. } \frac{s_{i+1} - \boldsymbol{g}_{t+1,j}^{k}}{s_{i+1}-s_{i}} \\
		s_{i+1}, \quad &\text { w.p. } \frac{\boldsymbol{g}_{t+1,j}^{k} - s_{i}}{s_{i+1}-s_{i}}
	\end{array}\right .
\end{align}
 
Furthermore, the
$d$-dimensional vector $\boldsymbol{g}_{t+1}^{k}$ will be  quantized separately in each component, i.e.,
\begin{align}
	Q(\boldsymbol{g}_{t+1}^{k})=\left(Q(\boldsymbol{g}_{t+1, 1}^{k})^T,  \dots, Q(\boldsymbol{g}_{t+1, d}^{k})^T\right)^T, \quad \forall k,t .
\end{align}

The above quantization is unbiased as indicated in the following lemma \cite{2022Quantized}.
\begin{Lem}
	The stochastic quantization method $Q(\cdot)$ is unbiased, i.e.,
	\begin{align}
		\mathsf{E}[Q(\boldsymbol{g}_{t+1}^{k}) |  \boldsymbol{g}_{t+1}^{k}] = \boldsymbol{g}_{t+1}^{k} .
	\end{align}
\end{Lem}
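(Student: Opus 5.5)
The plan is to reduce the vector statement to a scalar one and then compute a two-point expectation. Since $Q(\cdot)$ acts separately on each component, and conditional expectation of a vector is taken componentwise, it suffices to show that for every $j\in\{1,\dots,d\}$
\begin{align}
\mathsf{E}\big[Q(\boldsymbol{g}_{t+1,j}^{k}) \,\big|\, \boldsymbol{g}_{t+1}^{k}\big]=\boldsymbol{g}_{t+1,j}^{k}.
\end{align}
The randomness of each $Q(\boldsymbol{g}_{t+1,j}^{k})$ depends only on the value $\boldsymbol{g}_{t+1,j}^{k}$. Conditioning on the whole vector therefore amounts to conditioning on that value, and we may treat $g=\boldsymbol{g}_{t+1,j}^{k}$ as a fixed number.

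For fixed $g$, the standing assumption $g\in[B_{min},B_{max}]$ together with the uniform grid $s_i=B_{min}+i\Delta$ guarantees an index $i$ with $g\in[s_i,s_{i+1}]$. Then $Q(g)$ is a two-point random variable, and the expectation is
\begin{align}
s_i\frac{s_{i+1}-g}{s_{i+1}-s_i}+s_{i+1}\frac{g-s_i}{s_{i+1}-s_i}
=\frac{g\,(s_{i+1}-s_i)}{s_{i+1}-s_i}=g.
\end{align}
Here the cross terms $s_is_{i+1}$ cancel. The denominator is $\Delta>0$ because $B_{max}>B_{min}$ and $n\ge1$. The two probabilities are nonnegative and sum to one, so the distribution is well defined.

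The only step needing care is the case where $g$ lies exactly on a grid point $s_i$, so that it belongs to two adjacent intervals. If we use $[s_{i-1},s_i]$, the formula puts probability $1$ on $s_i$. If we use $[s_i,s_{i+1}]$, it also puts probability $1$ on $s_i$. So $Q(g)=g$ almost surely either way, and the definition is consistent. Stacking the $d$ componentwise identities then gives $\mathsf{E}[Q(\boldsymbol{g}_{t+1}^{k})\mid\boldsymbol{g}_{t+1}^{k}]=\boldsymbol{g}_{t+1}^{k}$. The argument is elementary; the main obstacle is only bookkeeping: the boundedness assumption on the gradient and the well-definedness of $Q$ at grid points.
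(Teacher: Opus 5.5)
Your proof is correct. The paper gives no argument of its own for this lemma; it cites \cite{2022Quantized}, and your componentwise two-point expectation computation is the standard argument behind that citation. Your remarks on the range assumption $[B_{min},B_{max}]$ and on the consistency of $Q$ at grid points fill in details the paper leaves implicit.
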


As the result of the quantization, each component of the gradient $Q(\boldsymbol{g}_{t+1,j}^{k})$ is mapped into a binary sequence of length $n$, denoted as $B_{t+1,j,l}^k$ where $l=1,\dots,n$. The quantization interval represented by the $i$-th bit is $2^{i-1}\Delta$. We note that the quantization interval represented by different bits are not equal and therefore it requires the unequal error protection (UEP) property from channel coding.

\subsection{Polar codes and Bhattacharyya parameters}
Polar codes is a proven capacity achieving channel coding scheme \cite{Arikan}. With block length going to infinity, the synthesized channels  polarize to either error free or totally blocked. However, in the case of finite block length, the synthesized channels polarize mildly, i.e., the qualities of some synthesized channel improve and some worsen, not reaching the $0-1$ extremes.

To quantify this diverse performance of synthesized channels, we consider the Bhattacharyya parameter $Z(W)$ defined as follows
\begin{align}
	Z(W) \triangleq \sum_{y \in \mathcal{Y}} \sqrt{W(y | 0) W(y | 1)} .
\end{align}

With successive cancellation  decoder being used, the Bhattacharyya parameter $Z\left(W_{N}^{(i)}\right)$ is shown \cite{Arikan} as an upper bound of the probability of the bit error for the $i$-th synthesized channel in the polar code, i.e.,
	\begin{align}
		P\left(\mathcal{E}_{i}\right) \leq Z\left(W_{N}^{(i)}\right) ,
	\end{align}

It is also shown \cite{Arikan} that in BECs, the Bhattacharyya parameters $Z\left(W_{N}^{(i)}\right)$ can be calculated analytically in a recursive way as
\begin{align}
	Z\left(W_{N}^{(2 j-1)}\right) & =2 Z\left(W_{N / 2}^{(j)}\right)-Z\left(W_{N / 2}^{(j)}\right)^{2} ,\\
	Z\left(W_{N}^{(2 j)}\right) & =Z\left(W_{N / 2}^{(j)}\right)^{2} ,
\end{align}
with $Z\left(W_{1}^{(1)}\right) = \epsilon$.

We rearrange the labeling of synthesized channels in the polar code according to their Bhattacharyya parameters such that if $i<j$, we have
\begin{align}
Z(W_N^{(i)}) {\le} Z(W_N^{(j)})
\end{align}
The ascending order shows that polar code in the finite block length possesses 
the unequal error protection (UEP) property, which is required from the quantization.

\subsection{Joint design of quantization and channel coding}
The quantized gradient $Q(\boldsymbol{g}_{t+1}^{k})$ includes $d$ components, i.e., $ Q(\boldsymbol{g}_{t+1, j}^{k})$ for $j=1,\dots,d$. Each component is represented by a binary sequence of length $n$, i.e., $\{B_{t+1,j,1}^k,\dots,B_{t+1,j,n}^k\}$. 
We have
\begin{align}
Q&(\boldsymbol {g}_{t+1,j}^{k}) = \nonumber\\
&\Delta(B_{t+1,j,n}^k  2^{n-1} + \ldots + B_{t+1,j,l}^k2^{l-1} + \ldots + B_{t+1,j,1}^k).
\end{align}
As we mentioned at the end of the subsection of quantization, the quantization interval represented by different bits are not equal, therefore, we use the UEP property of the channel coding to protect the quantization bits. More specifically, we feed $B_{t+1,j,l}^k$ into the $l$-th synthesized channel $W_N^{(l)}$ in the polar code. For the channel, where the label is larger than $n$, we input a fixed bit, which is referred as frozen bit. The output of the synthesized channel $W_N^{(l)}$ is denoted as $\hat B_{t+1,j,l}^k$ for $l=1,\dots,n$. The probability of error of $W_N^{(l)}$ is denoted as $P_l$ and due to the symmetry of the channel and polar code, we have
\begin{align}
P_l&=\mathsf{Pr}(\hat B_{t+1,j,l}^k\ne B_{t+1,j,l}^k|B_{t+1,j,l}^k)\le Z(W_N^{(l)}).
\end{align}

Before reconstruction, we shift the channel output by $Z_l$ to compensate the decoding error as follows
\begin{align}
\bar B_{t+1,j,l}^k&=\hat B_{t+1,j,l}^k+(1-2\hat B_{t+1,j,l}^k)Z_l\nonumber\\
&=\left\{\begin{array}{ll}1-Z_l&\text{if }\hat B_{t+1,j,l}^k=1\\Z_l&\text{if }\hat B_{t+1,j,l}^k=0\end{array}\right.\label{ch_sym}
\end{align}

This is a soft-output method to compensate the bias introduced by the channel. $Z_l$ is an estimated value of $P_l$ based on experimental experience which should satisfy $P_l \le Z_l \le Z(W_N^{(l)})$. Here we simply take $Z_l = Z(W_N^{(l)})$.

We apply the reconstruction procedure of the quantization and obtain
\begin{align}
\tilde{Q}&(\boldsymbol {g}_{t+1,j}^{k}) = \nonumber\\
&\Delta(\bar B_{t+1,j,n}^k  2^{n-1} + \ldots + \bar B_{t+1,j,l}^k 2^{l-1} + \ldots + \bar B_{t+1,j,1}^k ).
\end{align}
The difference between $Q(\boldsymbol {g}_{t+1,j}^{k})$ and $\tilde{Q}(\boldsymbol {g}_{t+1,j}^{k})$ is given by
\begin{align}
\tilde{Q}(\boldsymbol {g}_{t+1,j}^{k})-Q(\boldsymbol {g}_{t+1,j}^{k})=\Delta\sum_{l=1}^n(\bar B_{t+1,j,l}^k- B_{t+1,j,l}^k) 2^{l-1}\label{ch_err}
\end{align}

\section{Convergence Analysis}\label{section3}
In this section, we conduct a convergence analysis of the proposed SGD algorithm and quantitatively show how the client scheduling, quantization, polar codes over BEC jointly affect the convergence performance of federated learning. To facilitate the analysis, we make the following assumptions, which are  widely taken in the literature \cite{10438393} \cite{10367814} \cite{10075480} \cite{10304624} \cite{10388219}.
\begin{Assum}\label{assumption_1}
	Each local loss function $F_{k}(\boldsymbol{w})$ is $L$-smooth, i.e., 
	\begin{align}
		F_{k}(\boldsymbol{v}) \leq F_{k}(\boldsymbol{w}) + (\boldsymbol{v}-\boldsymbol{w})^{T} \nabla F_{k}(\boldsymbol{w})+\frac{L}{2} \| \boldsymbol{v}- \boldsymbol{w} \|^{2} ,\label{as_sm}
	\end{align}
	for any $\boldsymbol{w}$ and $\boldsymbol{v}$.
\end{Assum}
\begin{Assum}\label{assumption_2}
	Each local loss function $F_{k}(\boldsymbol{w})$ is $\mu$-strongly convex, i.e.,
	\begin{align}
		F_{k}(\boldsymbol{v}) \ge F_{k}(\boldsymbol{w}) + (\boldsymbol{v}-\boldsymbol{w})^{T} \nabla F_{k}(\boldsymbol{w}) + \frac{{\mu}}{2} \| \boldsymbol{v}-   \boldsymbol{w} \|^{2} ,
	\end{align}
	for any $\boldsymbol{w}$ and $\boldsymbol{v}$.
\end{Assum}
\begin{Assum}\label{assumption_3}
	The local stochastic gradients are unbiased, i.e.,
	\begin{align}
		\mathsf{E}[\boldsymbol{g}_{t+1}^{k}|\boldsymbol{w_{t}}] = \nabla F_{k}(\boldsymbol{w_{t}}) ,
	\end{align}
	for any $k \in 1,\ldots,M$.
\end{Assum}
\begin{Assum}\label{assumption_4}
	The local stochastic gradients are uniformly bounded, i.e.,
	\begin{align}
		\mathsf{E}\left\|\boldsymbol{g}_{t+1}^{k}\right\|^{2} \leq {G}^{2} ,
	\end{align}
	for any $k \in 1,\ldots,M$.
\end{Assum}
\begin{Assum}\label{assumption_5}
	The variances of local stochastic gradients are bounded, i.e.,
	\begin{align}
		\mathsf{E}\left\|\boldsymbol{g}_{t+1}^{k} - \nabla F_{k}(\boldsymbol{w_{t}})\right\|^{2} \leq \sigma_{k}^{2} ,
	\end{align}
	for any $k \in 1,\ldots,M$.
\end{Assum}

We present the following lemmas \cite{sery2020analog}.
\begin{Lem}
	If $f(\boldsymbol{x})$ is $L$-smooth with parameter $L_{1}$ and $g(\boldsymbol{x})$ is $L$-smooth with parameter $L_{2}$, then for parameters $\alpha \ge 0$ and $\beta \ge 0$, the function $h(\boldsymbol{x}) = \alpha f(\boldsymbol{x}) + \beta g(\boldsymbol{x})$ is also L-smooth with constant $\alpha L_{1} + \beta L_{2}$.
\end{Lem}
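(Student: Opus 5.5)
The plan is to verify the defining inequality of $L$-smoothness in the form of Assumption~\ref{assumption_1}, i.e. the quadratic upper bound (\ref{as_sm}), directly for $h$. Since the paper takes smoothness to mean exactly this descent-lemma inequality rather than a Lipschitz-gradient condition, the argument is a single linear combination of two inequalities. No convexity and no further structure of $f$ or $g$ is needed.

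\textbf{Step 1.} Write the two hypotheses for arbitrary $\boldsymbol{v},\boldsymbol{x}$:
\begin{align}
f(\boldsymbol{v}) &\le f(\boldsymbol{x}) + (\boldsymbol{v}-\boldsymbol{x})^T\nabla f(\boldsymbol{x}) + \tfrac{L_1}{2}\|\boldsymbol{v}-\boldsymbol{x}\|^2, \nonumber\\
g(\boldsymbol{v}) &\le g(\boldsymbol{x}) + (\boldsymbol{v}-\boldsymbol{x})^T\nabla g(\boldsymbol{x}) + \tfrac{L_2}{2}\|\boldsymbol{v}-\boldsymbol{x}\|^2. \nonumber
\end{align}

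\textbf{Step 2.} Multiply the first inequality by $\alpha\ge 0$ and the second by $\beta\ge 0$. This is the only place the sign hypotheses enter: nonnegative multipliers preserve the direction of the inequalities.

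\textbf{Step 3.} Add the two scaled inequalities. By linearity of the gradient, $\alpha\nabla f(\boldsymbol{x})+\beta\nabla g(\boldsymbol{x})=\nabla h(\boldsymbol{x})$, and the function values on each side combine into $h(\boldsymbol{v})$ and $h(\boldsymbol{x})$. The result is
\begin{align}
h(\boldsymbol{v}) \le h(\boldsymbol{x}) + (\boldsymbol{v}-\boldsymbol{x})^T\nabla h(\boldsymbol{x}) + \tfrac{\alpha L_1+\beta L_2}{2}\|\boldsymbol{v}-\boldsymbol{x}\|^2, \nonumber
\end{align}
which is precisely (\ref{as_sm}) with constant $\alpha L_1+\beta L_2$.

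\textbf{Alternative definition.} If the Lipschitz-gradient definition $\|\nabla f(\boldsymbol{v})-\nabla f(\boldsymbol{x})\|\le L_1\|\boldsymbol{v}-\boldsymbol{x}\|$ is intended instead, the argument is the same in spirit. Apply the triangle inequality together with homogeneity of the norm, which gives $\|\alpha a\|=\alpha\|a\|$ because $\alpha\ge0$, and the same constant $\alpha L_1+\beta L_2$ follows.

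There is no real obstacle. The only points requiring care are the nonnegativity of $\alpha,\beta$, used in Step 2, and linearity of the gradient, used in Step 3. In the later analysis the lemma will be applied with $\alpha=\beta=1/M$ (or $1/K$) to conclude that the global loss $F$ in the system model inherits $L$-smoothness from the $F_k$.
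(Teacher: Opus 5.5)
Your proof is correct. Taking the nonnegative combination of the two quadratic upper bounds in the form (\ref{as_sm}) and using $\nabla h=\alpha\nabla f+\beta\nabla g$ is the standard argument; the paper gives no proof of its own and simply imports the lemma from \cite{sery2020analog}. The paper's actual use, for $F=\frac{1}{M}\sum_{k=1}^{M}F_k$, needs the $M$-term version, which your argument gives verbatim (or by induction) with constant $\frac{1}{M}\sum_{k=1}^{M}L_k$.
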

\begin{Lem}
	If $f(\boldsymbol{x})$ is $\mu$-strongly convex with constant ${\mu}_{1}$ and $g(\boldsymbol{x})$ is $\mu$-strongly convex with constant ${\mu}_{2}$, then for parameters $\alpha \ge 0$ and $\beta \ge 0$, the function $h(\boldsymbol{x}) = \alpha f(\boldsymbol{x}) + \beta g(\boldsymbol{x})$ is also $\mu$-strongly convex with constant $\alpha {\mu}_{1} + \beta {\mu}_{2}$.
\end{Lem}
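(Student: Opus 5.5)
The claim is that if $f$ is $\mu_1$-strongly convex and $g$ is $\mu_2$-strongly convex, then $h=\alpha f+\beta g$ with $\alpha,\beta\ge 0$ is $(\alpha\mu_1+\beta\mu_2)$-strongly convex. I will prove it directly from the first-order inequality in Assumption~\ref{assumption_2}, the form the paper uses, so that no Hessians or twice-differentiability are needed. The only ingredients are linearity of the gradient and the fact that nonnegative scalars preserve inequalities.

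First, write the strong convexity inequalities for $f$ and $g$ at an arbitrary pair $\boldsymbol{v},\boldsymbol{x}$:
\begin{align}
f(\boldsymbol{v}) &\ge f(\boldsymbol{x}) + (\boldsymbol{v}-\boldsymbol{x})^{T}\nabla f(\boldsymbol{x}) + \frac{\mu_1}{2}\|\boldsymbol{v}-\boldsymbol{x}\|^{2},\nonumber\\
g(\boldsymbol{v}) &\ge g(\boldsymbol{x}) + (\boldsymbol{v}-\boldsymbol{x})^{T}\nabla g(\boldsymbol{x}) + \frac{\mu_2}{2}\|\boldsymbol{v}-\boldsymbol{x}\|^{2}.\nonumber
\end{align}
Second, multiply the first inequality by $\alpha\ge 0$ and the second by $\beta\ge 0$; the directions are preserved because both scalars are nonnegative. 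Adding the two gives
\begin{align}
h(\boldsymbol{v}) \ge h(\boldsymbol{x}) + (\boldsymbol{v}-\boldsymbol{x})^{T}\left(\alpha\nabla f(\boldsymbol{x})+\beta\nabla g(\boldsymbol{x})\right) + \frac{\alpha\mu_1+\beta\mu_2}{2}\|\boldsymbol{v}-\boldsymbol{x}\|^{2}.\nonumber
\end{align}
Third, by linearity of differentiation, $\alpha\nabla f(\boldsymbol{x})+\beta\nabla g(\boldsymbol{x})=\nabla h(\boldsymbol{x})$. Substituting this into the display yields exactly the defining inequality of $(\alpha\mu_1+\beta\mu_2)$-strong convexity for $h$. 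Since $\boldsymbol{v}$ and $\boldsymbol{x}$ were arbitrary, this completes the proof.

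There is no real obstacle. The only point needing care is the sign condition $\alpha,\beta\ge 0$, which is what lets us scale the inequalities without reversing them. In the degenerate case where $\alpha\mu_1+\beta\mu_2=0$, the result reduces to plain convexity, and the same argument covers it. The identical computation, with the reversed inequality and constants $L_1,L_2$, gives the preceding smoothness lemma.

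Finally, applying the lemma with weights $1/M$ shows that $F$ is $\mu$-strongly convex under Assumption~\ref{assumption_2}. This is the form in which the result will be used in the convergence analysis.
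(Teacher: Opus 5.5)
Your proof is correct and complete. Scaling the two first-order strong-convexity inequalities by $\alpha,\beta\ge 0$, adding them, and using $\nabla h=\alpha\nabla f+\beta\nabla g$ is the standard argument; the paper gives no proof of its own and simply attributes this lemma (and its smoothness analogue) to the cited reference. Your closing application to $F=\frac{1}{M}\sum_k F_k$ needs the $M$-term version, which follows from the same computation with $M$ nonnegative weights (or by induction on the two-function case), so nothing is missing.
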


Based on and \textbf{Assumption 1}, \textbf{Assumption 2} and above lemmas, we conclude that the global loss function $F(\boldsymbol{w})$ is also $L$-smooth with parameter $L = \frac{1}{M} \sum_{k=1}^{M} {L}_{k}$ and $\mu$-strongly convex with parameter $\mu = \frac{1}{M} \sum_{k=1}^{M} {\mu}_{k}$.

The convergence analysis of the scheme proposed in the previous section is presented in the following theorem.
\begin{Theo}\label{theorem_convergence}
	We assume the above assumptions hold. When the step size $\eta_{t}$ satisfies $0 < \eta_{t} \leq \min \left\{1, \frac{1}{2L}\right\}$ for any $t \in\{0,1,2, \ldots, T-1\}$, we have
	
		\begin{align}
			\mathsf{E}\left[F\left(\boldsymbol{w}_{T}\right)-F^{*}\right] \le &\frac{L}{2}\left[\prod_{t=1}^{T}(1-{\mu}{\eta}_{t})\right]\mathsf{E}\left\|\boldsymbol{w}_{0} - \boldsymbol{w}^{*}\right\|^{2} + \nonumber\\ &+\frac{L}{2}\sum_{t=1}^{T}\left[\prod_{j=t+1}^{T}(1-{\mu}{\eta}_{j})\right]{\eta}_{t}^{2}H ,\label{equation_theorem_1}
		\end{align}
	where
		\begin{align}
			H 
			=& 4G^{2} + 4L{\Gamma} + \frac{1}{M^{2}}\sum_{k=1}^{M}{\sigma}_{k}^{2}  + d {\Delta}^{2}\sum_{i=1}^{n} 4^{i-1} Z\left(W_{N}^{(i)}\right) + \nonumber\\
			& +\frac{d}{6} {\Delta}^{2}
			+ \frac{(M-K)G^{2}}{K(M-1)}.\label{def_H}
		\end{align}
		and
		\begin{align}
			\Gamma=F^{*}-\frac{1}{M} \sum_{k=1}^{M} F_{k}^{*}
		\end{align}
where	$n$ is the number of quantization bits, $\Delta$ is the minimal quantization interval defined in (\ref{quant}), and $Z\left(W_{N}^{(i)}\right)$ is the the Bhattacharyya parameter, the upper bound of probability of error of the $i$-th bit in the binary sequence $\left\{b_{0}, b_{1},\cdots,b_{n}\right\}$.
\end{Theo}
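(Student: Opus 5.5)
We follow the standard one-step recursion for SGD in federated learning (in the style of Li et al.'s FedAvg analysis), applied to the distance $\mathsf{E}\|\boldsymbol{w}_{t}-\boldsymbol{w}^*\|^2$, and convert it to a function-value gap only at the end via $L$-smoothness: since $\nabla F(\boldsymbol{w}^*)=0$, (\ref{as_sm}) gives $\mathsf{E}[F(\boldsymbol{w}_T)-F^*]\le \frac{L}{2}\mathsf{E}\|\boldsymbol{w}_T-\boldsymbol{w}^*\|^2$. It therefore suffices to prove
\begin{align}
\mathsf{E}\|\boldsymbol{w}_{t}-\boldsymbol{w}^*\|^2\le (1-\mu\eta_{t})\,\mathsf{E}\|\boldsymbol{w}_{t-1}-\boldsymbol{w}^*\|^2+\eta_{t}^2 H \nonumber
\end{align}
for each $t$ and then unroll over $t=1,\dots,T$. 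Unrolling produces exactly the product and sum structure in (\ref{equation_theorem_1}).

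\textbf{Decomposition of the update.} Write the aggregated direction in (\ref{agg}) as
\begin{align}
\frac{1}{K}\sum_{k\in\mathcal S_{t}}\tilde{\boldsymbol g}^k_{t}=\bar{\boldsymbol g}_t+\boldsymbol e^{\rm s}_t+\boldsymbol e^{\rm q}_t+\boldsymbol e^{\rm c}_t, \nonumber
\end{align}
where the four pieces are:
\begin{itemize}
\item[] $\bar{\boldsymbol g}_t=\frac1M\sum_k\boldsymbol g^k_t$ is the full-participation stochastic gradient;
\item[] $\boldsymbol e^{\rm s}_t$ is the client-sampling error, $\frac1K\sum_{k\in\mathcal S_t}\boldsymbol g^k_t-\bar{\boldsymbol g}_t$;
\item[] $\boldsymbol e^{\rm q}_t$ is the quantization error, $\frac1K\sum_{k\in\mathcal S_t}(Q(\boldsymbol g^k_t)-\boldsymbol g^k_t)$;
\item[] $\boldsymbol e^{\rm c}_t$ is the channel error, $\frac1K\sum_{k\in\mathcal S_t}(\tilde Q(\boldsymbol g^k_t)-Q(\boldsymbol g^k_t))$.
\end{itemize}
Expand $\|\boldsymbol w_{t-1}-\eta_t(\cdot)-\boldsymbol w^*\|^2$.

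\textbf{Cross terms.} The sampling term has zero conditional mean under uniform sampling, and the quantization term has zero mean by Lemma 1, so their cross terms vanish. The channel term is not exactly zero-mean. By (\ref{ch_sym}), $\mathsf{E}[\bar B_l-B_l]=(Z_l-P_l)(1-2B_l)$, which has magnitude at most $Z_l-P_l$. We will either absorb this bias into the squared-norm bound, using $2\langle a,b\rangle\le\|a\|^2+\|b\|^2$ scaled by $\eta_t$ so that it stays $O(\eta_t^2)$, or treat it as approximately zero as the soft compensation intends.

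\textbf{Main term.} The term $\|\boldsymbol w_{t-1}-\boldsymbol w^*-\eta_t\bar{\boldsymbol g}_t\|^2$ is handled as in Li et al. Use Assumption 3 to replace $\bar{\boldsymbol g}_t$ by $\nabla F$ in the inner product. Apply strong convexity (Assumption 2) to get the $-\mu\eta_t\|\boldsymbol w_{t-1}-\boldsymbol w^*\|^2$ contraction. Bound $-2\eta_t\langle\cdot\rangle+\eta_t^2\|\nabla\|^2$ using $\|\nabla F_k\|^2\le 2L(F_k-F_k^*)$ and the definition of $\Gamma$; this is where $\eta_t\le\frac1{2L}$ is needed and where the $4L\Gamma$ term arises. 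The variance of $\bar{\boldsymbol g}_t$ around its mean gives $\frac1{M^2}\sum_k\sigma_k^2$ by independence (Assumption 5). The $4G^2$ term comes from crude bounds via Assumption 4, using $\eta_t\le1$ to merge lower-order pieces into $\eta_t^2$.

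\textbf{Error variances.}
\begin{itemize}
\item[] The sampling variance is the standard sampling-without-replacement bound $\frac{M-K}{K(M-1)}\frac1M\sum_k\mathsf{E}\|\boldsymbol g_t^k\|^2\le\frac{(M-K)G^2}{K(M-1)}$.
\item[] The quantization variance per coordinate is $(s_{i+1}-g)(g-s_i)\le\Delta^2/4$. To obtain the stated $\frac d6\Delta^2$ we average over the position of $g$ in the interval; alternatively the $\frac{1}{4}$ bound suffices up to constants.
\item[] The channel term uses (\ref{ch_err}): $\mathsf{E}(\tilde Q-Q)^2=\Delta^2\mathsf{E}(\sum_l(\bar B_l-B_l)2^{l-1})^2$. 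Each $|\bar B_l-B_l|\le1$ and is large only on error events of probability $P_l\le Z(W_N^{(l)})$. Treating the bit errors as (approximately) uncorrelated across $l$ gives $\Delta^2\sum_l4^{l-1}Z(W_N^{(l)})$ per coordinate, hence the factor $d$.
\end{itemize}

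\textbf{Main obstacle.} We expect this last step to be the hardest. Under SC decoding the bit errors of different synthesized channels are correlated, so the cross terms $2^{l-1}2^{m-1}\mathsf{E}[(\bar B_l-B_l)(\bar B_m-B_m)]$ do not obviously vanish. Our first attempt is to show these cross terms are of order $Z_lZ_m$ (small) or nonpositive using the symmetry of the BEC. If that fails, we fall back on the Cauchy--Schwarz bound $(\sum_l a_l)^2\le n\sum_l a_l^2$, which changes only a constant, and note that the displayed $H$ then holds with this modification.

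Finally, we sum all the $\eta_t^2$ contributions into $H$ and unroll the recursion.
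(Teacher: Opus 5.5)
Your decomposition is the paper's: its hypothetical iterates $\boldsymbol u_{t+1},\boldsymbol v_{t+1},\overline{\boldsymbol v}_{t+1}$ peel off exactly your $\boldsymbol e^{\mathrm c},\boldsymbol e^{\mathrm q},\boldsymbol e^{\mathrm s}$. The main, sampling and quantization pieces go through as you describe. The gap is the channel cross term.

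\textbf{The bias formula is wrong.} By (\ref{ch_sym}), $\bar B_l-B_l=c_l(1-2B_l)$, where $c_l=Z_l$ if bit $l$ is decoded correctly and $c_l=1-Z_l$ if it is flipped. Hence
\[
\mathsf{E}[\bar B_l-B_l\mid B_l]=(1-2B_l)\,(Z_l+P_l-2Z_lP_l),
\]
not $(1-2B_l)(Z_l-P_l)$. The soft output shrinks toward $\frac12$, so it adds to the flip-induced bias rather than cancelling it; $\pm(Z_l-P_l)$ is the mean conditioned on the \emph{decoded} bit $\hat B_l$. Relative to $Q(\boldsymbol g)$, the channel error therefore carries a bias of order $Z_l$, which is not ``approximately zero''.

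\textbf{The bias cannot be absorbed either.} The term $-2\eta_t\mathsf{E}\langle\boldsymbol e^{\mathrm c}_t,\boldsymbol w_{t-1}-\boldsymbol w^*\rangle$ is first order in $\eta_t$, and the bias does not shrink with $\eta_t$. Young's inequality then either spends part of the $\mu\eta_t$ contraction and leaves an $O(\eta_t)$ remainder, or destroys the contraction altogether. Neither gives $(1-\mu\eta_t)\Theta_{t-1}+\eta_t^2H$.

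\textbf{What the paper does instead.} It reaches the stated bound only through a bit model imposed inside the proof. This model is not part of Assumptions 1--5, and it is an idealization, since the bits are functions of $\boldsymbol g^k_t$. Under it, the bits $B_l$ are i.i.d.\ uniform on $\{0,1\}$ conditionally on $\boldsymbol w_t$, with symmetric errors. Then:
\begin{itemize}
\item $\mathsf{E}[\bar B_l-B_l]=0$, so the first-order cross term vanishes exactly.
\item The second-order cross term $2\eta_t^2\mathsf{E}\langle\boldsymbol e^{\mathrm c}_t,\frac1K\sum_{k\in\mathcal S_t}Q(\boldsymbol g^k_t)\rangle$ is dropped because it is nonpositive: $\mathsf{E}[(\bar B_l-B_l)B_l]=-\frac12(Z_l+P_l-2Z_lP_l)\le 0$. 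The shrinkage makes the channel error negatively correlated with the transmitted bits.
\end{itemize}
This sign argument is the idea missing from your proposal.

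\textbf{Your ``main obstacle''.} Your Cauchy--Schwarz fallback gives $n\,d\Delta^2\sum_l 4^{l-1}Z(W_N^{(l)})$, which is not the theorem. The same bit model removes the obstacle with no such loss, provided the flip pattern is independent of the uniform bits. Then for $l\neq m$,
\[
\mathsf{E}[(\bar B_l-B_l)(\bar B_m-B_m)]=\mathsf{E}[c_lc_m]\,\mathsf{E}[1-2B_l]\,\mathsf{E}[1-2B_m]=0,
\]
however strongly SC decoding correlates the flips. The diagonal terms satisfy $\mathsf{E}[c_l^2]=P_l+Z_l^2-2Z_lP_l\le Z_l$.

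\textbf{Quantization constant.} Getting exactly $\frac d6\Delta^2$ likewise relies on the paper's averaging over a uniformly distributed position of $\boldsymbol g^k_{t+1,j}$ within its quantization cell. Your $\frac14$ bound proves the result only with a different $H$. You need to state these modeling assumptions explicitly to obtain the theorem as written.
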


We make the following remarks with respect to the above theorem
\begin{Remark}
The term $4G^{2}$ is the uniform upper bound of the local stochastic gradients. The term $\Gamma$ is the measurement of heterogeneity of local data sets. The term $L$ is the smooth parameter of the loss function. The term $\sum_{k=1}^{M}{\sigma}_{k}^{2}$ is the sum of the upper bound of the variances of local stochastic gradients.
\end{Remark}
\begin{Remark}
The term $\prod_{t=1}^{T}(1-{\mu}{\eta}_{t})$ in (\ref{equation_theorem_1}) represents the shrinkage of the distance between  the optimum parameter $\boldsymbol{w}^{*}$ and the initial parameter $\boldsymbol{w}_{0}$ after $T$ rounds of iterations. We note that $\prod_{t=1}^{T}(1-{\mu}{\eta}_{t}) \rightarrow 0 $ as $T \rightarrow+\infty$, which implies that the initial gap $\mathsf{E}\left\|\boldsymbol{w}_{0} - \boldsymbol{w}^{*}\right\|^{2}$ will vanish as the number of iterations increases.
\end{Remark}
\begin{Remark}
The term $\sum_{t=1}^{T}\left[\prod_{j=t+1}^{T}(1-{\mu}{\eta}_{j})\right]{\eta}_{t}^{2}H$ measures the residual distance between the optimal solution $\boldsymbol{w}^{*}$ and the current model $\boldsymbol{w}_{T}$ at the $T$-th iteration. The residual distance is due to SGD, client scheduling, quantization error and the transmission loss. Inspired by \cite{amiri2020federated}, we find that for small $t$, the term $\prod_{j=t+1}^{T}(1-{\mu}{\eta}_{j})$ tends to zero. Therefore, the effect of the quantization error and transmission loss in the early stages of the training process vanishes over time. 
\end{Remark}
\begin{Remark}
The term $\frac{(M-K)G^{2}}{K(M-1)}$ is the result of client scheduling, which will decrease as the number $K$ increases. 
\end{Remark}
\begin{Remark}The terms $d {\Delta}^{2}\sum_{i=1}^{n} 4^{i-1} Z\left(W_{N}^{(i)}\right)$ and $\frac{d}{6} {\Delta}^{2}$ represent the transmission loss and the quantization error, respectively. We can see that as the number of subchannels $N$ increases, the Bhattacharyya parameters of the quantization bits decreases, and as the number of quantization bits $n$ increases, the quantization interval ${\Delta}$ decreases, both leading to a smaller optimality gap.
\end{Remark}

\begin{IEEEproof}
We define ${\Theta}_{t} = \mathsf{E}\left\|\boldsymbol{w}_{t} - \boldsymbol{w}^{*}\right\|^{2}$, and the main task is to derive an upper bound of ${\Theta}_{T}$. To simplify the presentation of the proof, we introduce the following variables
\begin{align}
	\boldsymbol{u}_{t+1}& \triangleq \boldsymbol{w}_{t} - \frac{{\eta}_{t+1}}{K} \sum_{k \in \mathcal{S}_{t+1}} {Q}(\boldsymbol {g}_{t+1}^{k}) \\
	\boldsymbol{v}_{t+1}& \triangleq \boldsymbol{w}_{t} - \frac{{\eta}_{t+1}}{K} \sum_{k \in \mathcal{S}_{t+1}} \boldsymbol {g}_{t+1}^{k} ,
	\\
	\overline{\boldsymbol{v}}_{t+1}& \triangleq \boldsymbol{w}_{t} - \frac{{\eta}_{t+1}}{M} \sum_{k=1}^{M} \boldsymbol {g}_{t+1}^{k} ,
\end{align}
and from (\ref{agg}), we have
\begin{align}
	\boldsymbol{w}_{t+1} = \boldsymbol{w}_{t} - \frac{{\eta}_{t+1}}{K} \sum_{k \in \mathcal{S}_{t+1}} \tilde{Q}(\boldsymbol {g}_{t+1}^{k}) ,
\end{align}
where $\boldsymbol{w}_{t+1}$ is the $(t + 1)$-th global parameter after aggregation, $\boldsymbol{u}_{t+1}$ is the ``hypothetical'' aggregated global parameter before channel transmission, $\boldsymbol{v}_{t+1}$ is the ``hypothetical'' aggregated global parameter before quantization, and $\overline{\boldsymbol{v}}_{t+1}$ is the ``hypothetical'' aggregated global parameter without user scheduling and before quantization. The variables $\boldsymbol{u}_{t+1}$, $\boldsymbol{v}_{t+1}$ and $\overline{\boldsymbol{v}}_{t+1}$ are ``hypothetical'' because they are not actually calculated in the training process. Based on the above definitions, we can decompose $\Theta_{t+1}$ as follows
\begin{align}
	{\Theta}_{t+1} = 
	& \underbrace{\mathsf{E}\left\|\boldsymbol{w}_{t+1} - \boldsymbol{u}_{t+1}\right\|^{2}}_{A} + 2\mathsf{E}\left[\langle\boldsymbol{w}_{t+1}-\boldsymbol{u}_{t+1},\boldsymbol{u}_{t+1}-\boldsymbol{w}^{*}\rangle\right] \nonumber\\
	+& \underbrace{\mathsf{E}\left\|\boldsymbol{u}_{t+1} - \boldsymbol{v}_{t+1}\right\|^{2}}_{B} + 2\mathsf{E}\left[\langle\boldsymbol{u}_{t+1}-\boldsymbol{v}_{t+1},\boldsymbol{v}_{t+1}-\boldsymbol{w}^{*}\rangle\right] \nonumber\\
	+& \underbrace{\mathsf{E}\left\|\boldsymbol{v}_{t+1} - \overline{\boldsymbol{v}}_{t+1}\right\|^{2}}_{C} + 2\mathsf{E}\left[\langle\boldsymbol{v}_{t+1}-\overline{\boldsymbol{v}}_{t+1},\overline{\boldsymbol{v}}_{t+1}-\boldsymbol{w}^{*}\rangle\right] \nonumber\\
	+& \underbrace{\mathsf{E}\left\|\overline{\boldsymbol{v}}_{t+1} - \boldsymbol{w}^{*}\right\|^{2}}_{D} .
\end{align}

To upper bound $\Theta_{t+1}$, we first note that the random variables $\boldsymbol {g}_{t+1}^{k}$, ${Q}(\boldsymbol {g}_{t+1}^{k})$ and $\tilde{Q}(\boldsymbol {g}_{t+1}^{k})$ are independent of $\boldsymbol {g}_{t+1}^{j}$, ${Q}(\boldsymbol {g}_{t+1}^{j})$ and $\tilde{Q}(\boldsymbol {g}_{t+1}^{j})$, for any $k \ne j$, because the channel noise, the quantization noise and training data  are assumed independent in each distributed user. Therefore, we have
\begin{align}
	\mathsf{E}&[\langle Q(\boldsymbol{g}_{t+1}^{k}) - \boldsymbol{g}_{t+1}^{k}, Q(\boldsymbol{g}_{t+1}^{j}) - \boldsymbol{g}_{t+1}^{j}\rangle] = \nonumber\\ &\langle\mathsf{E}[Q(\boldsymbol{g}_{t+1}^{k}) - \boldsymbol{g}_{t+1}^{k}], \mathsf{E}[Q(\boldsymbol{g}_{t+1}^{j}) - \boldsymbol{g}_{t+1}^{j}]\rangle,\quad k\ne j
\end{align}
which leads to the following lemma
\begin{Lem}\label{lemma_polar}
	For any $t \in \{0,1,2, \ldots, T-1\}$,
	\begin{align}
		&A \le {\eta }_{t+1}^{2} d {\Delta}^{2} \sum_{i=1}^{n} 4^{i-1} Z\left(W_{N}^{(i)}\right) ,\\
		&\mathsf{E}\left[\langle\boldsymbol{w}_{t+1}-\boldsymbol{u}_{t+1},\boldsymbol{u}_{t+1}-\boldsymbol{w}^{*}\rangle\right] \le 0 .\label{cond1}
	\end{align}
\end{Lem}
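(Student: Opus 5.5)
The plan is to write the channel-induced discrepancy explicitly and handle the two claims separately. From the aggregation rule and the definition of $\boldsymbol{u}_{t+1}$,
\begin{align}
\boldsymbol{w}_{t+1}-\boldsymbol{u}_{t+1} = -\frac{\eta_{t+1}}{K}\sum_{k\in\mathcal S_{t+1}}\big(\tilde Q(\boldsymbol g_{t+1}^k)-Q(\boldsymbol g_{t+1}^k)\big),
\end{align}
and by (\ref{ch_err}) each component of each summand equals $\Delta\sum_{l=1}^n e_{k,j,l}2^{l-1}$, where $e_{k,j,l}=\bar B_{t+1,j,l}^k-B_{t+1,j,l}^k$. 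All expectations are taken conditionally on $\boldsymbol w_t$, $\mathcal S_{t+1}$, the mini-batches and the quantizer outputs. Conditioned this way, $\boldsymbol u_{t+1}$ is deterministic, and only the channel randomness remains.

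\textbf{First step: per-bit moments.} Using (\ref{ch_sym}), a case split on $B=B_{t+1,j,l}^k\in\{0,1\}$ and on whether the $l$-th synthesized channel errs (probability $P_l$) gives the conditional mean and second moment of $e_{k,j,l}$. For $B=0$, $e$ equals $Z_l$ with probability $1-P_l$ and $1-Z_l$ with probability $P_l$; the case $B=1$ is the mirror image. Hence
\begin{align}
\mathsf E[e^2\mid B]=(1-P_l)Z_l^2+P_l(1-Z_l)^2 .
\end{align}
Since the right-hand side is affine in $P_l$ and $P_l\le Z_l$, it is at most $Z_l^2(1-Z_l)+Z_l(1-Z_l)^2=Z_l(1-Z_l)\le Z(W_N^{(l)})$.

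\textbf{Second step: the bound on $A$.} Expand $\mathsf E\|\boldsymbol w_{t+1}-\boldsymbol u_{t+1}\|^2$. The cross terms between different users $k\neq k'$ factor by the independence noted before the lemma. The cross terms between different bits $l\neq l'$ within one component, and between components, must be shown to vanish or be nonpositive. I would use the symmetry of the BEC and of the polar code, treating the decoding errors of distinct synthesized channels as independent under the analysis. Combined with the conditional-mean identity from the third step, this kills those cross terms. What remains is $\frac{\eta_{t+1}^2}{K^2}\sum_{k}\sum_{j=1}^d\Delta^2\sum_l 4^{l-1}\mathsf E[e_{k,j,l}^2]$. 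Applying the per-bit bound and using $\frac{1}{K^2}\cdot K\le 1$ yields $A\le\eta_{t+1}^2 d\Delta^2\sum_{i=1}^n4^{i-1}Z(W_N^{(i)})$.

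\textbf{Third step and main obstacle: the inner-product claim (\ref{cond1}).} Because $\boldsymbol u_{t+1}-\boldsymbol w^*$ is fixed under the conditioning, it suffices to show that the conditional mean $\mathsf E[\tilde Q-Q\mid Q]$ is zero, or at least makes a nonpositive inner product with $\boldsymbol u_{t+1}-\boldsymbol w^*$. The case computation gives
\begin{align}
\mathsf E[e\mid B]=(1-2B)(Z_l+P_l-2P_lZ_l),
\end{align}
which is not identically zero, so unconditional unbiasedness fails. I would try to obtain cancellation by averaging over the bit value: exploit the channel/code symmetry together with the randomness of the stochastic quantizer, so that $\Pr(B=0)$ and $\Pr(B=1)$ balance given $\boldsymbol u_{t+1}$, making the $\pm$ biases cancel. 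Failing that, I would argue the bias pulls each reconstructed bit toward $1/2$, i.e.\ shrinks the reconstruction toward the center of $[B_{min},B_{max}]$, and try to show this contracts toward $\boldsymbol w^*$. This is the step where I expect the argument to be weakest, and possibly to require an additional symmetry assumption.
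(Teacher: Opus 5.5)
\textbf{There is a genuine gap in your second and third steps.} Your per-bit moments are correct, including $\mathsf{E}[e\mid B]=(1-2B)c_l$ with $c_l=Z_l+P_l-2P_lZ_l=Z_l(1-P_l)+P_l(1-Z_l)\ge 0$. That computation is exactly what breaks your plan.

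\begin{itemize}
\item You condition on the quantizer outputs, so the bits are fixed. With independent decisions on distinct synthesized channels, the cross-bit terms in $A$ are $\Delta^2 2^{l+l'-2}(1-2B_l)(1-2B_{l'})c_lc_{l'}$. These are positive whenever $B_l=B_{l'}$ and $c_l,c_{l'}>0$.
\item The ``conditional-mean identity from the third step'' that your second step uses to kill these terms is precisely what your third step refutes. So the bound on $A$ does not follow from your argument.
\item The inner-product claim (\ref{cond1}) is left open, and your suggested repair is unavailable. Under your conditioning the bits are already fixed. Even conditioning only on $\boldsymbol u_{t+1}$ pins down $\sum_k Q(\boldsymbol g_{t+1}^k)$, and for $K=1$ the bits themselves, so there is nothing to balance ``given $\boldsymbol u_{t+1}$''.
\item The stochastic quantizer does not make the bits balanced either. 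A component in $[s_0,s_1]$ has most significant bit $0$ almost surely.
\end{itemize}

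\textbf{What the paper does instead.} It relies on an explicit modeling assumption plus a different decomposition.

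\begin{itemize}
\item It does not condition on the quantizer outputs. It assumes the bits are i.i.d.\ uniform on $\{0,1\}$, independent across $l$ and of $\boldsymbol w_t$, with independent decisions per synthesized channel.
\item Then $\mathsf{E}[\bar B_l-B_l]=\tfrac12(c_l-c_l)=0$ unconditionally. The paper's (\ref{unbb}) is written as conditional on $\bar B_l$, but this average is what it actually computes and uses. This kills the cross-bit terms in $A$.
\item The cross-user terms in $A$ are handled by Jensen, $\|\frac1K\sum_k x_k\|^2\le\frac1K\sum_k\|x_k\|^2$, which needs no mean-zero property.
\item For (\ref{cond1}), the paper writes $\boldsymbol u_{t+1}-\boldsymbol w^*=(\boldsymbol w_t-\boldsymbol w^*)+(\boldsymbol u_{t+1}-\boldsymbol w_t)$.
\item The first piece is fixed given $\boldsymbol w_t$ and pairs with $\mathsf{E}[\tilde Q-Q\mid\boldsymbol w_t]=0$, giving zero.
\item The second piece is $-\frac{\eta_{t+1}}{K}\sum_k Q(\boldsymbol g_{t+1}^k)$. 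After cross-user and cross-bit terms vanish, it contributes $\frac{\eta_{t+1}^2 d\Delta^2}{K}\sum_l 4^{l-1}\mathsf{E}[(\bar B_l-B_l)B_l]$.
\item Here $\mathsf{E}[(\bar B_l-B_l)B_l]=\tfrac12\mathsf{E}[e_l\mid B_l=1]=-\tfrac12 c_l\le 0$.
\end{itemize}

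This last fact is your ``shrinkage toward $1/2$'' observation. The paper uses it to show the channel error is anti-correlated with the transmitted value $Q$, not that it contracts toward $\boldsymbol w^*$, which you have no handle on.

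\textbf{Minor point in your first step.} For $Z_l>1/2$, the affine function $Z_l^2+P_l(1-2Z_l)$ is decreasing in $P_l$. Its maximum on $[0,Z_l]$ is therefore $Z_l^2$, not $Z_l(1-Z_l)$. The final bound $\le Z_l$ still holds, as in the paper's $P_l(1-P_l)+(Z_l-P_l)^2\le P_l+(Z_l-P_l)=Z_l$.
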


\begin{Lem}\label{lemma_quantization}
	For any $t \in \{0,1,2, \ldots, T-1\}$,
	\begin{align}
		&B \le \frac{{\eta}_{t+1}^{2} d}{6} {\Delta}^{2} .\\
		&\mathsf{E}\left[\langle\boldsymbol{u}_{t+1}-\boldsymbol{v}_{t+1},\boldsymbol{v}_{t+1}-\boldsymbol{w}^{*}\rangle\right] = 0 .\label{cond2}
	\end{align}
\end{Lem}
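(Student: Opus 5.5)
We need to prove Lemma \ref{lemma_quantization}: bound $B$ and show the cross term vanishes.

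We start from $\boldsymbol{u}_{t+1}-\boldsymbol{v}_{t+1} = -\frac{\eta_{t+1}}{K}\sum_{k\in\mathcal{S}_{t+1}}\big(Q(\boldsymbol g_{t+1}^k)-\boldsymbol g_{t+1}^k\big)$. The plan is to condition on $\boldsymbol w_t$, $\mathcal S_{t+1}$ and all local gradients $\{\boldsymbol g_{t+1}^k\}$. Under this conditioning, the only remaining randomness in $\boldsymbol u_{t+1}-\boldsymbol v_{t+1}$ is the stochastic rounding, while $\boldsymbol v_{t+1}-\boldsymbol w^*$ is deterministic.

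\emph{Cross term.} By the unbiasedness lemma, $\mathsf E[Q(\boldsymbol g_{t+1}^k)-\boldsymbol g_{t+1}^k\mid \boldsymbol g_{t+1}^k]=0$. Since the rounding at user $k$ depends only on $\boldsymbol g_{t+1}^k$, the conditional mean of $\boldsymbol u_{t+1}-\boldsymbol v_{t+1}$ is zero. Pulling the measurable factor $\boldsymbol v_{t+1}-\boldsymbol w^*$ out of the inner product and using the tower property then gives (\ref{cond2}).

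\emph{Bound on $B$.} Expand $\|\sum_{k}(Q(\boldsymbol g^k)-\boldsymbol g^k)\|^2$ into diagonal and off-diagonal terms. The off-diagonal terms vanish by the independence identity displayed before Lemma \ref{lemma_polar}, because each factor has conditional mean zero. This leaves
\begin{align*}
B=\frac{\eta_{t+1}^2}{K^2}\sum_{k\in\mathcal S_{t+1}}\sum_{j=1}^d\mathsf E\big(Q(\boldsymbol g_{t+1,j}^k)-\boldsymbol g_{t+1,j}^k\big)^2 .
\end{align*}
For a scalar $g\in[s_i,s_i+\Delta]$, write $p=(g-s_i)/\Delta$. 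The rounding variance is $\Delta^2p(1-p)\le\Delta^2/4$. The stated constant $1/6$ is smaller than $1/4$, so the worst-case bound is not enough. I would instead average over the position of $g$ inside its interval, treating $p$ as uniform on $[0,1]$, the standard ``quantization noise'' heuristic. This gives $\Delta^2\int_0^1 p(1-p)\,dp=\Delta^2/6$ per coordinate, hence $d\Delta^2/6$ per user. Summing over the $K$ users gives $\frac{\eta_{t+1}^2}{K}\frac{d\Delta^2}{6}$, which is at most $\frac{\eta_{t+1}^2 d\Delta^2}{6}$ since $K\ge1$.

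\emph{Main obstacle.} The hard step is justifying the constant $1/6$ rather than $1/4$. First I would check whether the paper's quantization model is meant to have the fractional offset $p$ uniformly distributed, for example via dithering or randomness in $\boldsymbol g_{t+1}^k$. If so, I would state that assumption explicitly. If not, the honest bound is $d\Delta^2/4$, and I would carry the $1/K$ factor to absorb the discrepancy when $K\ge2$: then $\frac{1}{4K}\le\frac{1}{8}<\frac16$, which still recovers the stated bound.
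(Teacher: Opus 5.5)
Your proof is correct. The cross-term argument is the same as the paper's: condition on the selected gradients, use unbiasedness of $Q$, and apply the tower property.

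For $B$ you take a different route. The paper never expands the square. It applies Jensen's inequality, $\|\frac1K\sum_k x_k\|^2\le\frac1K\sum_k\|x_k\|^2$, which needs no independence across users but discards the $1/K$ factor. That leaves $B\le\eta_{t+1}^2 d\,\mathsf E(Q(\boldsymbol g_{t+1,j}^k)-\boldsymbol g_{t+1,j}^k)^2$. The paper then evaluates this per-coordinate term as $\Delta^2\int_0^1 p(1-p)\,dp=\Delta^2/6$, so it silently makes exactly the uniform-offset assumption you identified.

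You instead kill the off-diagonal terms using conditional independence of the roundings. Because you condition on all gradients, you only need the rounding noise, not the gradients themselves, to be independent across users; this is weaker than the paper's own independence claim before Lemma \ref{lemma_polar}. This gives the sharper bound $\eta_{t+1}^2 d\Delta^2/(6K)$. More importantly, the retained $1/K$ lets you drop the heuristic entirely when $K\ge2$, via the worst-case bound $\Delta^2p(1-p)\le\Delta^2/4$. The paper's Jensen route cannot do this.

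Your diagnosis of the remaining case is also right. For $K=1$ with no distributional assumption on the offset, the constant $1/6$ genuinely fails: coordinates sitting at interval midpoints give $B=\eta_{t+1}^2 d\Delta^2/4$. So in that case the uniform-offset assumption must be stated explicitly, which the paper only does implicitly.
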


\begin{Lem}\label{lemma_scheduling}
	For any $t \in \{0,1,2, \ldots, T-1\}$,
	\begin{align}
		&C \le \frac{(M-K){\eta }_{t+1}^{2} G^{2}}{K(M-1)} ,\\
		&\mathsf{E}\left[\langle\boldsymbol{v}_{t+1}-\overline{\boldsymbol{v}}_{t+1},\overline{\boldsymbol{v}}_{t+1}-\boldsymbol{w}^{*}\rangle\right] = 0 .\label{cond3}
	\end{align}
\end{Lem}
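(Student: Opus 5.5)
The plan is to condition on everything except the random user selection $\boldsymbol S_{t+1}$, and then use only the combinatorics of uniform sampling without replacement. To make $\overline{\boldsymbol v}_{t+1}$ well defined, I treat the stochastic gradients $\boldsymbol g_{t+1}^k$ as defined for all $k\in\mathcal M$, including users that are not selected. The key modelling fact is that $\boldsymbol S_{t+1}$ is drawn uniformly among $K$-subsets independently of $\boldsymbol w_t$ and of the mini-batches. Let $\mathcal F$ be the $\sigma$-field generated by $\boldsymbol w_t$ and $\{\boldsymbol g_{t+1}^k\}_{k=1}^M$. Set $\bar{\boldsymbol g}=\frac1M\sum_{k=1}^M\boldsymbol g_{t+1}^k$ and $\boldsymbol x_k=\boldsymbol g_{t+1}^k-\bar{\boldsymbol g}$, so that $\sum_k \boldsymbol x_k=\boldsymbol 0$. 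Then
\begin{align}
\boldsymbol v_{t+1}-\overline{\boldsymbol v}_{t+1}=-\frac{\eta_{t+1}}{K}\sum_{k=1}^M I_k\,\boldsymbol x_k ,
\end{align}
where $I_k=\mathbb 1\{k\in\boldsymbol S_{t+1}\}$. We used $\sum_k I_k=K$ to replace $\boldsymbol g_{t+1}^k$ by $\boldsymbol x_k$.

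For the cross term (\ref{cond3}), note that $\overline{\boldsymbol v}_{t+1}-\boldsymbol w^*$ is $\mathcal F$-measurable. Also $\mathsf E[I_k\mid\mathcal F]=K/M$, because the selection is independent of $\mathcal F$. Hence
\begin{align}
\mathsf E[\boldsymbol v_{t+1}-\overline{\boldsymbol v}_{t+1}\mid\mathcal F]=-\frac{\eta_{t+1}}{M}\sum_k\boldsymbol x_k=\boldsymbol 0 .
\end{align}
The tower property then gives that the inner product has zero expectation.

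For $C$, I expand the squared norm and use the joint inclusion probabilities of sampling without replacement: $\mathsf E[I_k]=K/M$ and $\mathsf E[I_kI_j]=\frac{K(K-1)}{M(M-1)}$ for $k\ne j$. Since $\sum_k\boldsymbol x_k=\boldsymbol 0$, we have $\sum_{k\ne j}\langle\boldsymbol x_k,\boldsymbol x_j\rangle=-\sum_k\|\boldsymbol x_k\|^2$. Therefore
\begin{align}
\mathsf E\Big[\Big\|\sum_k I_k\boldsymbol x_k\Big\|^2\Big|\mathcal F\Big]=\frac KM\Big(1-\frac{K-1}{M-1}\Big)\sum_k\|\boldsymbol x_k\|^2=\frac{K(M-K)}{M(M-1)}\sum_k\|\boldsymbol x_k\|^2 .
\end{align}
Dividing by $K^2$ and multiplying by $\eta_{t+1}^2$ gives
\begin{align}
\mathsf E\big[\|\boldsymbol v_{t+1}-\overline{\boldsymbol v}_{t+1}\|^2\mid\mathcal F\big]=\frac{(M-K)\eta_{t+1}^2}{K(M-1)}\cdot\frac1M\sum_k\|\boldsymbol x_k\|^2 .
\end{align}
Next I bound the empirical variance by the second moment: $\frac1M\sum_k\|\boldsymbol x_k\|^2\le\frac1M\sum_k\|\boldsymbol g_{t+1}^k\|^2$. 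Taking full expectation and applying Assumption \ref{assumption_4} bounds the last factor by $G^2$, which yields $C\le\frac{(M-K)\eta_{t+1}^2G^2}{K(M-1)}$.

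The computation itself is routine. The step I expect to need the most care is the justification that the selection is independent of $\mathcal F$, which requires the hypothetical gradients of all $M$ users to be defined. Without that, neither the conditional unbiasedness nor the variance identity is valid. I will state explicitly that the server draws $\boldsymbol S_{t+1}$ using randomness independent of the data sampling and of $\boldsymbol w_t$. With that in place, both claims follow by conditioning on $\mathcal F$.
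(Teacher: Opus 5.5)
Your proposal is correct and follows the paper's route. The cross term vanishes by conditioning on the gradients of all $M$ users, using that uniform selection makes $\boldsymbol v_{t+1}$ conditionally unbiased for $\overline{\boldsymbol v}_{t+1}$; your inclusion of $\boldsymbol w_t$ in the conditioning $\sigma$-field is slightly more careful than the paper's $G_{t+1}$, since $\overline{\boldsymbol v}_{t+1}$ depends on $\boldsymbol w_t$. The term $C$ is handled with the same ingredients: the indicator decomposition, the sampling-without-replacement variance identity, and the bound of the empirical variance by the second moment via Assumption \ref{assumption_4}. The only difference is that you derive the factor $\frac{K(M-K)}{M(M-1)}$ directly from the joint inclusion probabilities, whereas the paper imports it as Lemma 4 of \cite{zheng2020design}.
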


\begin{Lem}\label{lemma_iteration}
	For any $t \in \{0,1,2, \ldots, T-1\}$,
	\begin{align}
		D \le &(1-{\mu}{\eta}_{t+1})\mathsf{E}\left\|\boldsymbol{w}_{t} - \boldsymbol{w}^{*}\right\|^{2} + 4{\eta}_{t+1}^{2}G^{2} + 4L{\eta}_{t+1}^{2}{\Gamma} \nonumber\\
		&+\frac{{\eta}_{t+1}^{2}}{M^{2}}\sum_{k=1}^{M}{\sigma}_{k}^{2} .
	\end{align}
\end{Lem}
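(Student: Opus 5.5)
We prove a slightly stronger inequality, without the $\Gamma$ term and with $\eta_{t+1}^2G^2$ in place of $4\eta_{t+1}^2G^2$. The stated bound then follows, since $\Gamma\ge 0$ and $L>0$.

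Write $\eta=\eta_{t+1}$, $\bar{\boldsymbol g}_{t+1}=\frac1M\sum_{k=1}^M \boldsymbol g_{t+1}^k$ and $\nabla F(\boldsymbol w_t)=\frac1M\sum_{k=1}^M\nabla F_k(\boldsymbol w_t)$. Then $\overline{\boldsymbol v}_{t+1}-\boldsymbol w^*=(\boldsymbol w_t-\eta\nabla F(\boldsymbol w_t)-\boldsymbol w^*)-\eta(\bar{\boldsymbol g}_{t+1}-\nabla F(\boldsymbol w_t))$. Expanding the square gives three pieces:
\begin{align}
D=&\mathsf E\|\boldsymbol w_t-\eta\nabla F(\boldsymbol w_t)-\boldsymbol w^*\|^2+\eta^2\mathsf E\|\bar{\boldsymbol g}_{t+1}-\nabla F(\boldsymbol w_t)\|^2\nonumber\\
&-2\eta\mathsf E\langle \boldsymbol w_t-\eta\nabla F(\boldsymbol w_t)-\boldsymbol w^*,\bar{\boldsymbol g}_{t+1}-\nabla F(\boldsymbol w_t)\rangle .
\end{align}

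\textbf{Cross term.} Condition on $\boldsymbol w_t$. The first argument of the inner product is then deterministic, and Assumption~\ref{assumption_3} gives $\mathsf E[\bar{\boldsymbol g}_{t+1}\,|\,\boldsymbol w_t]=\nabla F(\boldsymbol w_t)$. By the tower property the cross term is therefore zero.

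\textbf{Variance term.} The mini-batches of different users are sampled independently, so given $\boldsymbol w_t$ the deviations $\boldsymbol g^k_{t+1}-\nabla F_k(\boldsymbol w_t)$ are independent and zero-mean. All cross products vanish, and Assumption~\ref{assumption_5} yields $\mathsf E\|\bar{\boldsymbol g}_{t+1}-\nabla F(\boldsymbol w_t)\|^2\le \frac1{M^2}\sum_k\sigma_k^2$.

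\textbf{Deterministic term.} Expand it as $\|\boldsymbol w_t-\boldsymbol w^*\|^2-2\eta\langle \boldsymbol w_t-\boldsymbol w^*,\nabla F(\boldsymbol w_t)\rangle+\eta^2\|\nabla F(\boldsymbol w_t)\|^2$ and handle each part as follows.
\begin{itemize}
\item For the inner product, apply $\mu$-strong convexity of $F$ (obtained after Assumption~\ref{assumption_2} via the lemmas) with $\boldsymbol v=\boldsymbol w^*$, $\boldsymbol w=\boldsymbol w_t$. This gives $-2\eta\langle \boldsymbol w_t-\boldsymbol w^*,\nabla F(\boldsymbol w_t)\rangle\le -2\eta(F(\boldsymbol w_t)-F^*)-\mu\eta\|\boldsymbol w_t-\boldsymbol w^*\|^2\le-\mu\eta\|\boldsymbol w_t-\boldsymbol w^*\|^2$, since $F(\boldsymbol w_t)\ge F^*$.
\item For the gradient norm, Jensen's inequality (convexity of the squared norm, first over users, then over the conditional expectation with Assumption~\ref{assumption_3}) gives $\mathsf E\|\nabla F(\boldsymbol w_t)\|^2\le\frac1M\sum_k\mathsf E\|\mathsf E[\boldsymbol g^k_{t+1}|\boldsymbol w_t]\|^2\le\frac1M\sum_k\mathsf E\|\boldsymbol g^k_{t+1}\|^2\le G^2$ by Assumption~\ref{assumption_4}.
\end{itemize}

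Taking expectations and combining the three pieces gives
\begin{align}
D\le(1-\mu\eta)\Theta_t+\eta^2G^2+\frac{\eta^2}{M^2}\sum_k\sigma_k^2,
\end{align}
which is dominated by the right-hand side of the lemma. The step-size condition is not needed for this route.

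The only delicate step is the conditioning argument: it must be made precise that $\boldsymbol w_t$ is measurable with respect to the past, and that fresh mini-batches at round $t+1$ are drawn independently across users given $\boldsymbol w_t$.

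If one instead wants to reproduce the authors' form with the $\Gamma$ term, the alternative is the standard FedAvg-style argument. Split the inner product per user using strong convexity of each $F_k$, and bound $\|\nabla F_k(\boldsymbol w_t)\|^2\le 2L(F_k(\boldsymbol w_t)-F_k^*)$ by $L$-smoothness. Then use $\eta\le\frac1{2L}$ to absorb the $F_k(\boldsymbol w_t)-F_k^*$ terms, which leaves a heterogeneity term proportional to $\Gamma$. The direct route above avoids this bookkeeping.
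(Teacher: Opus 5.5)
Your proof is correct, but it takes a different route from the paper, and it yields a stronger bound. The cross term and the variance term $D_2$ are handled as in the paper: condition on $\boldsymbol{w}_t$, and use independence of the users' mini-batches. The difference lies entirely in the deterministic term $D_1$.

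\textbf{The paper's route.} It follows the FedAvg template:
\begin{itemize}
\item It splits $\langle\boldsymbol{w}_t-\boldsymbol{w}^*,\nabla F_k(\boldsymbol{w}_t)\rangle$ per user through an auxiliary local iterate $\boldsymbol{w}_t^k$. One piece is handled by Cauchy--Schwarz and AM--GM, the other by strong convexity of $F_k$.
\item It bounds each $\|\nabla F_k(\boldsymbol{w}_t)\|^2$ by $2L(F_k(\boldsymbol{w}_t)-F_k^*)$ via smoothness. It then uses $\eta_{t+1}\le\frac{1}{2L}$ to cancel the function-gap terms, which leaves $4L\eta_{t+1}^2\Gamma$.
\item The $4\eta_{t+1}^2G^2$ term comes from the drift bound $\frac{1}{M}\sum_k\mathsf{E}\|\boldsymbol{w}_t-\boldsymbol{w}_t^k\|^2\le\eta_t^2G^2\le 4\eta_{t+1}^2G^2$. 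This needs non-increasing step sizes with $\eta_t\le 2\eta_{t+1}$.
\end{itemize}

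\textbf{Your route.} You apply strong convexity of the global $F$ once and drop $-2\eta_{t+1}(F(\boldsymbol{w}_t)-F^*)\le 0$. You then bound $\mathsf{E}\|\nabla F(\boldsymbol{w}_t)\|^2\le G^2$ via Jensen and Assumptions~\ref{assumption_3}--\ref{assumption_4}.

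\textbf{What your route buys.}
\begin{itemize}
\item A strictly tighter inequality: no $\Gamma$ term, and $G^2$ instead of $4G^2$.
\item No dependence on either step-size condition. In particular, $\eta_t\le 2\eta_{t+1}$ is not among the hypotheses of Theorem~\ref{theorem_convergence}.
\item It avoids two fragile steps in the paper's version. First, (\ref{equation_20}) applies strong convexity with $\boldsymbol{w}_t^k$ in the inner product while the gradient is evaluated at $\boldsymbol{w}_t$. Second, (\ref{equation_123}) and step (a) of (\ref{equation_33}) treat $\boldsymbol{w}_t$ as the average of the $\boldsymbol{w}_t^k$, which fails under partial participation, quantization and channel errors.
\end{itemize}

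\textbf{What the paper's route buys.} Its template is the natural one when there are several local SGD steps. There $\boldsymbol{w}_t^k$ genuinely drifts from $\boldsymbol{w}_t$, and $\Gamma$ is a meaningful heterogeneity penalty. With the single local step used here, it only adds slack.

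One line you should add is the justification of $\Gamma\ge 0$: $F^*=\frac{1}{M}\sum_kF_k(\boldsymbol{w}^*)\ge\frac{1}{M}\sum_kF_k^*$.
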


From Lemma \ref{lemma_polar} to \ref{lemma_iteration}, we have
\begin{align}
	{\Theta}_{t+1} \le (1-{\mu}{\eta}_{t+1}){\Theta}_{t} + {\eta}_{t+1}^{2}H ,
\end{align}
where $H$ is defined in (\ref{def_H}). The above recursive upper bound leads to the following bound on $\Theta_T$
which leads to 
\begin{align}
	{\Theta}_{T} \le \left[\prod_{i=1}^{T}(1-{\mu}{\eta}_{i})\right]{\Theta}_{0} + \sum_{i=1}^{T}\left[\prod_{j=i+1}^{t}(1-{\mu}{\eta}_{j})\right]{\eta}_{i}^{2}H ,\label{fi_in}
\end{align}

Due to the \textbf{Assumption 1} that the global loss function $F(\cdot)$ is $L$-smooth, i.e., (\ref{as_sm}), and the fact that the gradient at the minimizer $\nabla F(\boldsymbol{w^{\ast}})$ is $0$, we have
\begin{align}
	\mathsf{E}&\left[F\left(\boldsymbol{w}_{T}\right)-F^{*}\right] \nonumber\\&\leq (\boldsymbol{w}_T-\boldsymbol{w}^\ast)^{T} \nabla F_{k}(\boldsymbol{w}^\ast)+\frac{L}{2} \mathsf{E}\left\|\boldsymbol{w}_{T} - \boldsymbol{w}^{*}\right\|^{2} \nonumber\\
	&=\frac{L}{2} \mathsf{E}\left\|\boldsymbol{w}_{T} - \boldsymbol{w}^{*}\right\|^{2}=\frac{L}{2} \Theta_T\label{fi_eq}
\end{align}
By combining (\ref{fi_in}) and (\ref{fi_eq}), we complete the proof of Theorem 1.
\end{IEEEproof}

In the sequel, we will prove Lemma \ref{lemma_polar}-\ref{lemma_iteration}.
To facilitate the proof of the following lemmas, we assume that the users are selected uniformly in the scheduling scheme and define
\begin{align}
	G_{t+1}&\triangleq\{\boldsymbol {g}_{t+1}^k,k=1,\dots,M\}\\
	G_{t+1}^{S}&\triangleq\{\boldsymbol {g}_{t+1}^k, k\in \mathcal{S}_{t+1}\}\\
	G_{t+1}^Q&\triangleq\{{Q}(\boldsymbol {g}_{t+1}^{k}), k\in \mathcal{S}_{t+1}\}\\
	G_{t+1}^R&\triangleq\{\tilde{Q}(\boldsymbol {g}_{t+1}^{k}), k\in \mathcal{S}_{t+1}\}
\end{align}

\subsection{Proof of Lemma \ref{lemma_polar}}
For brevity here, we simplify the binary sequence $\{B_{t+1,j,1}^{k}, \ldots, B_{t+1,j,l}^{k}, \ldots, B_{t+1,j,n}^{k}\}$ as $\{B_{1}, \ldots, B_{l}, \ldots, B_{n}\}$ denoting the component of the quantized gradient ${Q}(\boldsymbol {g}_{t+1,j}^{k})$ in the following derivation. $B_{l}$ equals either 0 or 1. The bit error rate of the l-th bit is $P_{l}$ and in addition we introduce an indicator random variable $I_{l}$ which indicates whether the l-th bit in the sequence is decoded correctly. $I_{l}=0$ means the l-th bit is the right result and $I_{l}=1$ means the l-th bit is flipped.  $\mathsf{E}\left[I_{l}\right] = P_{l}$.

We assume that the components of the gradient ${Q}(\boldsymbol {g}_{t+1,j}^{k})$ is independent of each other and follow a uniform distribution, so the probability of taking value $\{B_{1}, \ldots, B_{l}, \ldots, B_{n}\} = \{b_{1}^{i}, \ldots, b_{l}^{i}, \ldots, b_{n}^{i}\}$ equals $\frac{1}{2^n}$, where $i = 1, \ldots, \frac{1}{2^n}$.

Then, we are ready to prove the inequality (\ref{cond1})
\begin{align}\label{equation_t}
	&\mathsf{E}\left[\langle\boldsymbol{w}_{t+1}-\boldsymbol{u}_{t+1},\boldsymbol{u}_{t+1}-\boldsymbol{w}^{*}\rangle\right] \nonumber\\
	&= \mathsf{E}\left[\langle\boldsymbol{w}_{t+1}-\boldsymbol{u}_{t+1}, \boldsymbol{w}_{t} - \boldsymbol{w}^{*}\rangle+\langle\boldsymbol{w}_{t+1}-\boldsymbol{u}_{t+1},\boldsymbol{u}_{t+1} - \boldsymbol{w}_{t}\rangle \right] .
\end{align}
For the first term in (\ref{equation_t}), we have
\begin{align}
	&\mathsf{E}\left[\langle\boldsymbol{w}_{t+1}-\boldsymbol{u}_{t+1}, \boldsymbol{w}_{t} - \boldsymbol{w}^{*}\rangle\right] \nonumber\\
	&= \mathsf{E}\left[\langle\sum_{k \in S_{t+1}}\left[\tilde{Q}(\boldsymbol {g}_{t+1}^{k}) - {Q}(\boldsymbol {g}_{t+1}^{k})\right], \boldsymbol{w}_{t} - \boldsymbol{w}^{*}\rangle\right] \nonumber\\
	&= \mathsf{E}_{\boldsymbol{w}_{t}}\left[\mathsf{E}\left[\langle\sum_{k \in S_{t+1}}\left[\tilde{Q}(\boldsymbol {g}_{t+1}^{k}) - {Q}(\boldsymbol {g}_{t+1}^{k})\right], \boldsymbol{w}_{t} - \boldsymbol{w}^{*}\rangle \middle| \boldsymbol{w}_{t} \right]\right] \nonumber\\
	&= \mathsf{E}_{\boldsymbol{w}_{t}}\left[\langle \mathsf{E}\left[\sum_{k \in S_{t+1}}\left[\tilde{Q}(\boldsymbol {g}_{t+1}^{k}) - {Q}(\boldsymbol {g}_{t+1}^{k})\right] \middle| \boldsymbol{w}_{t} \right], \boldsymbol{w}_{t} - \boldsymbol{w}^{*}\rangle \right] \nonumber\\
	&= \mathsf{E}_{\boldsymbol{w}_{t}}\left[\langle K \mathsf{E}\left[\tilde{Q}(\boldsymbol {g}_{t+1}^{k}) - {Q}(\boldsymbol {g}_{t+1}^{k}) \middle| \boldsymbol{w}_{t} \right], \boldsymbol{w}_{t} - \boldsymbol{w}^{*}\rangle \right]
\end{align}

From (\ref{ch_err}), we have
\begin{align}
	&\mathsf{E}\left[\tilde{Q}(\boldsymbol {g}_{t+1,j}^{k}) - {Q}(\boldsymbol {g}_{t+1,j}^{k}) \middle| \boldsymbol{w}_{t}\right] \nonumber\\
	&=\mathsf{E}_{\tilde{Q}(\boldsymbol {g}_{t+1,j}^{k})|\boldsymbol{w}_{t}}\left[\mathsf{E}\left[\tilde{Q}(\boldsymbol {g}_{t+1,j}^{k}) - {Q}(\boldsymbol {g}_{t+1,j}^{k}) \middle| \tilde{Q}(\boldsymbol {g}_{t+1,j}^{k})\right] \middle| \boldsymbol{w}_{t}\right] \nonumber\\
	&=\sum_{l=1}^n 2^{l-1}{\Delta}\mathsf{E}_{\tilde{Q}(\boldsymbol {g}_{t+1,j}^{k})|\boldsymbol{w}_{t}}\left[ \mathsf{E}\left[\bar B_{l}- {B}_{l} \middle|\bar B_l\right] \middle| \boldsymbol{w}_{t}\right]
\end{align}
We note that $\bar B_l$ is the soft channel output through the synthesized channel (\ref{ch_sym}). Due to symmetry $\bar B_l$ is uniformly distributed on $\{P_l,1-P_l\}$
\begin{align}
	\mathsf{E}&\left[\bar B_{l}- {B}_{l} \middle|\bar B_l\right]\nonumber\\&=\frac{1}{2}\left(\mathsf{E}\left[\bar B_{l}- {B}_{l} \middle|\bar B_l=Z_l\right]+\mathsf{E}\left[\bar B_{l}- {B}_{l} \middle|\bar B_l=1-Z_l\right]\right)\label{symm}
\end{align}
We assume that due to symmetry, $B_l$ is uniformly distributed on $\{0,1\}$ and 
\begin{align}
	\mathsf{Pr}(B_l=0|\bar B_l=Z_l)&=\mathsf{Pr}(B_l=1|\bar B_l=1-Z_l)=1-P_l\\
	\mathsf{Pr}(B_l=1|\bar B_l=Z_l)&=\mathsf{Pr}(B_l=0|\bar B_l=1-Z_l)=P_l
\end{align}
Thus, we have
\begin{align}
	\mathsf{E}\left[\bar B_{l}- {B}_{l} \middle|\bar B_l=P_l\right]&=(Z_l-1)P_l+Z_l(1-P_l)\nonumber\\
	&=Z_l-P_l\label{dis1}\\
	\mathsf{E}\left[\bar B_{l}- {B}_{l} \middle|\bar B_l=1-P_l\right]&=-Z_l(1-P_l)+(1-Z_l)P_l\nonumber\\
	&=P_l-Z_l\label{dis2}
\end{align}
Therefore, from (\ref{symm}), (\ref{dis1}) and (\ref{dis2}), we have
\begin{align}
	\mathsf{E}\left[\bar B_{l}- {B}_{l} \middle|\bar B_l\right]&=0\label{unbb}\end{align}
which leads to 
\begin{align}
	\mathsf{E}\left[\tilde{Q}(\boldsymbol {g}_{t+1,j}^{k}) - {Q}(\boldsymbol {g}_{t+1,j}^{k}) \middle|\tilde{Q}(\boldsymbol {g}_{t+1,j}^{k}) \right]&=0\label{unbia}\\
	\mathsf{E}\left[\tilde{Q}(\boldsymbol {g}_{t+1,j}^{k}) - {Q}(\boldsymbol {g}_{t+1,j}^{k}) \middle| \boldsymbol{w}_{t}\right]&=0\end{align}
and hence
\begin{align}
	\mathsf{E}\left[\langle\boldsymbol{w}_{t+1}-\boldsymbol{u}_{t+1}, \boldsymbol{w}_{t} - \boldsymbol{w}^{*}\rangle\right]&=0\label{p1}
\end{align}

For the second term in equation (\ref{equation_t}), we have
\begin{align}\label{equation_m}
	&\mathsf{E}\left[\langle\boldsymbol{w}_{t+1}-\boldsymbol{u}_{t+1},\boldsymbol{u}_{t+1} - \boldsymbol{w}_{t}\rangle\right] \nonumber\\
	&= \frac{{\eta}_{t+1}^{2}}{{K}^{2}} \mathsf{E}\left[\langle \sum_{k_1 \in S_{t+1}}\left[\tilde{Q}(\boldsymbol {g}_{t+1}^{k_1}) - {Q}(\boldsymbol {g}_{t+1}^{k_1})\right], \sum_{k_2 \in S_{t+1}}{Q}(\boldsymbol {g}_{t+1}^{k_2}) \rangle\right] .
\end{align}

The gradients of different clients are independent of each other, which means that when $k_1 \ne k_2$, 
\begin{align}
	\mathsf{E}&\left[\langle\tilde{Q}(\boldsymbol {g}_{t+1}^{k_1}) - {Q}(\boldsymbol {g}_{t+1}^{k_1}), {Q}(\boldsymbol {g}_{t+1}^{k_2}) \rangle\right] \nonumber\\ 
	&= \langle\mathsf{E}\left[\tilde{Q}(\boldsymbol {g}_{t+1}^{k_1}) - {Q}(\boldsymbol {g}_{t+1}^{k_1})\right], \mathsf{E}\left[{Q}(\boldsymbol {g}_{t+1}^{k_2})\right]\rangle\nonumber\\ 
	&=0 \end{align}%
where the last equality is due to (\ref{unbia}), more specifically
\begin{align}
	\mathsf{E}&\left[\tilde{Q}(\boldsymbol {g}_{t+1}^{k_1}) - {Q}(\boldsymbol {g}_{t+1}^{k_1})\right]\nonumber\\
	&= \mathsf{E}_{\tilde{Q}(\boldsymbol {g}_{t+1}^{k_1})}\left[\mathsf{E}\left[\tilde{Q}(\boldsymbol {g}_{t+1}^{k_1}) - {Q}(\boldsymbol {g}_{t+1}^{k_1})|\tilde{Q}(\boldsymbol {g}_{t+1}^{k_1})\right]\right] \nonumber\\ 
	&= 0 .
\end{align}

So equation (\ref{equation_m}) can be simplified as 
\begin{align}
	&\mathsf{E}\left[\langle\boldsymbol{w}_{t+1}-\boldsymbol{u}_{t+1},\boldsymbol{u}_{t+1}-\boldsymbol{w}_{t}\rangle\right] \nonumber\\
	&= \frac{{\eta}_{t+1}^{2}}{{K}^{2}} \mathsf{E}\left[\sum_{k \in S_{t+1}}\langle \tilde{Q}(\boldsymbol {g}_{t+1}^{k}) - {Q}(\boldsymbol {g}_{t+1}^{k}),{Q}(\boldsymbol {g}_{t+1}^{k}) \rangle\right] \nonumber\\
	&= \frac{{\eta}_{t+1}^{2}}{{K}} \mathsf{E}\left[\langle \tilde{Q}(\boldsymbol {g}_{t+1}^{k}) - {Q}(\boldsymbol {g}_{t+1}^{k}),{Q}(\boldsymbol {g}_{t+1}^{k}) \rangle\right] \nonumber\\
	&= \frac{{\eta}_{t+1}^{2} d}{{K}} \mathsf{E}\left[\left[ \tilde{Q}(\boldsymbol {g}_{t+1,j}^{k}) - {Q}(\boldsymbol {g}_{t+1,j}^{k})\right] {Q}(\boldsymbol {g}_{t+1,j}^{k})\right] \nonumber\\
	&= \frac{{\eta}_{t+1}^{2} d}{{K}} \mathsf{E}\left[\left(\sum_{l=1}^{n} (\bar B_l - B_l) 2^{l-1}{\Delta}\right) \left(\sum_{l=1}^{n} B_{l} 2^{l-1}{\Delta}\right) \right] \nonumber\\
	&= \frac{{\eta}_{t+1}^{2} d{\Delta^2} }{{K}} \mathsf{E}\left[\left(\sum_{l=1}^{n}4^{l-1} (\bar B_l - B_l) B_{l} \right) \right] . 
\end{align}
where the last equality is because of the following derivation. We assume that the channel inputs $B_l$, for $l=1,\dots, n$, are independent of each other. Therefore for $1\le i\neq j\le n$, we have
\begin{align}
	\mathsf{E}&((\bar B_i - B_i) B_{j})\nonumber\\
	&=\mathsf{E}(\bar B_i - B_i)\mathsf{E} B_{j}\nonumber\\
	&=\mathsf{E}_{\bar B_i}[\mathsf{E}(\bar B_i - B_i|\bar B_i)]\mathsf{E} B_{j}\nonumber\\
	&=0\label{uncor}
\end{align}
where the last equality is due to (\ref{unbb}). 
Furthermore, for the term $\mathsf{E}((\bar B_l - B_l) B_{l})$, we have
\begin{align}
	\mathsf{E}&((\bar B_l - B_l) B_{l})=\mathsf{E}_{B_l}[\mathsf{E}((\bar B_l - B_l) B_{l}|B_l)]\nonumber\\
	&=\frac{1}{2}[\mathsf{E}((\bar B_l - B_l) B_{l}|B_l=0)+\mathsf{E}((\bar B_l - B_l) B_{l}|B_l=1)]\nonumber\\
	&=\frac{1}{2}\left[0 + (Z_l-1)P_l - Z_l(1-P_l))\right] \nonumber\\
	&=P_l(Z_l-1) + Z_l(P_l-1)\nonumber\\
	&\le0
\end{align}
Therefore, we have
\begin{align}
	\mathsf{E}\left[\langle\boldsymbol{w}_{t+1}-\boldsymbol{u}_{t+1},\boldsymbol{u}_{t+1}-\boldsymbol{w}_{t}\rangle\right]\le0\label{p2}
\end{align}

By combining (\ref{p1}) and (\ref{p2}), we prove (\ref{cond1}).

For the term $A$, we have
\begin{align}
	A&=\mathsf{E}\left\|\boldsymbol{w}_{t+1} - \boldsymbol{u}_{t+1}\right\|^{2}\nonumber\\
	&= \frac{{\eta}_{t+1}^{2}}{{K}^{2}} \mathsf{E}\left\|\sum_{k \in S_{t+1}}\left[{Q}(\boldsymbol{g}_{t+1}^{k}) - \tilde{Q}(\boldsymbol {g}_{t+1}^{k})\right]\right\|^{2} \nonumber\\
	&= {\eta}_{t+1}^{2} \mathsf{E}\left\|\sum_{k \in S_{t+1}} \frac{1}{K} \left[{Q}(\boldsymbol{g}_{t+1}^{k}) - \tilde{Q}(\boldsymbol {g}_{t+1}^{k})\right]\right\|^{2} \nonumber\\
	&\stackrel{(a)}{\leq} {\eta}_{t+1}^{2} \mathsf{E}\left[\sum_{k \in S_{t+1}} \frac{1}{K} \left\|{Q}(\boldsymbol{g}_{t+1}^{k}) - \tilde{Q}(\boldsymbol {g}_{t+1}^{k})\right\|^{2}\right] \nonumber\\
	&= {\eta}_{t+1}^{2} \mathsf{E}\left[\left\|{Q}(\boldsymbol{g}_{t+1}^{k}) - \tilde{Q}(\boldsymbol {g}_{t+1}^{k})\right\|^{2}\right] \nonumber\\
	&= {\eta}_{t+1}^{2} d \mathsf{E}\left[\left({Q}(\boldsymbol{g}_{t+1,j}^{k}) - \tilde{Q}(\boldsymbol {g}_{t+1,j}^{k})\right)^{2} \right]\nonumber\\
	&= {\eta}_{t+1}^{2} d \mathsf{E}\left[\left(\sum_{l=1}^{n} (\bar B_l - B_l) 2^{l-1}{\Delta}\right)^{2} \right] \nonumber\\
	&= {\eta}_{t+1}^{2} d {\Delta}^2 \sum_{l=1}^{n} \mathsf{E}\left[ (\bar{B}_l - B_l)^{2}\right] 4^{l-1} ,
\end{align}
where inequalities in (a) are due to Jensen's inequality, and the last equality is due to (\ref{uncor}).

For the term $\mathsf{E}\left[ (\bar{B}_l - B_l)^{2}\right]$, we have
\begin{align}
	\mathsf{E}&\left[ (\bar{B}_l - B_l)^{2}\right]\nonumber\\
	&=\mathsf{E}_{B_l}[\mathsf{E}( (\bar{B}_l - B_l)^{2}|B_l)]\nonumber\\
	&=\frac{1}{2}[\mathsf{E}((\bar{B}_l - B_l)^{2}|B_l=0)+\mathsf{E}((\bar{B}_l - B_l)^{2}|B_l=1)]\nonumber\\
	&=\frac{1}{2}[Z_l^2(1-P_l) + (1-Z_l)^2P_l + (Z_l-1)^2P_l + Z_l^2(1-P_l)]\nonumber\\
	&=P_l + Z_l^2 - 2Z_lP_l \nonumber\\
	&=P_l(1-P_l) + (Z_l-P_l)^2 \nonumber\\
	&\leq P_l + (Z_l-P_l) \nonumber\\
	&= Z_l.
\end{align}

Since we take $Z_l = Z\left(W_{N}^{(l)}\right)$, then we have
\begin{align}
	A \le {\eta }_{t+1}^{2} d {\Delta}^{2}\sum_{i=1}^{n} 4^{i-1} Z\left(W_{N}^{(i)}\right) .
\end{align}

\subsection{Proof of Lemma \ref{lemma_quantization}}
According to Lemma 1, we have
\begin{align}
	\mathsf{E} \left[\boldsymbol{u}_{t+1}|G_{t+1}^S\right]
	= \boldsymbol{v}_{t+1} ,
\end{align}
and according to the similar argument as in (\ref{equation_conditional_expection_=0}), we have
\begin{align}
	\mathsf{E}&\left[\langle\boldsymbol{u}_{t+1}-\boldsymbol{v}_{t+1},\boldsymbol{v}_{t+1}-\boldsymbol{w}^{*}\rangle\right] \nonumber\\
	&=\mathsf{E}_{G_{t+1}^S}\left[\mathsf{E}\left[\langle\boldsymbol{u}_{t+1}-\boldsymbol{v}_{t+1},\boldsymbol{v}_{t+1}-\boldsymbol{w}^{*}\rangle\middle| G_{t+1}^S\right] \right]\nonumber\\
	&=\mathsf{E}_{G_{t+1}^S}\left[\langle\mathsf{E}\left[\boldsymbol{u}_{t+1}\middle| G_{t+1}^S\right]-\boldsymbol{v}_{t+1},\boldsymbol{v}_{t+1}-\boldsymbol{w}^{*}\rangle \right]\nonumber\\
	&
	= 0 .
\end{align}
which proves (\ref{cond2}).

Then we have
\begin{align}\label{equation_B}
	B
	&= \frac{{\eta}_{t+1}^{2}}{{K}^{2}} \mathsf{E}\left\|\sum_{k \in S_{t+1}}\left[{Q}(\boldsymbol{g}_{t+1}^{k}) - \boldsymbol{g}_{t+1}^{k}\right]\right\|^{2} \nonumber\\
	&= {\eta}_{t+1}^{2} \mathsf{E}\left\|\sum_{k \in S_{t+1}} \frac{1}{K} \left[{Q}(\boldsymbol{g}_{t+1}^{k}) - \boldsymbol{g}_{t+1}^{k}\right]\right\|^{2} \nonumber\\
	&\stackrel{(a)}{\leq} {\eta}_{t+1}^{2} \mathsf{E}\left[\sum_{k \in S_{t+1}} \frac{1}{K} \left\|{Q}(\boldsymbol{g}_{t+1}^{k}) - \boldsymbol{g}_{t+1}^{k}\right\|^{2}\right] \nonumber\\
	&= {\eta}_{t+1}^{2} \mathsf{E}\left\|{Q}(\boldsymbol{g}_{t+1}^{k}) - \boldsymbol{g}_{t+1}^{k}\right\|^{2} \nonumber\\
	&= {\eta}_{t+1}^{2} d \mathsf{E}\left[\left({Q}(\boldsymbol{g}_{t+1,j}^{k}) - \boldsymbol{g}_{t+1,j}^{k}\right)^{2} \right],
\end{align}
where (a) is because Jensen's inequality.

For the unbiased stochastic quantization method, assuming that $\boldsymbol{g}_{t+1,j}^{k} \in [s_{i}, s_{i+1}]$ and we have
\begin{align}
	Q(\boldsymbol{g}_{t+1,j}^{k})=\left\{\begin{array}{ll}
		s_{i}, \quad &\text { w.p. } \frac{s_{i+1} - \boldsymbol{g}_{t+1,j}^{k}}{s_{i+1}-s_{i}} = p \\
		s_{i+1}, \quad &\text { w.p. } \frac{\boldsymbol{g}_{t+1,j}^{k} - s_{i}}{s_{i+1}-s_{i}} = 1-p
	\end{array}\right .
\end{align}
where $p \in [0,1]$ and thus $s_{i+1} - \boldsymbol{g}_{t+1,j}^{k} = p{\Delta}$ and $\boldsymbol{g}_{t+1,j}^{k} - s_{i} = (1-p){\Delta}$. So the term $\mathsf{E}\left[\left({Q}(\boldsymbol{g}_{t+1,j}^{k}) - \boldsymbol{g}_{t+1,j}^{k}\right)^{2} \right]$ can be written as
\begin{align}\label{equation_quantization_loss}
	\mathsf{E}&\left[\left({Q}(\boldsymbol{g}_{t+1,j}^{k}) - \boldsymbol{g}_{t+1,j}^{k}\right)^{2} \right] \nonumber\\
	&= \mathsf{E}_{\boldsymbol{g}_{t+1,j}^{k}}\Bigg[ \mathsf{E}\left[\left({Q}(\boldsymbol{g}_{t+1,j}^{k}) - \boldsymbol{g}_{t+1,j}^{k}\right)^{2} \mid \boldsymbol{g}_{t+1,j}^{k}\right] \Bigg] \nonumber\\
	&= \int_{0}^{1} \Big[p(1-p)^2{\Delta}^2 + (1-p)p^2{\Delta}^2 \Big]\mathrm{d}p \nonumber\\
	&= \frac{1}{6}{\Delta}^2.
\end{align}
which completes the proof of Lemma \ref{lemma_quantization}.

\subsection{Proof of Lemma \ref{lemma_scheduling}}
We have
\begin{align}
	\mathsf{E} \left[\frac{1}{K}\sum_{k \in \mathcal{S}_{t+1}} \boldsymbol {g}_{t+1}^{k} \middle| G_{t+1}\right] 
	&=  \frac{1}{M}\sum_{k=1}^{M}\boldsymbol {g}_{t+1}^{k} ,
\end{align}
which implies
\begin{align}
	\mathsf{E}\left[\boldsymbol{v}_{t+1}|G_{t+1}\right] 
	= \overline{\boldsymbol{v}}_{t+1} ,
\end{align}
and therefore
\begin{align}\label{equation_conditional_expection_=0}
	&\mathsf{E}\left[\langle\boldsymbol{v}_{t+1}-\overline{\boldsymbol{v}}_{t+1},\overline{\boldsymbol{v}}_{t+1}-\boldsymbol{w}^{*}\rangle\right] \nonumber\\
	&= \mathsf{E}_{G_{t+1}}\left[\mathsf{E}\left[\langle\boldsymbol{v}_{t+1}-\overline{\boldsymbol{v}}_{t+1},\overline{\boldsymbol{v}}_{t+1}-\boldsymbol{w}^{*}\rangle\middle| G_{t+1}\right]  \right] \nonumber\\
	&= \mathsf{E}_{G_{t+1}}\left[\langle\mathsf{E}\left[\boldsymbol{v}_{t+1}|G_{t+1}\right]-\overline{\boldsymbol{v}}_{t+1},\overline{\boldsymbol{v}}_{t+1}-\boldsymbol{w}^{*}\rangle  \right] \nonumber\\
	&= 0 .
\end{align}
which proves (\ref{cond3}).

We define $\overline{\boldsymbol{g}}_{t+1} \triangleq \frac{1}{M}\sum_{k=1}^{M}\boldsymbol{g}_{t+1}^{k}$. We also define the indicator function $I\{k \in S_{t+1}\}$ as $I\{k \in S_{t+1}\}=1$ when $k \in S_{t+1}$ and $I\{k \in S_{t+1}\}=0$ when $k \notin  S_{t+1}$. We note that $\sum_{k=1}^{M}I\{k \in S_{t+1}\} = K$. The proof of the lemma is as follows
\begin{align}
	C
	=& {\eta}_{t+1}^{2} \mathsf{E}\left\|\frac{1}{K}\sum_{k \in S_{t+1}}\boldsymbol{g}_{t+1}^{k} - \frac{1}{M}\sum_{k=1}^{M}\boldsymbol{g}_{t+1}^{k} \right\|^{2} \nonumber\\
	=& \frac{{\eta}_{t+1}^{2}}{K^{2}} \mathsf{E}\left\|\sum_{k \in S_{t+1}}\boldsymbol{g}_{t+1}^{k} - K \overline{\boldsymbol{g}}_{t+1} \right\|^{2} \nonumber\\
	=& \frac{{\eta}_{t+1}^{2}}{K^{2}} \mathsf{E}\left\|\sum_{k=1}^{M}\Big[I\{k \in S_{t+1}\} \left(\boldsymbol{g}_{t+1}^{k} - \overline{\boldsymbol{g}}_{t+1}\right)\Big] \right\|^{2} \nonumber\\
	\stackrel{(a)}{=}& \frac{(M-K){\eta}_{t+1}^{2}}{MK(M-1)} \mathsf{E}\left[\sum_{k=1}^{M}\left\|\boldsymbol{g}_{t+1}^{k} - \overline{\boldsymbol{g}}_{t+1} \right\|^{2}\right] \nonumber\\
	=& \frac{(M-K){\eta}_{t+1}^{2}}{MK(M-1)} \Bigg[\sum_{k=1}^{M}\mathsf{E}\left\|\boldsymbol{g}_{t+1}^{k} \right\|^{2} - 2\sum_{k=1}^{M}\mathsf{E}\left[\langle\boldsymbol{g}_{t+1}^{k}, \overline{\boldsymbol{g}}_{t+1}\rangle \right] \nonumber\\
	&+\sum_{k=1}^{M}\mathsf{E}\left\|\overline{\boldsymbol{g}}_{t+1} \right\|^{2} \Bigg] \nonumber\\
	\stackrel{(b)}{=}& \frac{(M-K){\eta}_{t+1}^{2}}{MK(M-1)} \Bigg[\sum_{k=1}^{M}\mathsf{E}\left\|\boldsymbol{g}_{t+1}^{k} \right\|^{2} - (2M-1)\mathsf{E}\left\|\overline{\boldsymbol{g}}_{t+1} \right\|^{2} \Bigg] \nonumber\\
	\le& \frac{(M-K){\eta}_{t+1}^{2}}{MK(M-1)} \sum_{k=1}^{M}\mathsf{E}\left\|\boldsymbol{g}_{t+1}^{k} \right\|^{2} \nonumber\\
	\stackrel{(c)}{=}& \frac{(M-K){\eta}_{t+1}^{2} G^{2}}{K(M-1)} .
\end{align}
where (a) is from Lemma 4 in \cite{zheng2020design}; (b) is because of $\sum_{k=1}^{M}\mathsf{E}\left[\langle\boldsymbol{g}_{t+1}^{k}, \overline{\boldsymbol{g}}_{t+1}\rangle \right] = M\mathsf{E}\left\|\overline{\boldsymbol{g}}_{t+1} \right\|^{2}$; (c) is due to \textbf{Assumption 4}.

\subsection{Proof of Lemma \ref{lemma_iteration}}
We define $\nabla F(\boldsymbol{w}_{t}) \triangleq \frac{1}{M}\sum_{k=1}^{M} \nabla F_k(\boldsymbol{w}_{t})$, and the lemma is proved as follows
\begin{align}
	&D = \mathsf{E}\left\|\boldsymbol{w}_{t} - \frac{{\eta}_{t+1}}{M} \sum_{k=1}^{M} \boldsymbol{g}_{t+1}^{k} - \boldsymbol{w}^{*}\right\|^{2} \nonumber\\
	&= \mathsf{E}\left\|\boldsymbol{w}_{t} - {\eta}_{t+1}\overline{\boldsymbol{g}}_{t+1} + {\eta}_{t+1}\nabla F(\boldsymbol{w}_{t}) - {\eta}_{t+1}\nabla F(\boldsymbol{w}_{t}) - \boldsymbol{w}^{*}\right\|^{2} \nonumber\\
	&= \underbrace{\mathsf{E}\left\|\boldsymbol{w}_{t} - {\eta}_{t+1}\nabla F(\boldsymbol{w}_{t}) - \boldsymbol{w}^{*}\right\|^{2}}_{D_{1}}  + \underbrace{{\eta}_{t+1}^{2}\mathsf{E}\left\|\nabla F(\boldsymbol{w}_{t}) - \overline{\boldsymbol{g}}_{t+1}\right\|^{2}}_{D_{2}} \nonumber\\
	&+ 2\underbrace{\mathsf{E}\left[\langle{\eta}_{t+1}\left[\nabla F(\boldsymbol{w}_{t}) - \overline{\boldsymbol{g}}_{t+1}\right], \boldsymbol{w}_{t} - {\eta}_{t+1}\nabla F(\boldsymbol{w}_{t}) - \boldsymbol{w}^{*}\rangle\right]}_{Q} .
\end{align}

For the last term in the above equation, we have
\begin{align}
	&Q \nonumber\\
	&=\mathsf{E}_{\boldsymbol{w}_{t}}\Bigg[\mathsf{E}\Big[\langle{\eta}_{t+1}\left[\nabla F(\boldsymbol{w}_{t}) - \overline{\boldsymbol{g}}_{t+1}\right], \boldsymbol{w}_{t} - {\eta}_{t+1}\nabla F(\boldsymbol{w}_{t}) - \boldsymbol{w}^{*}\rangle \mid \boldsymbol{w}_{t} \Big]\Bigg] \nonumber\\
	&=\mathsf{E}_{\boldsymbol{w}_{t}}\Bigg[\langle{\eta}_{t+1}\mathsf{E}\left[\nabla F(\boldsymbol{w}_{t}) - \overline{\boldsymbol{g}}_{t+1} \mid \boldsymbol{w}_{t} \right], \boldsymbol{w}_{t} - {\eta}_{t+1}\nabla F(\boldsymbol{w}_{t}) - \boldsymbol{w}^{*}\rangle\Bigg] \nonumber\\
	&=0 .
\end{align}
where the last equality is due \textbf{Assumption 4}.

For the term $D_2$, we have
\begin{align}
	D_{2} &= \frac{{\eta}_{t+1}^{2}}{M^{2}}\sum_{k=1}^{M}\mathsf{E}\left\|\nabla F_k(\boldsymbol{w}_{t}) - \boldsymbol{g}_{t+1}^{k}\right\|^{2} \nonumber\\
	&\le \frac{{\eta}_{t+1}^{2}}{M^{2}}\sum_{k=1}^{M}\sigma_{k}^{2} .
\end{align}
where the last inequality is due to \textbf{Assumption 5}.

It remains to upper bound the term $D_1$ as follows
\begin{align}
	D_{1} 
	&= \mathsf{E}\left\|\boldsymbol{w}_{t} - \boldsymbol{w}^{*}\right\|^{2} \underbrace{ - 2{\eta}_{t+1}\mathsf{E}\left[\langle\boldsymbol{w}_{t} - \boldsymbol{w}^{*}, \nabla F(\boldsymbol{w}_{t})\rangle\right]}_{P_{1}} \nonumber\\
	&\quad + \underbrace{{\eta}_{t+1}^{2}\left\|\nabla F(\boldsymbol{w}_{t})\right\|^{2}}_{P_{2}} .
\end{align}
For the term $P_{2}$, we have
\begin{align}
	P_{2}& ={\eta}_{t+1}^{2}\left\|\nabla F(\boldsymbol{w}_{t})\right\|^{2}\nonumber\\
	& ={\eta}_{t+1}^{2}\left\|\frac{1}{M}\sum_{k=1}^{M} \nabla F_k(\boldsymbol{w}_{t})\right\|^{2}\nonumber\\
	&\stackrel{(a)}{\leq} \frac{{\eta}_{t+1}^{2}}{M}\sum_{k=1}^{M}\left\|\nabla F_k(\boldsymbol{w}_{t})\right\|^{2} \nonumber\\
	&\stackrel{(b)}{\leq} \frac{2L{\eta}_{t+1}^{2}}{M}\sum_{k=1}^{M}\left(F_k(\boldsymbol{w}_{t}) - F_{k}^{*}\right) .
\end{align}
where (a) follows Jensen's inequality and (b) uses the $L$-smooth property of $F_{k}(\cdot)$ by substituting the following inequality
\begin{align}\label{equation_0}
	\left\|\nabla F_{k}(\boldsymbol{w}_{t})\right\|^{2} \leq 2L\left(F_{k}(\boldsymbol{w}_{t}) - F_{k}^{*}\right) , 
\end{align}
where $F_{k}^{*}$ represents the optimal value of the local loss function $F_{k}(\cdot)$. Next, 
\begin{align}\label{equation_00}
	P_{1}
	&= - \frac{2{\eta}_{t+1}}{M}\sum_{k=1}^{M}\mathsf{E}\left[\langle\boldsymbol{w}_{t} - \boldsymbol{w}^{*}, \nabla F_{k}(\boldsymbol{w}_{t})\rangle\right] \nonumber\\
	&= - \frac{2{\eta}_{t+1}}{M}\sum_{k=1}^{M}\mathsf{E}\big[\langle\boldsymbol{w}_{t} - \boldsymbol{w}_{t}^{k}, \nabla F_{k}(\boldsymbol{w}_{t})\rangle + \nonumber\\
	&\quad \langle\boldsymbol{w}_{t}^{k} - \boldsymbol{w}^{*}, \nabla F_{k}(\boldsymbol{w}_{t})\rangle\big] .
\end{align}

And
\begin{align}\label{equation_10}
	-2 \langle\boldsymbol{w}_{t} - \boldsymbol{w}_{t}^{k}, \nabla F_{k}(\boldsymbol{w}_{t})\rangle &\stackrel{(a)}{\leq} 2\left\|\boldsymbol{w}_{t} - \boldsymbol{w}_{t}^{k}\right\| \left\|\nabla F_{k}(\boldsymbol{w}_{t})\right\| \nonumber\\
	&\stackrel{(b)}{\leq} \frac{1}{{\eta}_{t+1}}\left\|\boldsymbol{w}_{t} - \boldsymbol{w}_{t}^{k}\right\|^{2} + {\eta}_{t+1}\left\|\nabla F_{k}(\boldsymbol{w}_{t})\right\|^{2} ,
\end{align}
where (a) follows Cauchy inequality and (b) follows AM-GM inequality. Then
\begin{align}\label{equation_20}
	-\langle\boldsymbol{w}_{t}^{k} - \boldsymbol{w}^{*}, \nabla F_{k}(\boldsymbol{w}_{t})\rangle
	\stackrel{(a)}{\leq} & -\left[F_{k}(\boldsymbol{w}_{t}) - F_{k}(\boldsymbol{w}^{*})\right] \nonumber\\
	&- \frac{\mu}{2}\left\|\boldsymbol{w}_{t}^{k} - \boldsymbol{w}^{*}\right\|^{2} ,
\end{align}
where (a) uses the $\mu$-strongly convex property of $F_{k}(\cdot)$. Substituting equation (\ref{equation_10}) and (\ref{equation_20}) into (\ref{equation_00}), we get that
\begin{align}\label{equation_12}
	&P_{1} \nonumber\\
	&\leq \frac{1}{M}\sum_{k=1}^{M}\mathsf{E}\left\|\boldsymbol{w}_{t} - \boldsymbol{w}_{t}^{k}\right\|^{2} - \frac{2{\eta}_{t+1}}{M}\sum_{k=1}^{M}\mathsf{E}\left[F_{k}(\boldsymbol{w}_{t}) - F_{k}(\boldsymbol{w}^{*})\right]  \nonumber\\
	&\quad + \frac{{\eta}_{t+1}^{2}}{M}\sum_{k=1}^{M}\left\|\nabla F_{k}(\boldsymbol{w}_{t})\right\|^{2} - \frac{\mu{\eta}_{t+1}}{M}\sum_{k=1}^{M}\mathsf{E}\left\|\boldsymbol{w}_{t}^{k} - \boldsymbol{w}^{*}\right\|^{2} .
\end{align}

Since
\begin{align}
	\frac{1}{M}\sum_{k=1}^{M}\mathsf{E}\left\|\boldsymbol{w}_{t} - \boldsymbol{w}^{*}\right\|^{2} = \frac{1}{M}\sum_{k=1}^{M}\mathsf{E}\left\|\frac{1}{M}\sum_{k=1}^{M}\left(\boldsymbol{w}_{t}^{k} - \boldsymbol{w}^{*}\right)\right\|^{2}, 
\end{align}
and following Jensen's inequality,
\begin{align}
	\frac{1}{M}\sum_{k=1}^{M}\mathsf{E}\left\|\boldsymbol{w}_{t} - \boldsymbol{w}^{*}\right\|^{2} 
	&\leq \frac{1}{M}\sum_{k=1}^{M}\sum_{k=1}^{M}\frac{1}{M}\mathsf{E}\left\|\boldsymbol{w}_{t}^{k} - \boldsymbol{w}^{*}\right\|^{2} \nonumber\\
	&=\frac{1}{M}\sum_{k=1}^{M}\mathsf{E}\left\|\boldsymbol{w}_{t}^{k} - \boldsymbol{w}^{*}\right\|^{2} ,
\end{align}
we can get that
\begin{align}\label{equation_123}
	-\frac{\mu{\eta}_{t+1}}{M}\sum_{k=1}^{M}\mathsf{E}\left\|\boldsymbol{w}_{t}^{k} - \boldsymbol{w}^{*}\right\|^{2} \leq -\frac{\mu{\eta}_{t+1}}{M}\sum_{k=1}^{M}\mathsf{E}\left\|\boldsymbol{w}_{t} - \boldsymbol{w}^{*}\right\|^{2} .
\end{align}

Substituting equation (\ref{equation_0}) and equation (\ref{equation_123}) into equation (\ref{equation_12}), we obtain that
\begin{align}
	&P_{1} + P_{2} \nonumber\\
	&\leq \frac{1}{M}\sum_{k=1}^{M}\mathsf{E}\left\|\boldsymbol{w}_{t} - \boldsymbol{w}_{t}^{k}\right\|^{2} - \frac{\mu{\eta}_{t+1}}{M}\sum_{k=1}^{M}\mathsf{E}\left\|\boldsymbol{w}_{t} - \boldsymbol{w}^{*}\right\|^{2} + \nonumber\\
	&\underbrace{\frac{4L{\eta}_{t+1}^{2}}{M}\sum_{k=1}^{M}\mathsf{E}\left[F_{k}(\boldsymbol{w}_{t}) - F_{k}^{*}\right] - \frac{2{\eta}_{t+1}}{M}\sum_{k=1}^{M}\mathsf{E}\left[F_{k}(\boldsymbol{w}_{t}) - F_{k}(\boldsymbol{w}^{*})\right]}_{R}  .
\end{align}

And
\begin{align}
	R 
	&=\left(\frac{4L{\eta}_{t+1}^{2}}{M} - \frac{2{\eta}_{t+1}}{M}\right)\sum_{k=1}^{M}\mathsf{E}\left[F_{k}(\boldsymbol{w}_{t}) - F_{k}(\boldsymbol{w}^{*})\right] + \nonumber\\
	&\quad \ \frac{4L{\eta}_{t+1}^{2}}{M}\sum_{k=1}^{M}\mathsf{E}\left[F_{k}(\boldsymbol{w}^{*}) - F_{k}^{*}\right] \nonumber\\
	&= 2{\eta}_{t+1}\left(2L{\eta}_{t+1}-1\right)\left[F(\boldsymbol{w}_{t}) - F(\boldsymbol{w}^{*})\right] + 4L{\eta}_{t+1}^{2}{\Gamma}, 
\end{align}
where $F(\boldsymbol{w}^{*}) = \frac{1}{M}\sum_{k=1}^{M}F_{k}(\boldsymbol{w}^{*})$ and $\Gamma = F(\boldsymbol{w}^{*}) - \frac{1}{M}\sum_{k=1}^{M}F_{k}^{*}$. Since $F(\boldsymbol{w}_{t}) - F(\boldsymbol{w}^{*}) \ge 0$ and we let ${\eta}_{t+1} \leq \frac{1}{2L}$, we can obtain that
\begin{align}
	R \leq 4L{\eta}_{t+1}^{2}{\Gamma} .
\end{align}

As a result,
\begin{align}\label{equation_31}
	D &= D_{1}+D_{2} \nonumber\\
	&\le (1-{\mu}{\eta}_{t+1})\mathsf{E}\left\|\boldsymbol{w}_{t} - \boldsymbol{w}^{*}\right\|^{2} + \frac{1}{M}\sum_{k=1}^{M}\mathsf{E}\left\|\boldsymbol{w}_{t} - \boldsymbol{w}_{t}^{k}\right\|^{2} + \nonumber\\
	&\quad 4L{\eta}_{t+1}^{2}{\Gamma} + \frac{{\eta}_{t+1}^{2}}{M^{2}}\sum_{k=1}^{M}{\sigma}_{k}^{2}.
\end{align}

And
\begin{align}\label{equation_33}
	\mathsf{E}\left\|\boldsymbol{w}_{t} - \boldsymbol{w}_{t}^{k}\right\|^{2}
	& = \mathsf{E}\left\|\left(\boldsymbol{w}_{t} - \boldsymbol{w}_{t-1}\right) - \left( \boldsymbol{w}_{t}^{k} - \boldsymbol{w}_{t-1}\right)\right\|^{2} \nonumber\\
	&\stackrel{(a)}{\leq} \mathsf{E}\left\|\boldsymbol{w}_{t}^{k} - \boldsymbol{w}_{t-1}\right\|^{2} \nonumber\\
	&\leq {\eta}_{t}^{2}G^{2} \nonumber\\
	&\stackrel{(b)}{\leq} 4{\eta}_{t+1}^{2}G^{2} .
\end{align}
where (a) is because $\mathsf{E}\left\|X - \mathsf{E}\left[X\right]\right\|^{2} \leq \mathsf{E}\left\|X\right\|^{2}$, in which $X = \boldsymbol{w}_{t}^{k} - \boldsymbol{w}_{t-1}$ and (b) is because the step sizes $\{{\eta}_{t}\}$ are non-increasing and satisfy ${\eta}_{t} \leq 2{\eta}_{t+1}$. Substituting (\ref{equation_33}) into (\ref{equation_31}), we obtain that
\begin{align}
	D \le &(1-{\mu}{\eta}_{t+1})\mathsf{E}\left\|\boldsymbol{w}_{t} - \boldsymbol{w}^{*}\right\|^{2} + 4{\eta}_{t+1}^{2}G^{2} + 4L{\eta}_{t+1}^{2}{\Gamma} \nonumber\\
	&+\frac{{\eta}_{t+1}^{2}}{M^{2}}\sum_{k=1}^{M}{\sigma}_{k}^{2} .
\end{align}

\section{Allocation of channels uses and quantization bits}\label{scheme2}
The convergence analysis in the previous section tacitly assumes that the number of channels $N$ and the number of quantization bits $n$ remain unchanged through all iterations. 
From (\ref{equation_theorem_1}), we can see that as $t$ increases, the term $\left[\prod_{j=t+1}^{T}(1-{\mu}{\eta}_{j})\right]$ becomes larger, which means that the term ${\eta}_{t}^{2}H$ contributes more in the later stage than that in the early stage. Intuitively, we should let $n$  and $N$ vary along the iterations such that $H_t$ is a function of $t$ and decreases as $t$ increases. 

In this section, we will optimize the upper bound of the convergence performance, i.e., the right hand side of (\ref{equation_theorem_1}), over the number of channels $N$ and the number of quantization bits $n$. 
From the right hand side of (\ref{equation_theorem_1}) and the definition of $H$ in (\ref{def_H}), we note that the number of channels $N$ and the number of quantization bits $n$ only affect the transmission loss $d {\Delta}^{2}\sum_{i=1}^{n} 4^{i-1} Z\left(W_{N}^{(i)}\right)$ and the quantization loss $\frac{d}{6} {\Delta}^{2}$, where $\Delta = \frac{B_{max} - B_{min}}{2^{n}-1}$. 
For the simplicity of the optimization problem, we use a constant step size here $\eta$. As a result, the loss function in the optimization can be simplified to ${\eta}^{2}\sum_{t=1}^{T} (1-{\mu}{\eta})^{T-t} {H}_{t}$.  By rearranging the loss function and removing the constant term in ${H}_t$, we obtain the following optimization problem
\begin{align}
P1:\quad	\min _{N_t,n_t} \quad & \sum_{t=1}^{T} (1-{\mu}{\eta})^{T-t} \frac{\frac{1}{6} + \sum_{i=1}^{n_t} 4^{i-1} Z\left(W_{N_t}^{(i)}\right)}{(2^{n_t}-1)^2} \label{Optimization_MSE} \\
	\text {s.t.} \quad & n_t \in \{1,\cdots,N_t\} , \\
	& \sum_{t=1}^{T} N_t \leq T{N}_{ave} \label{Constraint_N_number} , \\
	& N_t \in \{16, 32, 64, 128\}\label{Constraint_N} ,
\end{align}
where constraint (\ref{Constraint_N_number}) means the total number of channels used in the learning procedure is fixed and ${N}_{ave}$ is a constant which stands for the average number of channel uses in each iteration. Constraint (\ref{Constraint_N}) limits the number of channel uses in each iteration to the power of $2$ due to the structure of polar codes.

Before we proceed with the optimization, we note that the Bhattacharyya parameter $Z(W_N^{(l)})$ for BEC, which appears in the above loss function, can only be computed analytically in a recursive way. The absence of explicit closed-form expression hinders the further optimization.
Therefore, we use Gaussian cumulative distribution function (CDF) to approximate  the Bhattacharyya parameters in BEC channel. 
We define
\begin{equation}
	\begin{aligned}\label{gamma}
		\gamma(x) = 0.5 \ast \operatorname{erf}(\frac{x-\mu}{\sigma} ) + 0.5 .
	\end{aligned}
\end{equation}
where
\begin{align}
	\operatorname{erf}(x)&=\frac{2}{\sqrt{\pi}} \int_{0}^{x} e^{-t^{2}} dt
\end{align}
with $\mu$ and $\sigma$ being mean and variance of the Gaussian distribution respectively. We demonstrate an illustrative example of fitting the Bhattacharyya parameters of 64 BECs by $\gamma(x)$ function in Fig.\ref{Fig_fitted} with the erasure probability being 0.2, 0.4, 0.6 and 0.8 respectively.
\begin{figure*}[t]
	\centering
	\begin{subfigure}{0.48\linewidth}
		\centering
		\includegraphics[width=1\linewidth]{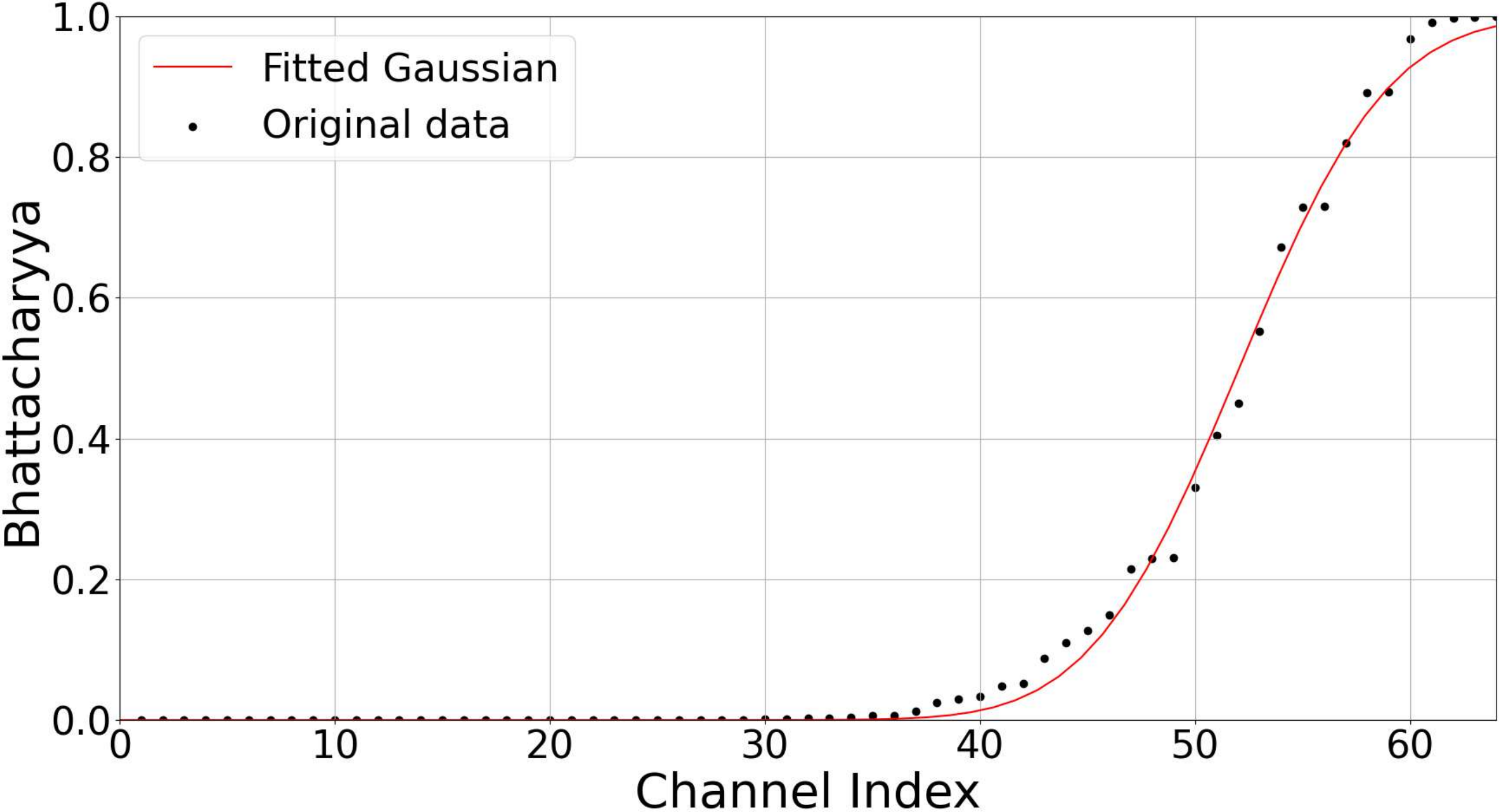}
		\caption{erasure probability 0.2}
		\label{Fig_fitted_02}
	\end{subfigure}
	\centering
	\begin{subfigure}{0.48\linewidth}
		\centering
		\includegraphics[width=1\linewidth]{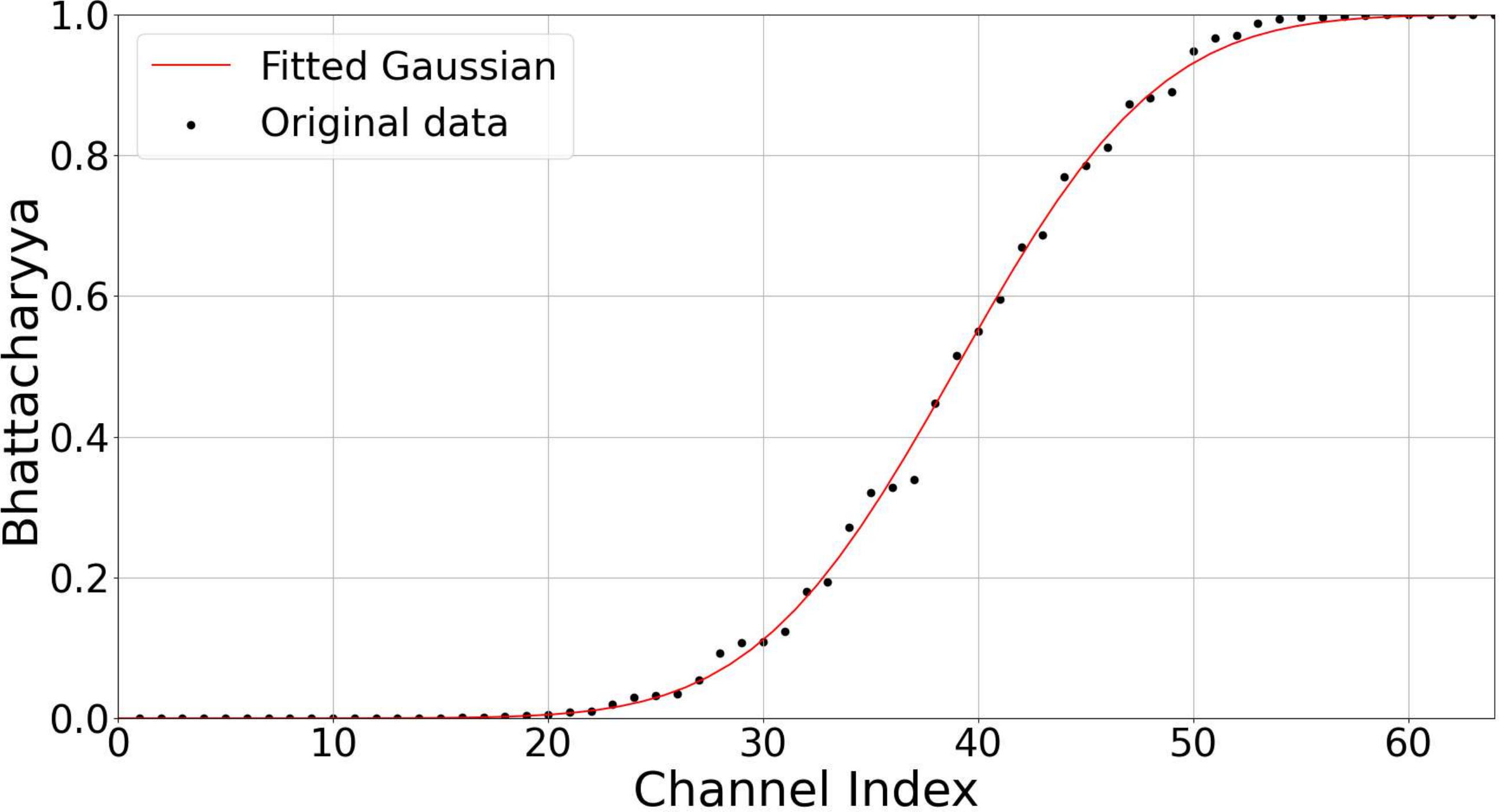}
		\caption{erasure probability 0.4}
		\label{Fig_fitted_04}
	\end{subfigure}
	
	\centering
	\begin{subfigure}{0.48\linewidth}
		\centering
		\includegraphics[width=1\linewidth]{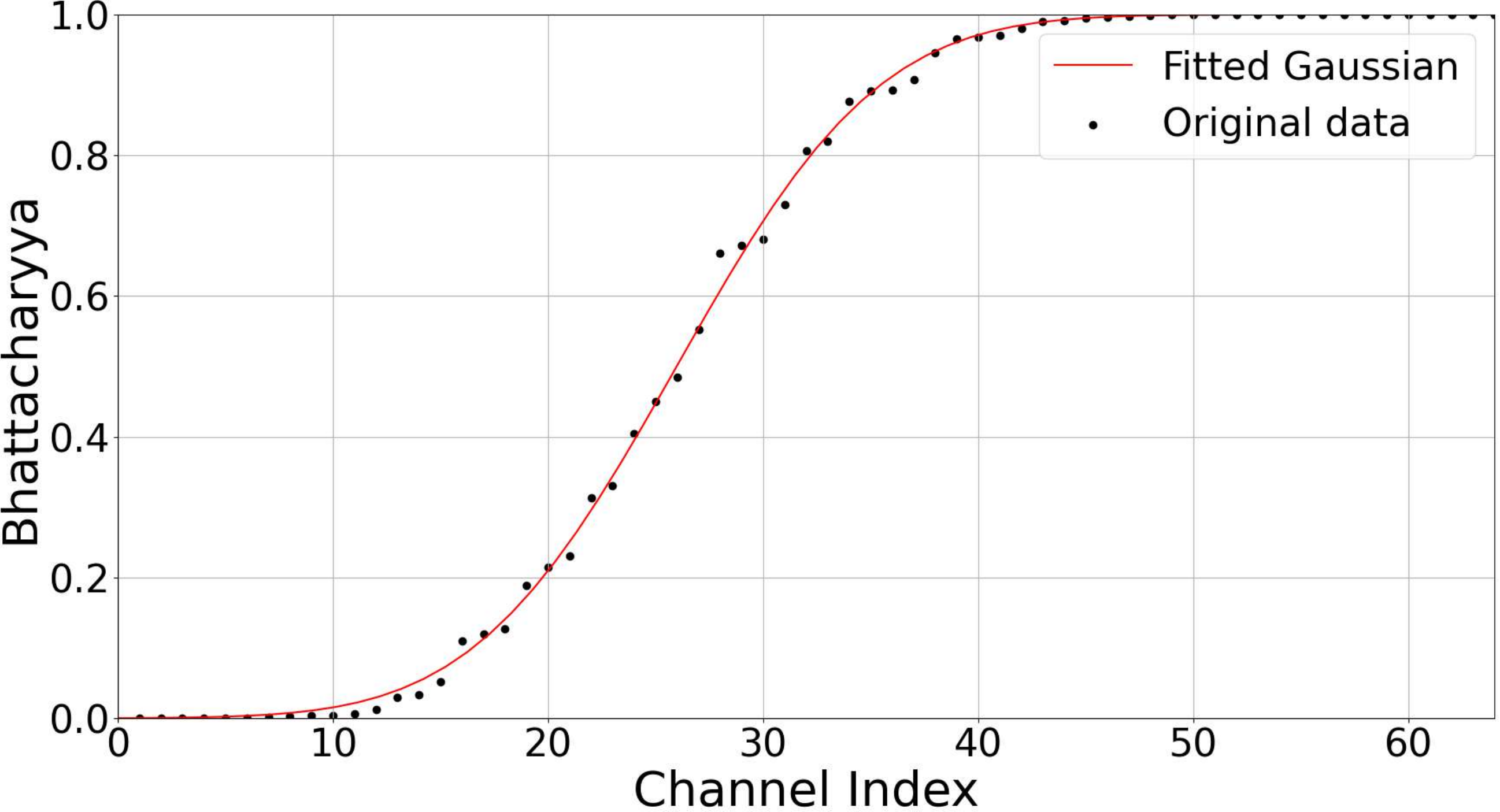}
		\caption{erasure probability 0.6}
		\label{Fig_fitted_06}
	\end{subfigure}
	\centering
	\begin{subfigure}{0.48\linewidth}
		\centering
		\includegraphics[width=1\linewidth]{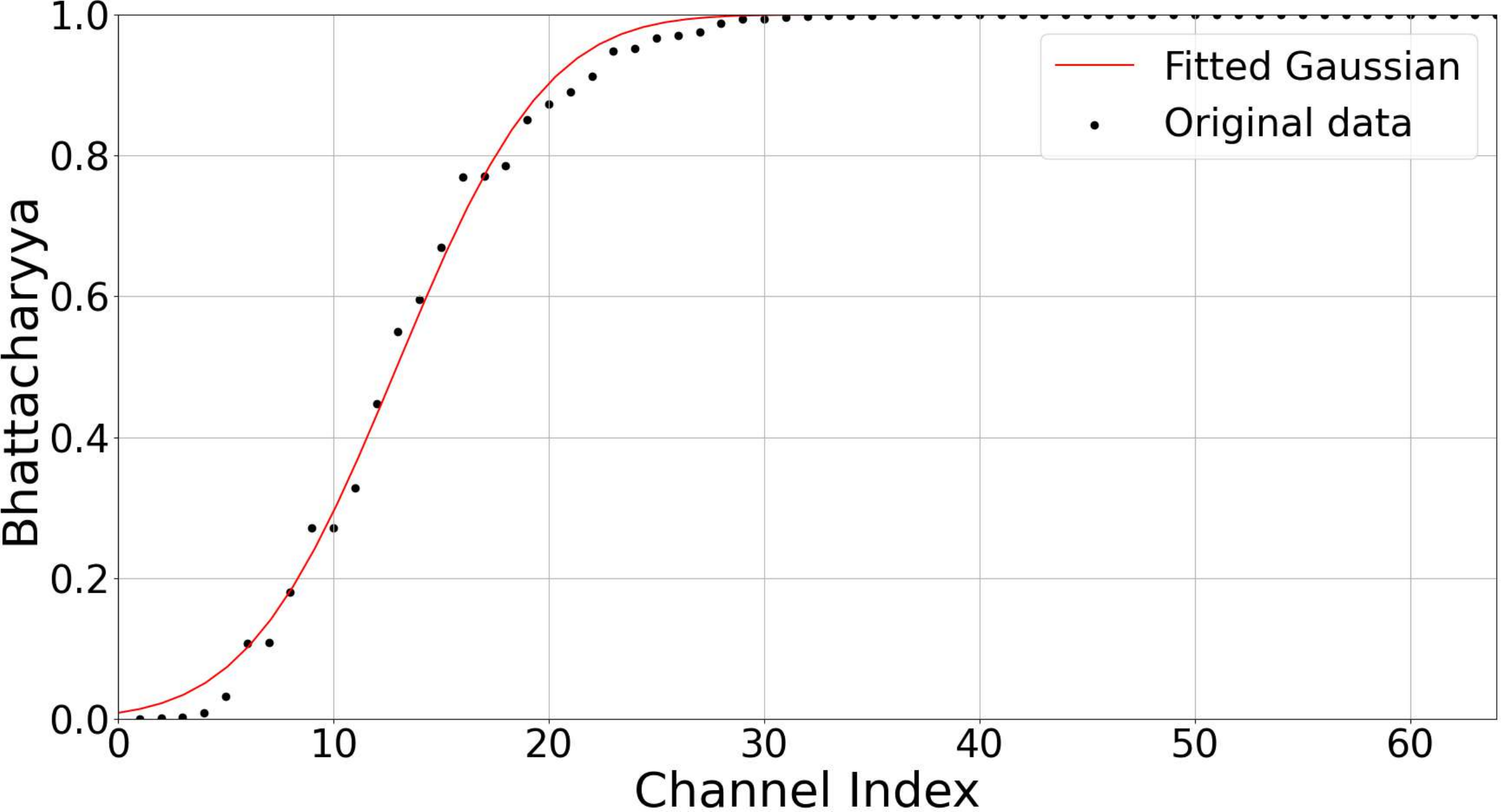}
		\caption{erasure probability 0.8}
		\label{Fig_fitted_08}
	\end{subfigure}
	\caption{Fitting results of 64 BECs.}
	\label{Fig_fitted}
\end{figure*}

Further,  we approximate both parameters $\mu$ and $\sigma$ as linear functions of the block length $N$  as 
\begin{align}
	\mu = a_{\epsilon}N + b_{\epsilon} \\
	\sigma = c_{\epsilon}N + d_{\epsilon}
\end{align}
where the parameters $a_{\epsilon}, b_{\epsilon}, c_{\epsilon}, d_{\epsilon}$  depend on the erasure probability $\epsilon$. We demonstrate the fitting performance of $\mu$ and $\sigma$ for $\epsilon=0.6$ in Fig.\ref{Fig_N_mu_sigma}.
The overall fitting results of  of $\mu$ and $\sigma$ by parameters $a_{\epsilon}, b_{\epsilon}, c_{\epsilon}, d_{\epsilon}$ under different erasure probabilities are listed in Table \ref{Table_abcd}. 
\begin{figure}[t]
	\centering
	\begin{subfigure}{1\linewidth}
		\centering
		\includegraphics[width=1\linewidth]{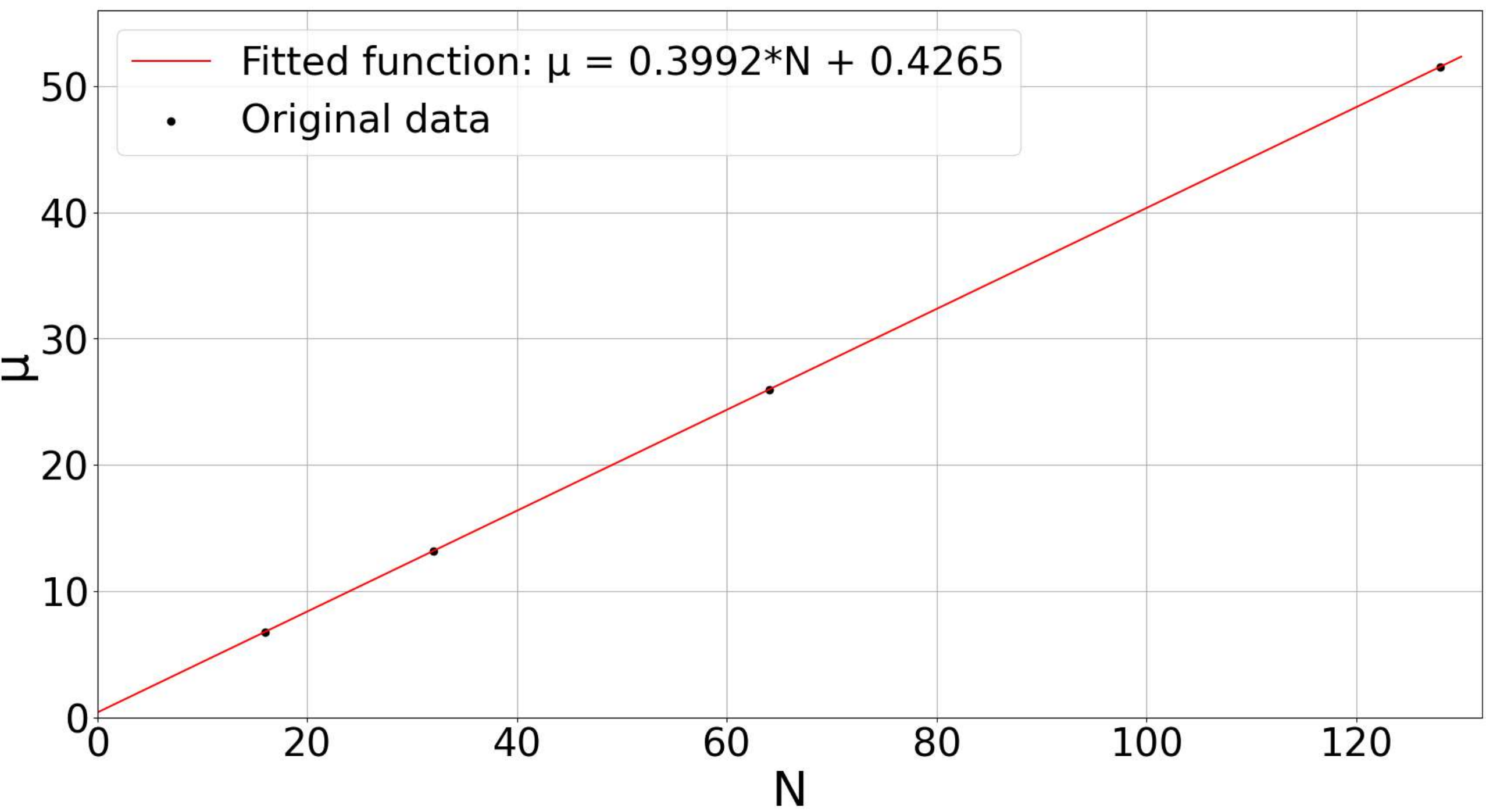}
		\caption{$N$ and $\mu$.}
		\label{Fig_N_mu}
	\end{subfigure}
	
	\centering
	\begin{subfigure}{1\linewidth}
		\centering
		\includegraphics[width=1\linewidth]{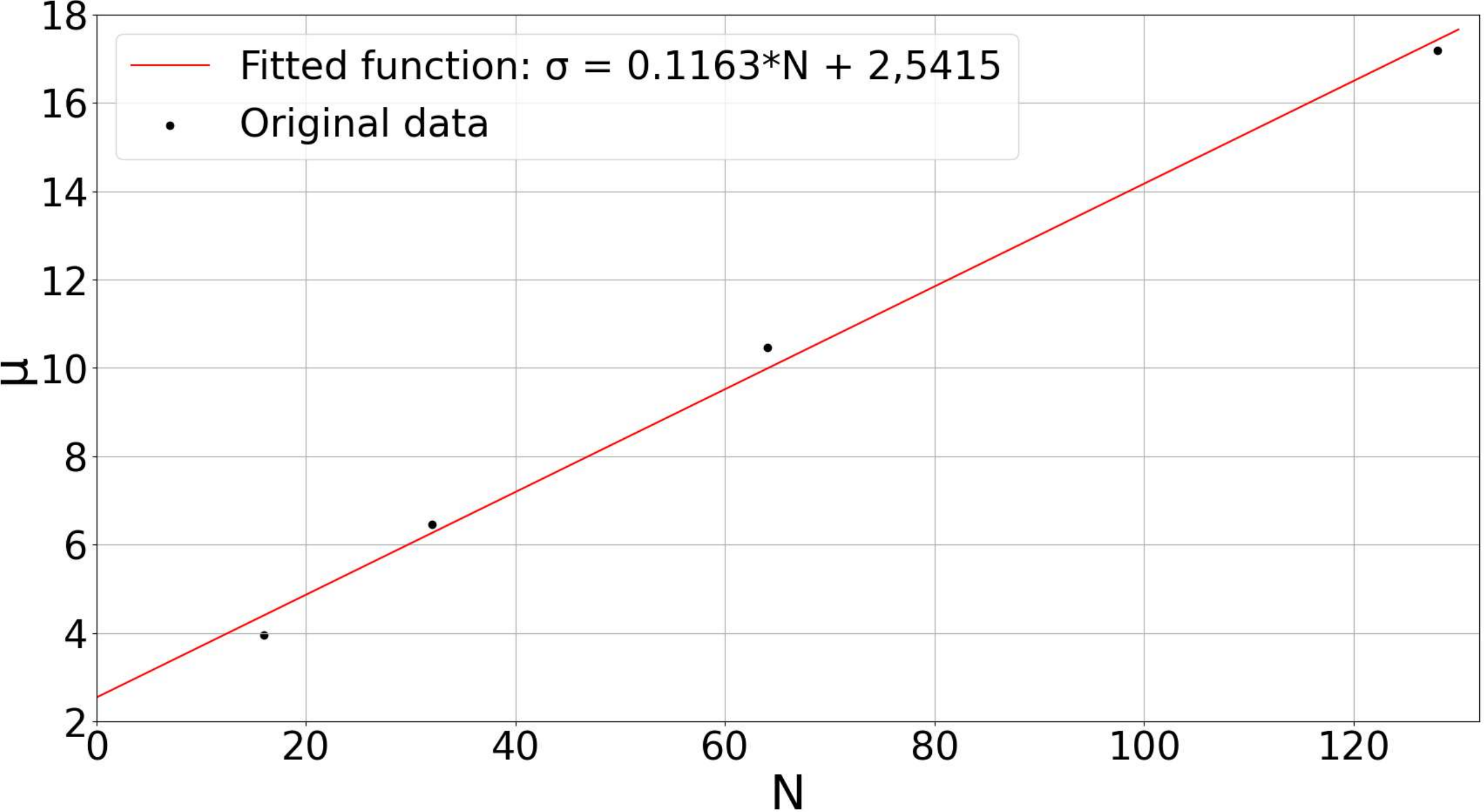}
		\caption{$N$ and $\sigma$.}
		\label{Fig_N_sigma}
	\end{subfigure}
	\caption{The fitted function of $N$ vs $\mu$ and $N$ vs $\sigma$ with $\epsilon=0.6$.}
	\label{Fig_N_mu_sigma}
\end{figure}

\begin{table}[h]
	\caption{Fitting results $a_{\epsilon}, b_{\epsilon}, c_{\epsilon}, d_{\epsilon}$ under different $\epsilon$.}
	\label{Table_abcd}
	\centering
	\fontsize{9}{10.5}\selectfont
	\begin{tabular}{|c|c|c|c|c|}
		\hline
		\textbf{Erasure Probability $\mathbf{\epsilon}$} &\multicolumn{1}{c|}{$\mathbf{a_{\epsilon}}$}                                                                                                     &\multicolumn{1}{c|}{$\mathbf{b_{\epsilon}}$}                                     &\multicolumn{1}{c|}{$\mathbf{c_{\epsilon}}$} 
		&\multicolumn{1}{c|}{$\mathbf{d_{\epsilon}}$}                                     \\ \hline
		\textbf{0.1}  
		& \multicolumn{1}{c|}{\textbf{0.9022}}
		& \multicolumn{1}{c|}{\textbf{0.7252}}    
		& \multicolumn{1}{c|}{\textbf{0.0557}}    
		& \multicolumn{1}{c|}{\textbf{1.3703}} 
		\\ \hline
		\textbf{0.2}  
		& \multicolumn{1}{c|}{\textbf{0.8029}}
		& \multicolumn{1}{c|}{\textbf{0.6396}}    
		& \multicolumn{1}{c|}{\textbf{0.0874}}    
		& \multicolumn{1}{c|}{\textbf{1.8729}} 
		\\ \hline
		\textbf{0.3}  
		& \multicolumn{1}{c|}{\textbf{0.7019}}
		& \multicolumn{1}{c|}{\textbf{0.6178}}    
		& \multicolumn{1}{c|}{\textbf{0.1062}}    
		& \multicolumn{1}{c|}{\textbf{2.2381}} 
		\\ \hline
		\textbf{0.4}  
		& \multicolumn{1}{c|}{\textbf{0.6008}}
		& \multicolumn{1}{c|}{\textbf{0.5765}}    
		& \multicolumn{1}{c|}{\textbf{0.1163}}    
		& \multicolumn{1}{c|}{\textbf{2.5415}} 
		\\ \hline
		\textbf{0.5}  
		& \multicolumn{1}{c|}{\textbf{0.500}}
		& \multicolumn{1}{c|}{\textbf{0.500}}    
		& \multicolumn{1}{c|}{\textbf{0.1176}}    
		& \multicolumn{1}{c|}{\textbf{2.7372}} 
		\\ \hline
		\textbf{0.6}  
		& \multicolumn{1}{c|}{\textbf{0.3992}}
		& \multicolumn{1}{c|}{\textbf{0.4265}}    
		& \multicolumn{1}{c|}{\textbf{0.1163}}    
		& \multicolumn{1}{c|}{\textbf{2.5415}} 
		\\ \hline
		\textbf{0.7}  
		& \multicolumn{1}{c|}{\textbf{0.2981}}
		& \multicolumn{1}{c|}{\textbf{0.3813}}    
		& \multicolumn{1}{c|}{\textbf{0.1062}}    
		& \multicolumn{1}{c|}{\textbf{2.2381}} 
		\\ \hline
		\textbf{0.8}  
		& \multicolumn{1}{c|}{\textbf{0.1973}}
		& \multicolumn{1}{c|}{\textbf{0.3523}}    
		& \multicolumn{1}{c|}{\textbf{0.0874}}    
		& \multicolumn{1}{c|}{\textbf{1.8729}} 
		\\ \hline
		\textbf{0.9}  
		& \multicolumn{1}{c|}{\textbf{0.0979}}
		& \multicolumn{1}{c|}{\textbf{0.2698}}    
		& \multicolumn{1}{c|}{\textbf{0.0557}}    
		& \multicolumn{1}{c|}{\textbf{1.3731}} 
		\\ \hline
	\end{tabular}
\end{table}

Further numerical fitting performance of the Bhattacharyya parametersfor different block length and erasure probabilities, in terms of sum of Squares due to Error(SSE), R-squared($R^{2}$), Adjusted R-squared($AR^{2}$), Root Mean Squared Error(RMSE), can be found in Table \ref{Table_fit_parameters}. In all the above cases, we can see that the Bhattacharyya parameters can be approximated by the Gaussian distribution function $\gamma(x)$ very accurately.

\begin{table*}[t]
	\caption{Fitting performance for different number of channel uses.}
	\label{Table_fit_parameters}
	\centering
	\fontsize{9}{10.5}\selectfont
	\begin{tabular}{|l|cccc|cccc|cccc|}
		\hline
		\textbf{N}            
		& \multicolumn{4}{c|}{\textbf{32}}
		& \multicolumn{4}{c|}{\textbf{64}}
		& \multicolumn{4}{c|}{\textbf{128}}
		\\ \hline
		\textbf{Erasure Probability $\mathbf{\epsilon}$}
		& \multicolumn{1}{c|}{\textbf{0.2}}   
		& \multicolumn{1}{c|}{\textbf{0.4}}    
		& \multicolumn{1}{c|}{\textbf{0.6}}    
		& \multicolumn{1}{c|}{\textbf{0.8}}
		
		& \multicolumn{1}{c|}{\textbf{0.2}}   
		& \multicolumn{1}{c|}{\textbf{0.4}}    
		& \multicolumn{1}{c|}{\textbf{0.6}}    
		& \multicolumn{1}{c|}{\textbf{0.8}}
		
		& \multicolumn{1}{c|}{\textbf{0.2}}   
		& \multicolumn{1}{c|}{\textbf{0.4}}    
		& \multicolumn{1}{c|}{\textbf{0.6}}    
		& \multicolumn{1}{c|}{\textbf{0.8}}
		\\ \hline
		$\mathbf{\mu = a_{\epsilon}N + b_{\epsilon}}$
		& \multicolumn{1}{c|}{\textbf{26.332}}  
		& \multicolumn{1}{c|}{\textbf{19.802}}  
		& \multicolumn{1}{c|}{\textbf{13.201}}  
		& \multicolumn{1}{c|}{\textbf{6.666}}
		                
		& \multicolumn{1}{c|}{\textbf{52.02}}  
		& \multicolumn{1}{c|}{\textbf{39.03}}  
		& \multicolumn{1}{c|}{\textbf{25.98}}  
		& \multicolumn{1}{c|}{\textbf{12.98}}
		
		& \multicolumn{1}{c|}{\textbf{103.411}}  
		& \multicolumn{1}{c|}{\textbf{77.479}}  
		& \multicolumn{1}{c|}{\textbf{51.524}}  
		& \multicolumn{1}{c|}{\textbf{25.607}}
		\\ \hline
		$\mathbf{\sigma = c_{\epsilon}N + d_{\epsilon}}$
		& \multicolumn{1}{c|}{\textbf{4.670}}  
		& \multicolumn{1}{c|}{\textbf{6.263}}  
		& \multicolumn{1}{c|}{\textbf{6.263}}  
		& \multicolumn{1}{c|}{\textbf{4.670}}
		     
		& \multicolumn{1}{c|}{\textbf{7.467}}  
		& \multicolumn{1}{c|}{\textbf{9.985}}  
		& \multicolumn{1}{c|}{\textbf{9.985}}  
		& \multicolumn{1}{c|}{\textbf{7.467}}
		
		& \multicolumn{1}{c|}{\textbf{13.060}}  
		& \multicolumn{1}{c|}{\textbf{17.428}}  
		& \multicolumn{1}{c|}{\textbf{17.428}}  
		& \multicolumn{1}{c|}{\textbf{13.060}}
		\\ \hline
		\textbf{SSE}       
		& \multicolumn{1}{c|}{\textbf{0.021}}  
		& \multicolumn{1}{c|}{\textbf{0.007}}  
		& \multicolumn{1}{c|}{\textbf{0.007}}  
		& \multicolumn{1}{c|}{\textbf{0.021}}
		       
		& \multicolumn{1}{c|}{\textbf{0.024}} 
		& \multicolumn{1}{c|}{\textbf{0.010}} 
		& \multicolumn{1}{c|}{\textbf{0.010}} 
		& \multicolumn{1}{c|}{\textbf{0.024}}
		
		& \multicolumn{1}{c|}{\textbf{0.028}}  
		& \multicolumn{1}{c|}{\textbf{0.013}}  
		& \multicolumn{1}{c|}{\textbf{0.013}}  
		& \multicolumn{1}{c|}{\textbf{0.028}}
		\\ \hline
		$\mathbf{R^{2}}$      
		& \multicolumn{1}{c|}{\textbf{0.994}}  
		& \multicolumn{1}{c|}{\textbf{0.999}}  
		& \multicolumn{1}{c|}{\textbf{0.999}}  
		& \multicolumn{1}{c|}{\textbf{0.994}}
		      
		& \multicolumn{1}{c|}{\textbf{0.997}} 
		& \multicolumn{1}{c|}{\textbf{0.999}} 
		& \multicolumn{1}{c|}{\textbf{0.999}} 
		& \multicolumn{1}{c|}{\textbf{0.997}}
		
		& \multicolumn{1}{c|}{\textbf{0.998}}  
		& \multicolumn{1}{c|}{\textbf{0.999}}  
		& \multicolumn{1}{c|}{\textbf{0.999}}  
		& \multicolumn{1}{c|}{\textbf{0.998}}
		\\ \hline
		\textbf{$\mathbf{AR^{2}}$}
		& \multicolumn{1}{c|}{\textbf{0.993}}  
		& \multicolumn{1}{c|}{\textbf{0.999}}  
		& \multicolumn{1}{c|}{\textbf{0.999}}  
		& \multicolumn{1}{c|}{\textbf{0.993}}
		         
		& \multicolumn{1}{c|}{\textbf{0.997}} 
		& \multicolumn{1}{c|}{\textbf{0.999}} 
		& \multicolumn{1}{c|}{\textbf{0.999}} 
		& \multicolumn{1}{c|}{\textbf{0.997}}
		
		& \multicolumn{1}{c|}{\textbf{0.998}}  
		& \multicolumn{1}{c|}{\textbf{0.999}}  
		& \multicolumn{1}{c|}{\textbf{0.999}}  
		& \multicolumn{1}{c|}{\textbf{0.998}}
		\\ \hline
		\textbf{RMSE}  
		& \multicolumn{1}{c|}{\textbf{0.026}}  
		& \multicolumn{1}{c|}{\textbf{0.015}}  
		& \multicolumn{1}{c|}{\textbf{0.015}}  
		& \multicolumn{1}{c|}{\textbf{0.026}}
		             
		& \multicolumn{1}{c|}{\textbf{0.020}} 
		& \multicolumn{1}{c|}{\textbf{0.012}} 
		& \multicolumn{1}{c|}{\textbf{0.012}} 
		& \multicolumn{1}{c|}{\textbf{0.020}}
		
		& \multicolumn{1}{c|}{\textbf{0.015}}  
		& \multicolumn{1}{c|}{\textbf{0.010}}  
		& \multicolumn{1}{c|}{\textbf{0.010}}  
		& \multicolumn{1}{c|}{\textbf{0.015}}
		\\ \hline
	\end{tabular}
\end{table*}

\begin{figure}[t]
	\centering
	\includegraphics[width=1\linewidth]{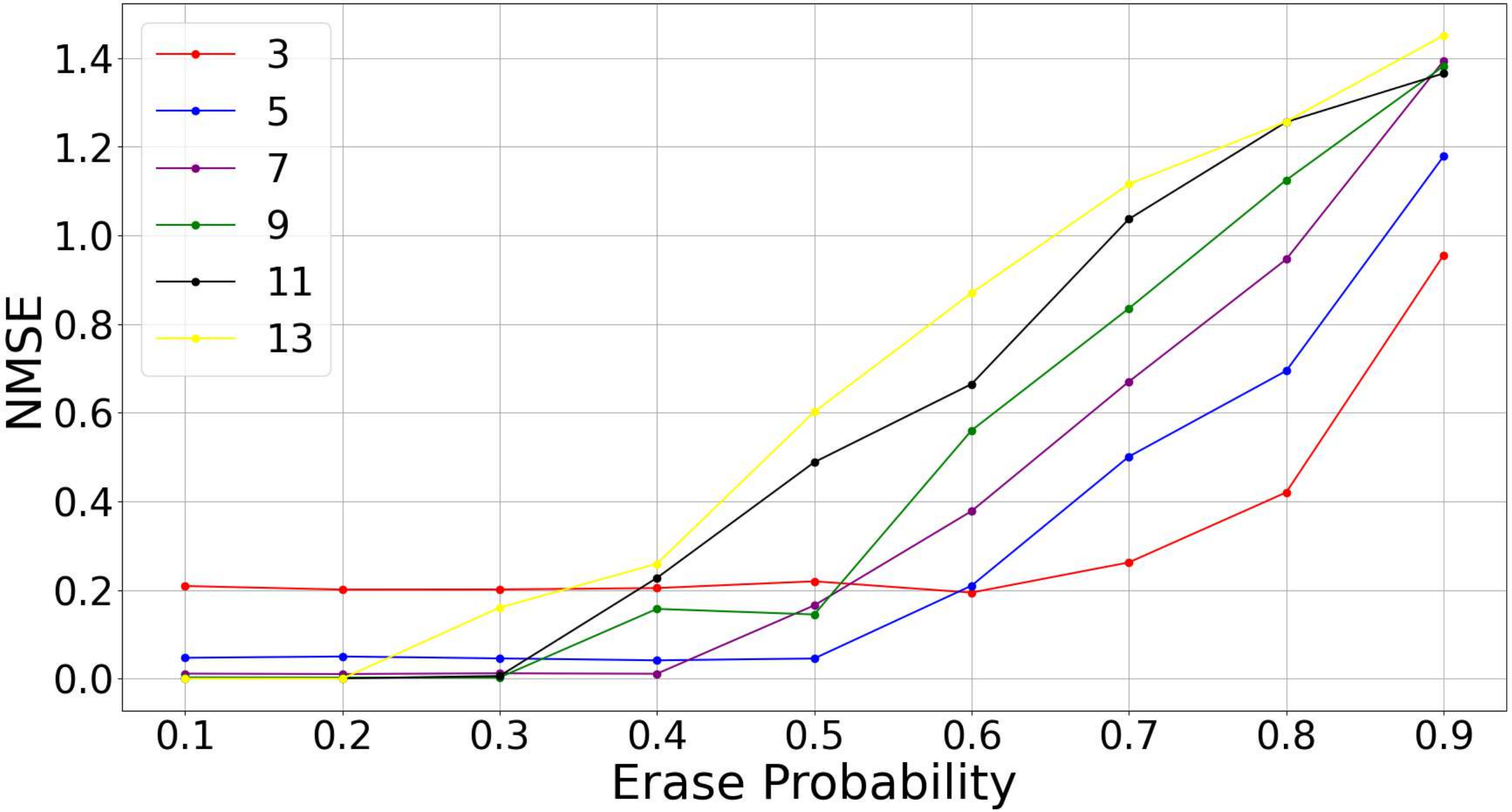}
	\caption{The normalized MSE with N=32.}
	\label{NMSE_32}
\end{figure}
\subsection{Optimization of number of quantization bits $n_t$}
For the fixed $N_t$, when number of quantization bits $n_t$ increases, the quantization loss decreases. At the same time, the channels used to transmit the extra quantization bits become less reliable. The conflicting effects of $n_t$ is illustrated in term of normalized mean square error (NMSE), which is defined in the following equation, in Fig.\ref{NMSE_32}.
\begin{equation}
	\begin{aligned}
		NMSE = \frac{\left\| Q(\boldsymbol {g}_{t+1,j}^{k}) - \tilde{Q}(\boldsymbol {g}_{t+1,j}^{k}) \right\|^{2}}{\left\| Q(\boldsymbol {g}_{t+1,j}^{k}) \right\|^{2}}.
	\end{aligned}
\end{equation}
where $Q(\boldsymbol {g}_{t+1,j}^{k})$ is the quantized gradient prior to  the channel coding and $\tilde{Q}(\boldsymbol {g}_{t+1,j}^{k})$ is the reconstruction of the gradient based on the channel coding output. We use NMSE to quantitatively evaluate the effect of $n$, the number of quantization bits, for a given polar code.
 We note that when the erasure probability of the channels is low, a larger $n$ yields lower NMSE and when erasure probability increases, a smaller $n$ gives a better NMSE.

Based on the above analysis, we will take a sequential approach to the optimization problem $P1$. We fix the parameter $N_t$ and find an optimal $n_t$ for the problem $P1$. 

For example, assume $N_t=32$ and $\epsilon=0.6$, and the optimization problem $P1$ can be simplified as
\begin{align}
	\min _{n_t} \quad & \frac{\frac{1}{6} + \sum_{i=1}^{n_t} 4^{i-1} [\frac{1}{2} \operatorname{erf}(\frac{i-13.22}{6.456} ) + \frac{1}{2}]}{(2^{n_t}-1)^2} \\
	\text {s.t.} \quad & n_t \in \{1,\cdots,N\} .
\end{align}

\begin{figure}[t]
	\centering
	\includegraphics[width=1\linewidth]{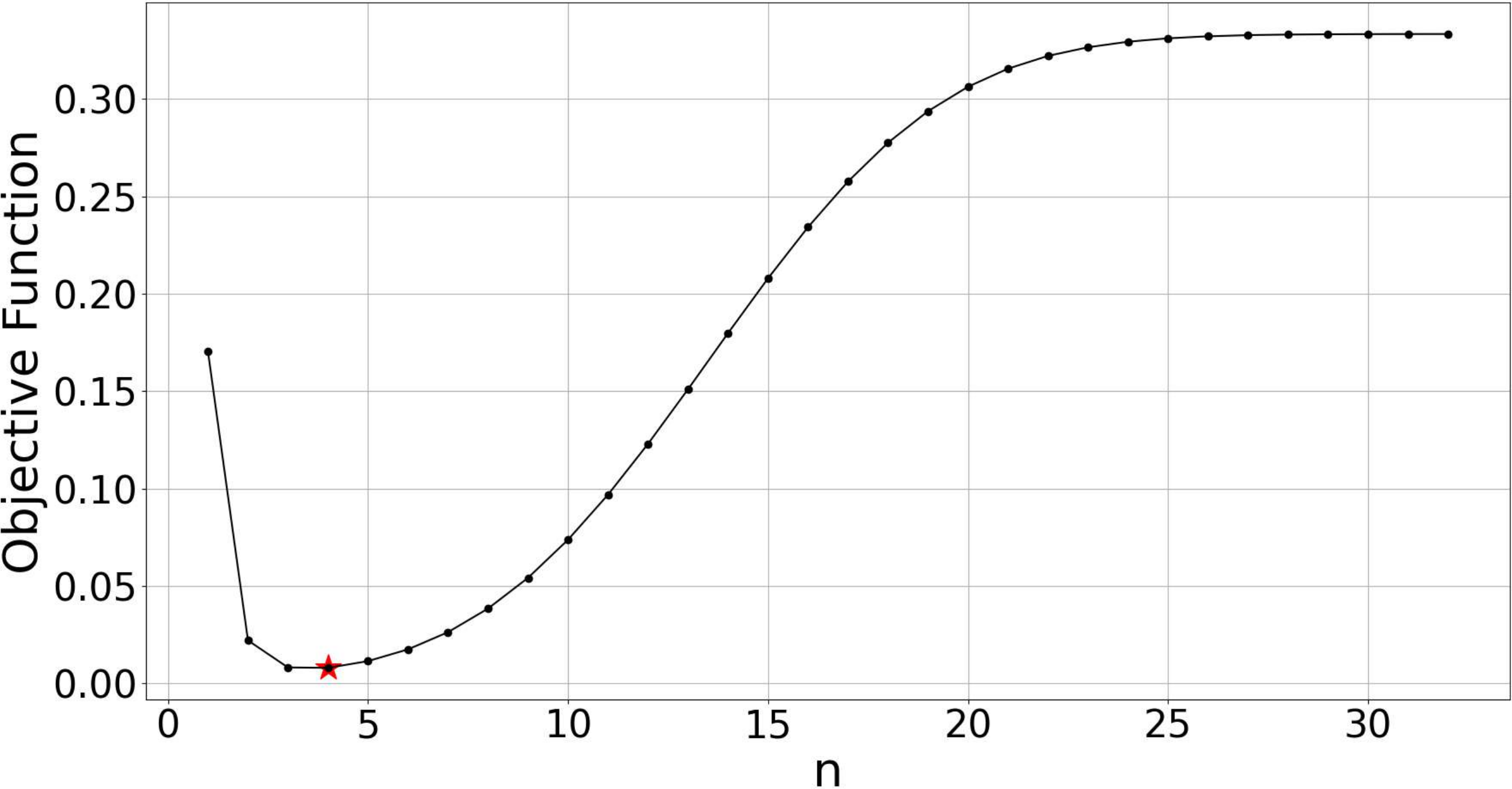}
	\caption{The curve of objective function.}
	\label{Fig_optimization}
\end{figure}
The loss function in the above example is depicted in  Fig.\ref{Fig_optimization}.
We note that the objective function first decreases and then increases with $n_t$ increasing, and the optimal $n_t$ is $4$. 
 A further inspection on the curve of the objective function shows that the value of objective function is about the same if we select $n_t$ from $\{3, 4, 5\}$, which we deem as ``good'' choices. 
If we take $\epsilon=0.1,\dots,0.8$ and take $N$ for $16, 32, 64, 128$ in the optimization problem. 
The corresponding curves have the similar shapes as Fig.\ref{Fig_optimization}. Meanwhile when $N$ increases, the size of the set of ``good'' choices increases. We conclude the above observations in Table \ref{Table_(N,eps)_n}  for various erasure probability $\epsilon$ and block length $N$.

\begin{table}[h]
	\caption{``good'' choices of $n$ under different $N$  and $\epsilon$}
	\label{Table_(N,eps)_n}
	\centering
	\fontsize{9}{10.5}\selectfont
	\begin{tabular}{|c|c|c|c|c|}
		\hline
		$\mathbf{\epsilon}$ $\backslash$ \textbf{N} & \textbf{16}      & \textbf{32}          & \textbf{64}              & \textbf{128}              \\ \hline
		\textbf{0.1} & \textbf{\{3,4,...,12\}} & \textbf{\{3,4,...,23\}} & \textbf{\{3,4,...,51\}} & \textbf{\{3,4,...,102\}} \\ \hline
		\textbf{0.2} & \textbf{\{3,4,...,8\}} & \textbf{\{3,4,...,19\}} & \textbf{\{3,4,...,39\}} & \textbf{\{3,4,...,77\}} \\ \hline
		\textbf{0.3} & \textbf{\{3,4,5,6\}} & \textbf{\{3,4,...,13\}} & \textbf{\{3,4,...,26\}} & \textbf{\{3,4,...,60\}} \\ \hline
		\textbf{0.4} & \textbf{\{3,4,5\}} & \textbf{\{3,4,...,8\}} & \textbf{\{3,4,...,18\}} & \textbf{\{3,4,...,45\}} \\ \hline
		\textbf{0.5} & \textbf{\{2,3,4\}} & \textbf{\{3,4,5\}} & \textbf{\{3,4,...,11\}} & \textbf{\{3,4,...,29\}} \\ \hline
		\textbf{0.6} & \textbf{\{2,3,4\}} & \textbf{\{3,4,5\}} & \textbf{\{3,4,5,6,7\}} & \textbf{\{3,4,5,...,18\}} \\ \hline
		\textbf{0.7} & \textbf{\{2,3\}} & \textbf{\{2,3,4,5\}} & \textbf{\{3,4,5,6\}} & \textbf{\{3,4,...,9\}} \\ \hline
		\textbf{0.8} & \textbf{\{2\}} & \textbf{\{2,3\}} & \textbf{\{3,4\}} & \textbf{\{3,4,5,6\}} \\ \hline
	\end{tabular}
\end{table}




From the above observation summarized in Table \ref{Table_(N,eps)_n}, we note that given the erasure probability $\epsilon$, we can find a constant $n$ which is ``good'' for all the block length $N = 16, 32, 64, 128$. 
As the erasure probability $\epsilon$ increases, the number of ``good'' choices of $n$ decreases, which makes the constant harder to find. To an extreme, say $\epsilon=0.8$, although there is no constant $n$ ``good'' for all $N = 16, 32, 64, 128$, we can take $n=3$ as a compromise.

So the conclusion is that the parameter $n_t$ is only a function of $\epsilon$ and it does not depend on either the block length $N$ or the iteration $t$.  In the next subsubsection, we will optimize $N_t$ by keeping $n$ fixed.

\subsection{Optimization of the block length $N_t$}

According to the analysis in the previous subsection, for a given erasure probability $\epsilon$, we fix the parameter $n$, and the optimization problem $P_1$ can be simplified as
\begin{align}
P2\quad	\min _{\{N_t\}} \quad & \sum_{t=1}^{T} (1-{\mu}{\eta})^{T-t} \left[\frac{1}{6} + \sum_{i=1}^{n} 4^{i-1} Z\left(W_{N_t}^{(i)}\right)\right] \label{optimization_Nt}\\
	\text {s.t.} \quad & \sum_{t=1}^{T} N_t \leq T{N}_{ave} , \\
	& N_t \in \{16, 32, 64, 128\}.
\end{align}

By substituting the approximation of $Z\left(W_{N_t}^{(i)}\right)$, the optimization problem is simplified as follows
%
\begin{align}
P3\quad	\min _{\{P_t\}} \quad & \sum_{t=1}^{T} (1-{\mu}{\eta})^{T-t} \left[\frac{1}{6} + 
	\sum_{i=1}^{n} 4^{i-1} \left[\frac{1}{2} G(P_t)  +\frac{1}{2}\right]\right] \\
	\text {s.t.} \quad & \sum_{t=1}^{T} {2}^{P_t} \leq T{N}_{ave} , \\
	& P_t \in \mathbb Z,\quad 4 \leq P_t \leq 7   , \quad \forall t \in\{1, \ldots, T\} , \\
	& G(N_t) = \operatorname{erf}\left(\frac{i-a_{\epsilon}\ast{2}^{P_t} - b_{\epsilon}}{c_{\epsilon}\ast{2}^{P_t} + d_{\epsilon}}\right)
\end{align}

The above optimization problem is a mixed-integer nonlinear programming (MINLP) problem, which can be solved by particle swarm optimization (PSO) algorithm.

To show the advantage of the proposed solution, we compare it to the benchmark scheme which uses a constant block length through all iteration. We set the number of iterations $T=35$.

\textbf{Benchmark:} The benchmark scheme uses a fixed block length $N=16$.

\textbf{Proposed:} The proposed scheme uses the solution of the optimization problem (\ref{Optimization_MSE}) where we set the average block length $N_{ave}=16$.

We use the second term in the upper bound of the convergence analysis $\frac{L}{2}\sum_{t=1}^{T}\left[\prod_{j=t+1}^{T}(1-{\mu}{\eta}_{j})\right]{\eta}_{t}^{2}H_t$ in Theorem \ref{theorem_convergence} as a criterion to compare the performance of the two schemes. To further demonstrate the difference between two schemes, we plot the the accumulations in the above criterion iteration by iteration as $y=\frac{L}{2}\sum_{t=1}^{t'}\left[\prod_{j=t+1}^{T}(1-{\mu}{\eta}_{j})\right]{\eta}_{t}^{2}H_t$ for $t' \in \{1,\cdots,T\}$ for two schemes  in Fig.\ref{Fig_compare_upper_bound} where $\epsilon=0.6$. The superiority of the proposed scheme becomes apparent in the later stage of iteration. This is intuitively expected, as the term $\left[\prod_{j=t+1}^{T}(1-{\mu}{\eta}_{j})\right]{\eta}_{t}^{2}H_t$  undergoes weaker suppression in this stage, which naturally motivates our proposed scheme to allocate additional resources to the this stage.

\begin{figure}[t]
	\centering
	\includegraphics[width=1\linewidth]{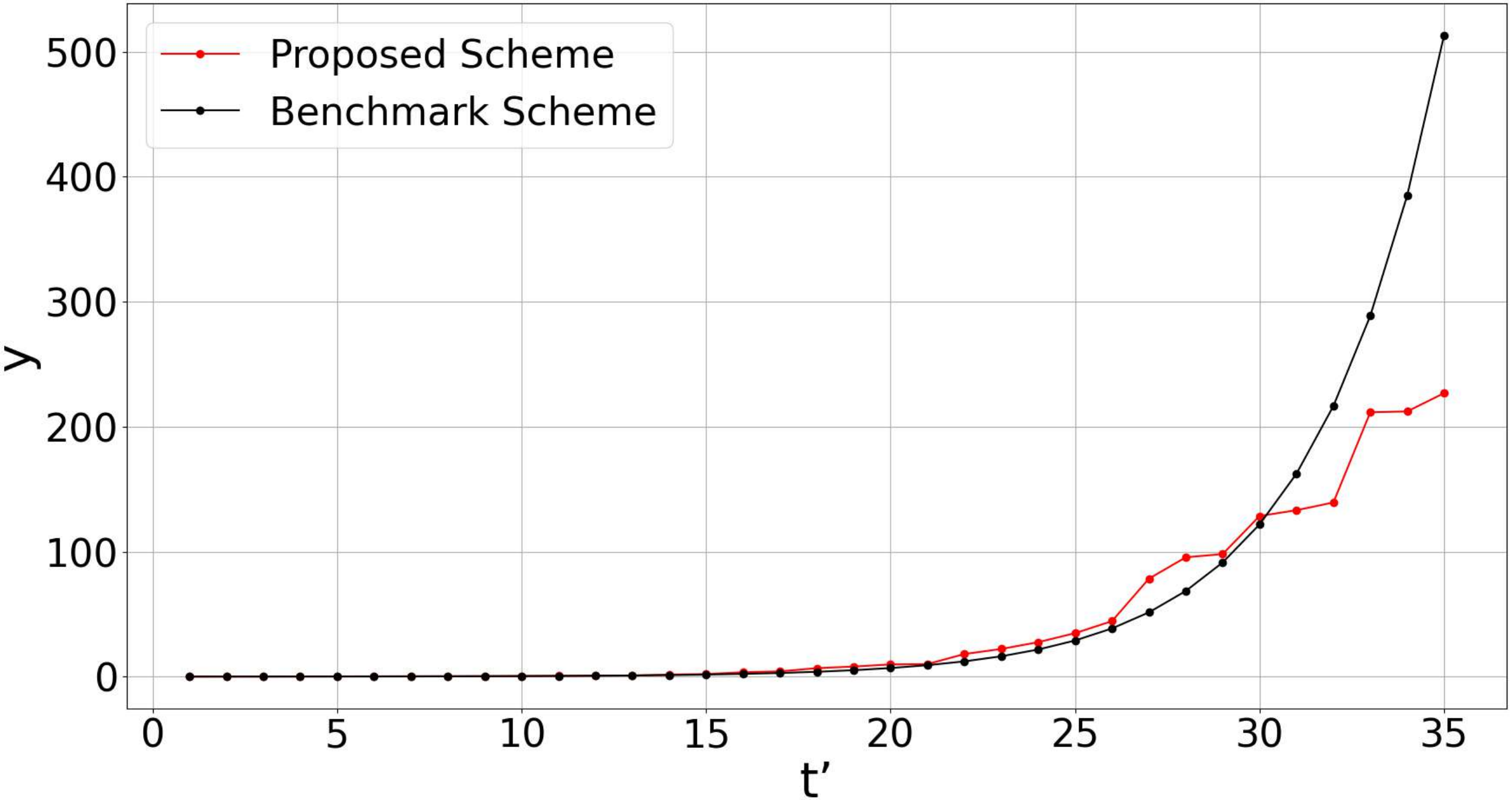}
	\caption{The convergence upper bound of the proposed scheme and benchmark scheme.}
	\label{Fig_compare_upper_bound}
\end{figure}

In another demonstration, we use the number of channel uses instead of the number of iterations as the horizontal coordinate, to better reflect the convergence speed in term of time, and convergence upper bound as the vertical coordinate. 
As shown in Fig.\ref{Fig_Compare_FL}, during $35\times 16$ channel uses, the proposed channel allocation scheme achieves the convergence better and faster than the constant block length benchmark. The reason is that the proposed channel allocation scheme use shorter block length for the early stage of the iterations, which yields a faster convergence, a phenomenon not obvious in Fig. \ref{Fig_compare_upper_bound} where number of iterations is used as horizontal coordinate, and a longer block length for the later stage of the iterations, which gives a better convergence especially with large erasure probability $\epsilon$. 

\begin{figure*}[h]
	\centering
	\begin{subfigure}{0.492\linewidth}
		\centering
		\includegraphics[width=1\linewidth]{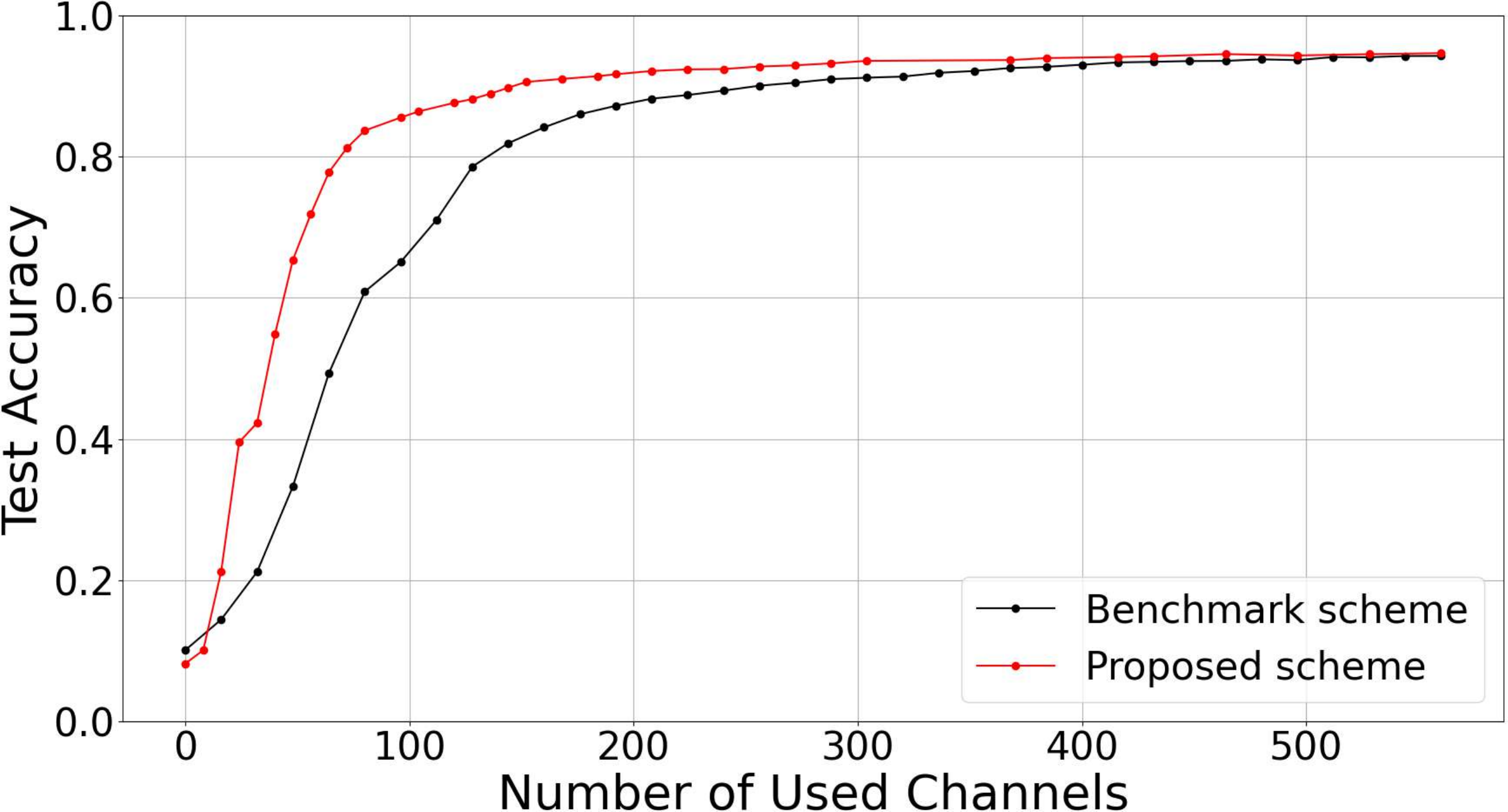}
		\caption{erasure probability 0.2}
		\label{Fig_Compare_FL_02}
	\end{subfigure}
	\centering
	\begin{subfigure}{0.492\linewidth}
		\centering
		\includegraphics[width=1\linewidth]{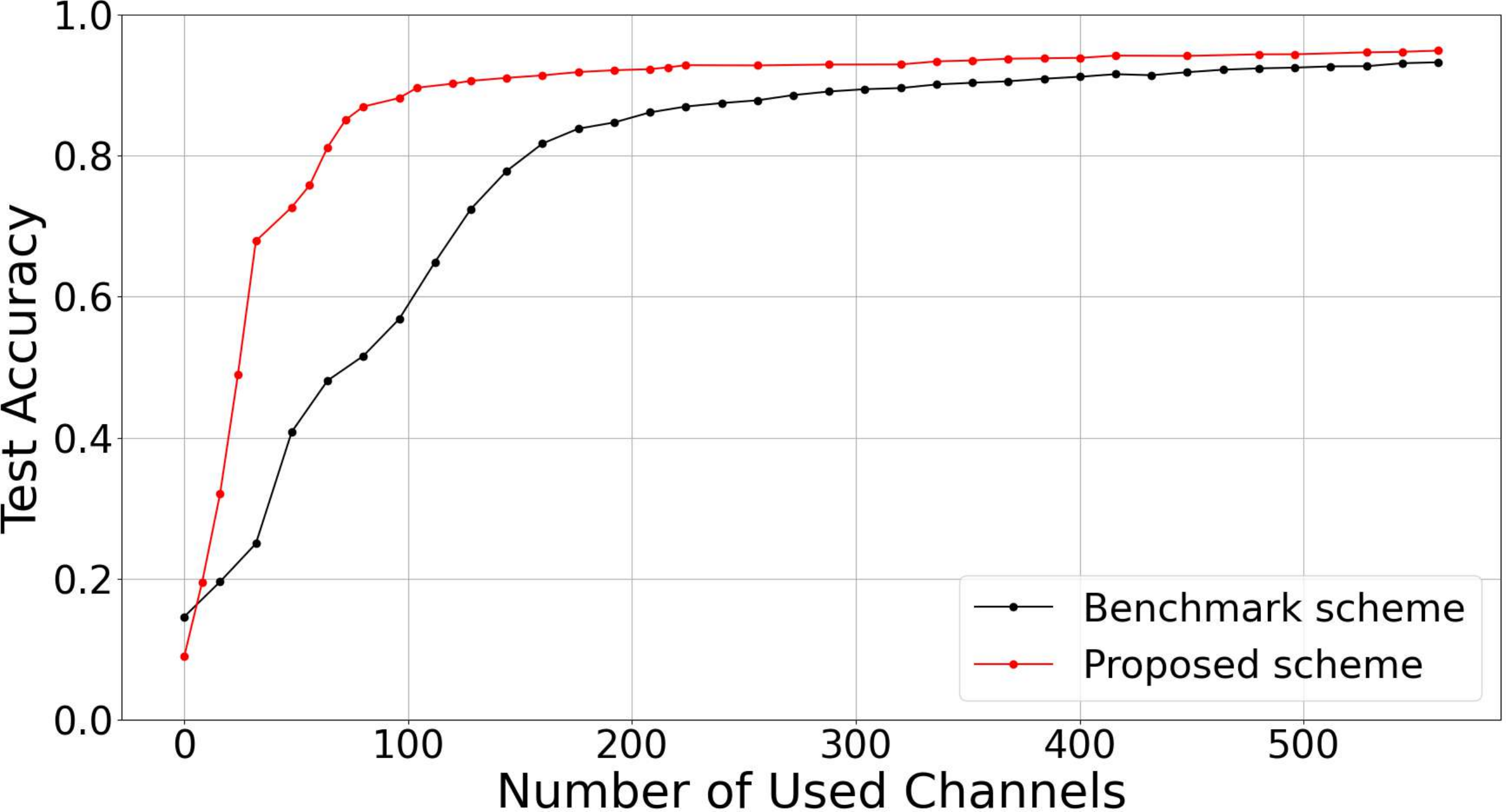}
		\caption{erasure probability 0.6}
		\label{Fig_Compare_FL_06}
	\end{subfigure}
	
	\centering
	\begin{subfigure}{0.492\linewidth}
		\centering
		\includegraphics[width=1\linewidth]{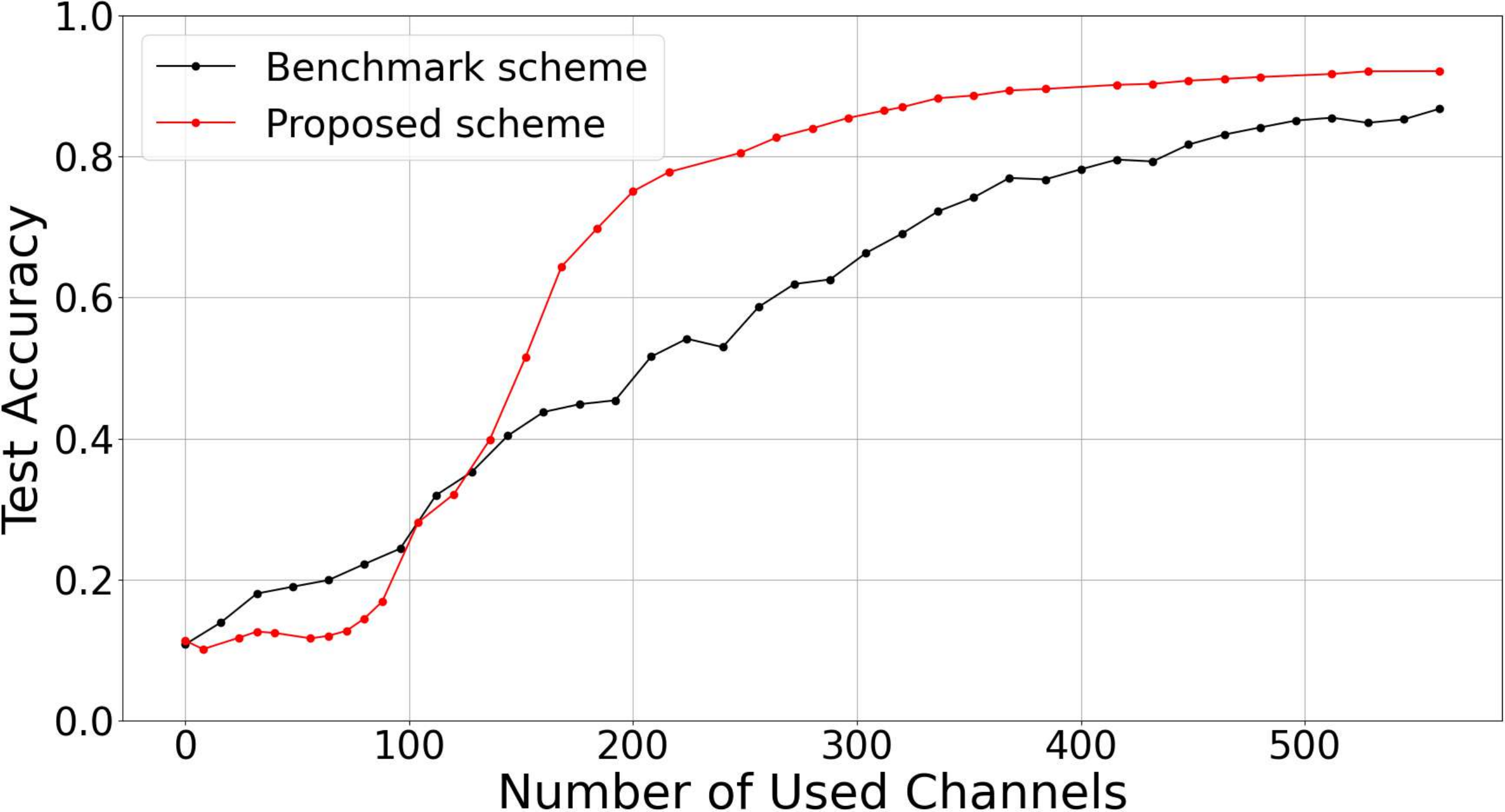}
		\caption{erasure probability 0.7}
		\label{Fig_Compare_FL_07}
	\end{subfigure}
	\centering
	\begin{subfigure}{0.492\linewidth}
		\centering
		\includegraphics[width=1\linewidth]{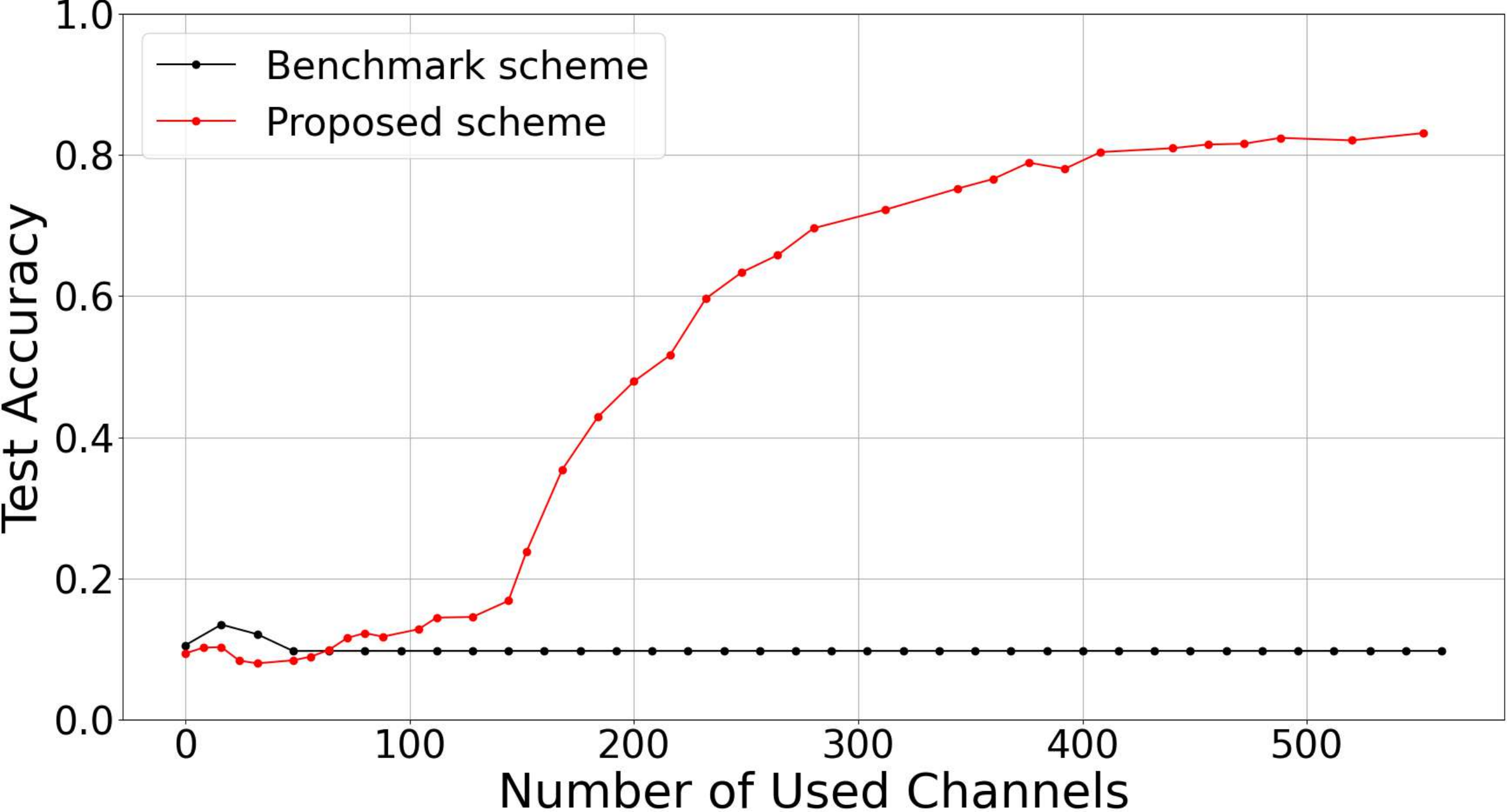}
		\caption{erasure probability 0.8}
		\label{Fig_Compare_FL_08}
	\end{subfigure}
	\caption{The test accuracy of different schemes.}
	\label{Fig_Compare_FL}
\end{figure*}

\section{Numerical Results}\label{section4}
In this section, we consider an image classification task in the following
experiments. All programs are implemented in python 3.8 and Pytorch.



\begin{figure*}[t]
	\centering
	\begin{subfigure}{0.32\linewidth}
		\centering
		\includegraphics[width=1\linewidth]{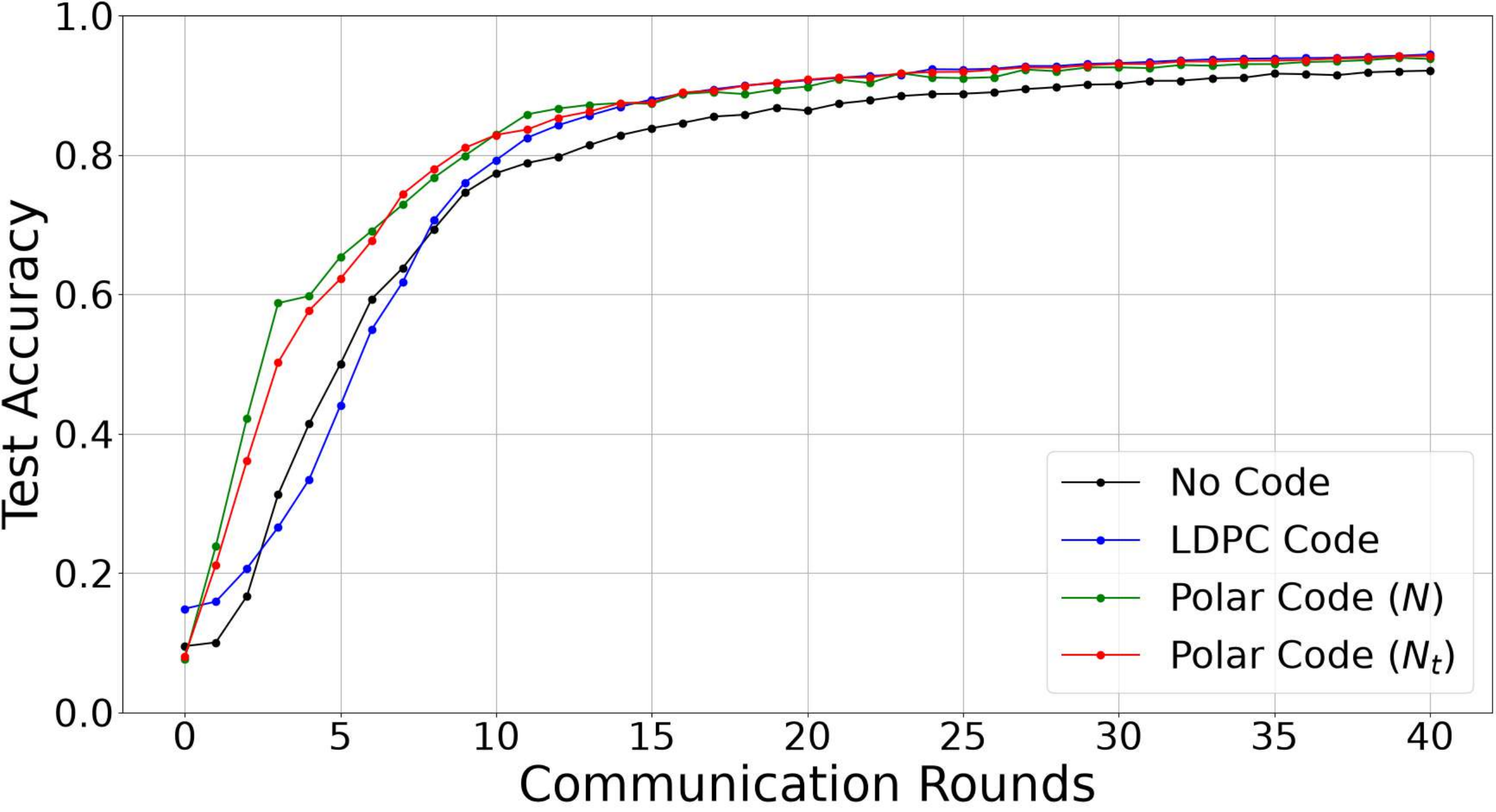}
		\caption{erasure probability 0.1}
		\label{Fig_Test_accuracy(1)}
	\end{subfigure}
	\centering
	\begin{subfigure}{0.32\linewidth}
		\centering
		\includegraphics[width=1\linewidth]{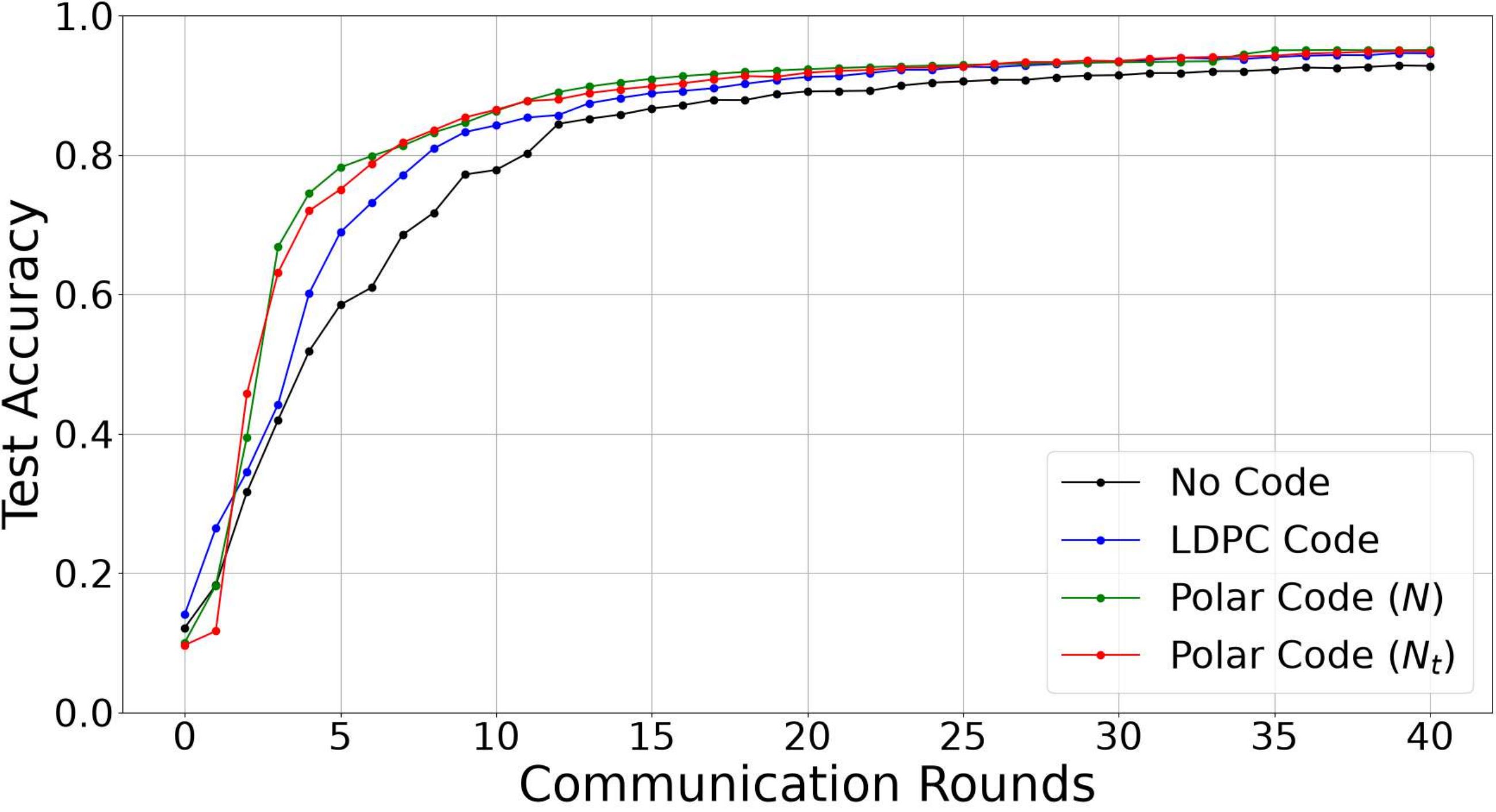}
		\caption{erasure probability 0.2}
		\label{Fig_Test_accuracy(2)}
	\end{subfigure}
	\centering
	\begin{subfigure}{0.32\linewidth}
		\centering
		\includegraphics[width=1\linewidth]{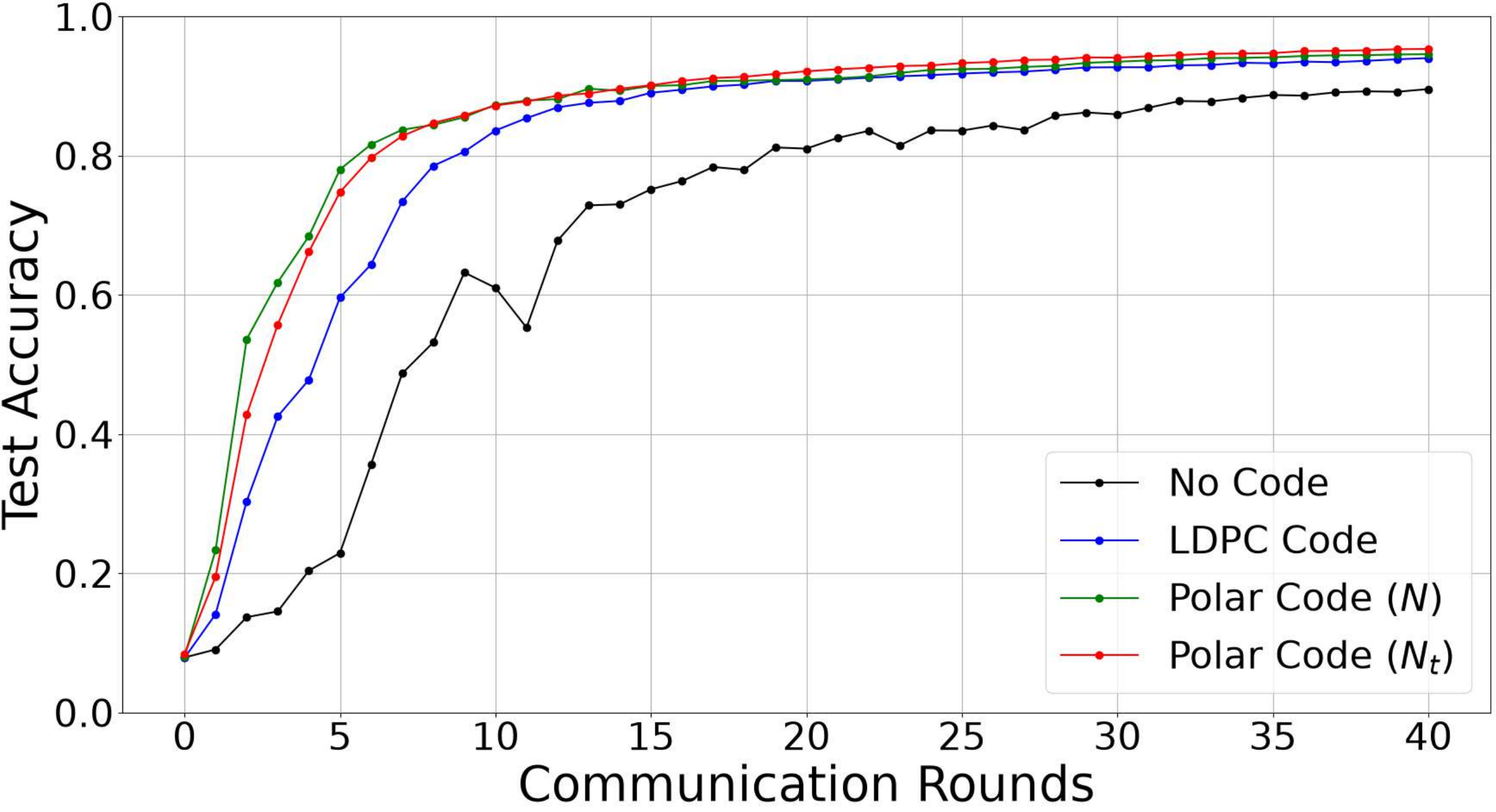}
		\caption{erasure probability 0.3}
		\label{Fig_Test_accuracy(3)}
	\end{subfigure}
	
	\centering
	\begin{subfigure}{0.32\linewidth}
		\centering
		\includegraphics[width=1\linewidth]{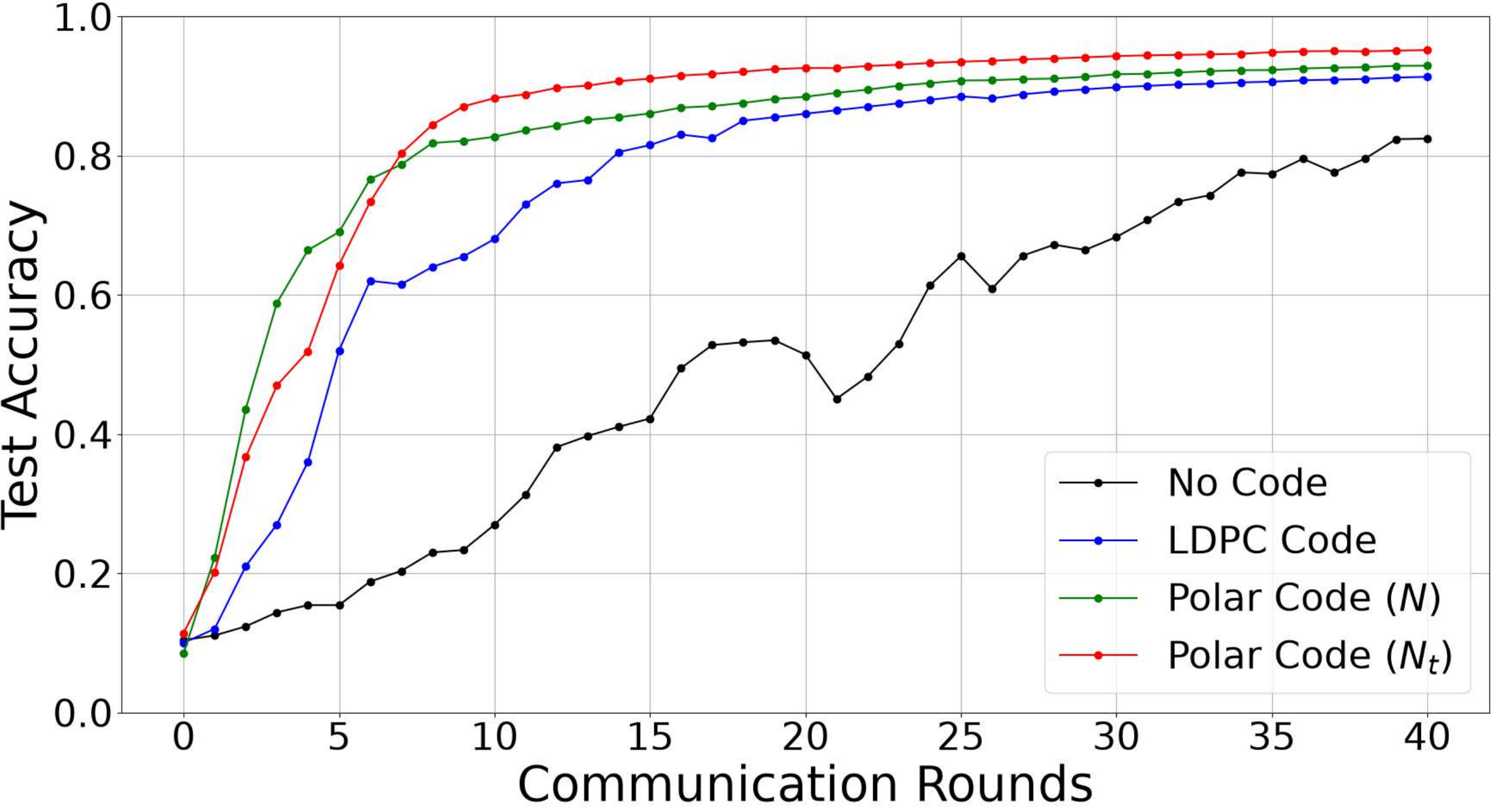}
		\caption{erasure probability 0.4}
		\label{Fig_Test_accuracy(4)}
	\end{subfigure}
	\centering
	\begin{subfigure}{0.32\linewidth}
		\centering
		\includegraphics[width=1\linewidth]{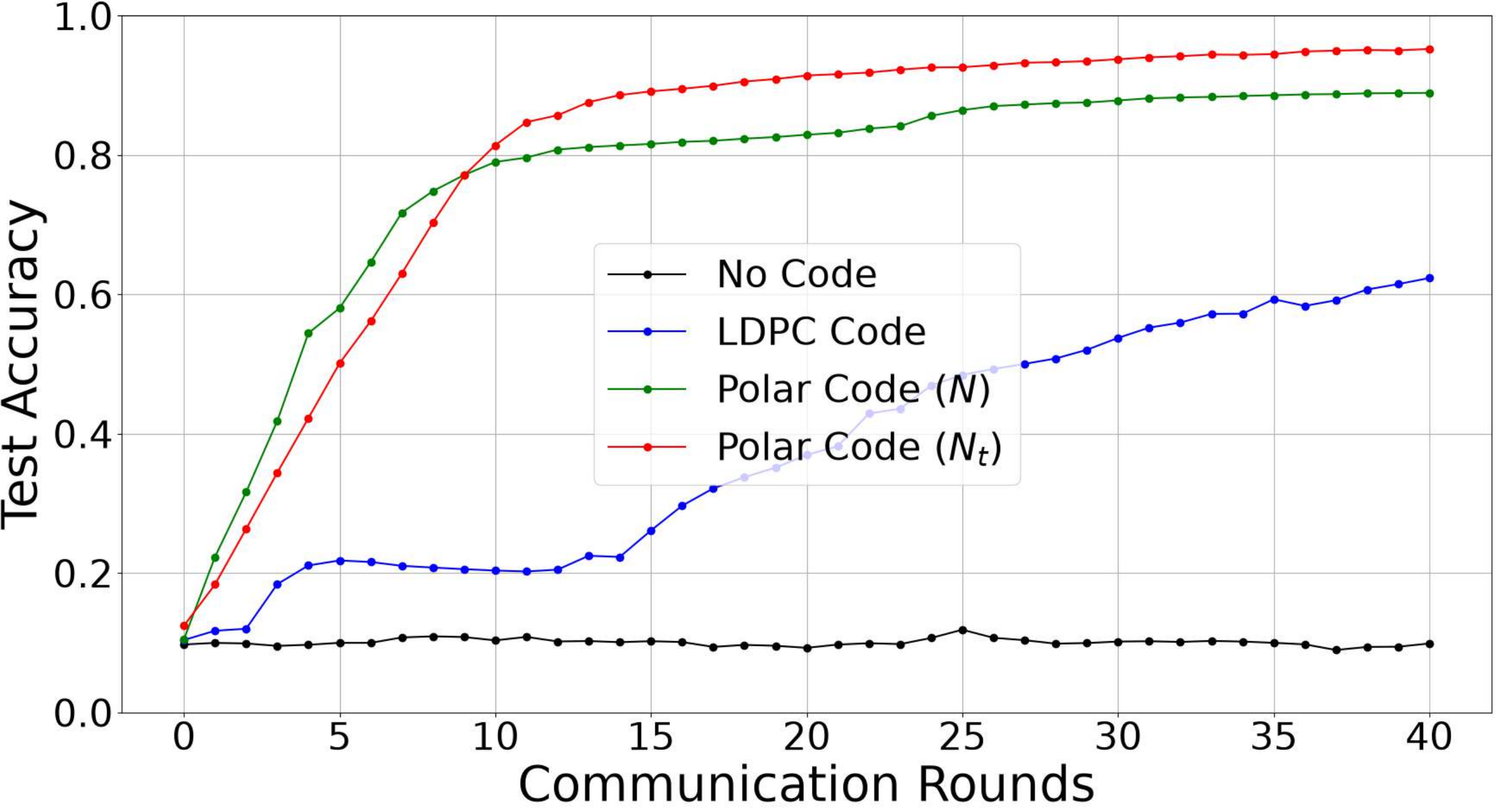}
		\caption{erasure probability 0.5}
		\label{Fig_Test_accuracy(5)}
	\end{subfigure}
	\centering
	\begin{subfigure}{0.32\linewidth}
		\centering
		\includegraphics[width=1\linewidth]{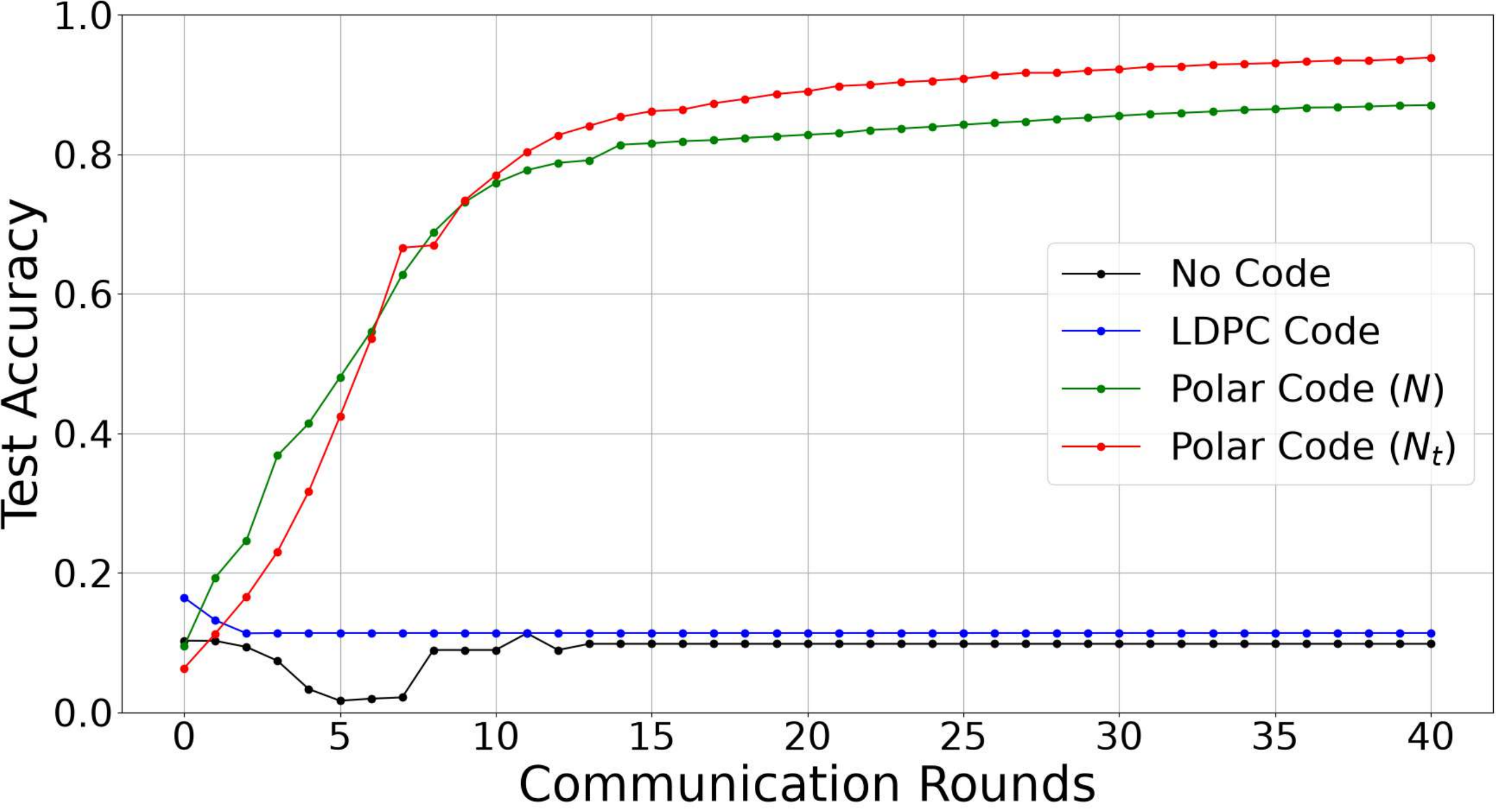}
		\caption{erasure probability 0.8}
		\label{Fig_Test_accuracy(8)}
	\end{subfigure}
	\caption{The test accuracy of different schemes}
	\label{Fig_Test_accuracy}
\end{figure*}

In the experiments, we use the well-known MNIST data set for the training of the convolutional neural networks (CNN). The MNIST data set
has $70000$ gray images, including $60000$ samples
for training and $10000$ samples for testing. These images with the same size of $28\times28$ can be categorized into ten classes, each containing a handwritten digit from $0$ to $9$. The convolution neural network we used for experiments comprises two $5 \times 5$ convolution layers. Specifically, the first layer has
$10$ output channels, while the second layer contains $20$ output channels, and each layer is connected with a $2 \times 2$ maximum pooling layer and then the ReLu activation function. Next is a full connection layer consisting of 50 output channels with the ReLu activation function. Then after a dropout layer is another full connection layer consisting of $10$ output classes. The batch size is set to $100$ and the learning optimizer uses SGD. Cross-entropy is used as the loss function and the learning step size is set to be $0.005$. 

We assume that $M=20$ clients are connected to one server. Each client carries $3000$ samples. The data is independent and identically distributed (IID) and the training data is shuffled and then  randomly assigned to each user. For each global iteration, $20$ percent of the $20$ clients are selected to participate in this learning iteration and the whole learning process lasts for $T=40$ global iterations. To demonstrate the advantage of our proposed scheme,
we compare the performance of the proposed scheme with
the following benchmarks:

\textbf{Uncoded transmission:} This scheme uses no channel coding. The data are quantized into $32$ bits and then sent directly into $32$ uses of the binary erasure channels.

\textbf{LDPC coded transmission:} This scheme quantizes the data into $5$ bits and then uses an EEP  LDPC codes with a constant code length $N=32$.

\textbf{Polar coded transmission with  constant block length:} This scheme  quantizes the data into $5$ bits and then uses a UEP polar code with a constant code length $N=32$, as proposed in Section \ref{scheme1}.

\textbf{Polar coded transmission with variable block length:} This scheme quantizes the data into $n$ bits, which is a function of the erasure probability $\epsilon$, more specifically, $n=5$ for $0 \le \epsilon \le 0.4$, take $n=4$ for $0.5 \le \epsilon \le 0.6$ and take $n=3$ for $\epsilon \ge 0.7$. Then it uses a UEP polar code with a variable block length $\{N_t\}$ according to the solution of the optimization problem $P3$ in Section \ref{scheme2}.
The performance criterion in the experiments is the accuracy, the percentage of correctness when applying the converging cnn or the converged cnn on the testing data.

We demonstrate the convergence performances, i.e., accuracy on the testing data for the converging cnn,  of four schemes under different erase probabilities in Fig.\ref{Fig_Test_accuracy}. We note that the proposed polar code based schemes, with constant or variable block length, have a faster convergence speed  and a higher accuracy than the two benchmark schemes, i.e., uncoded and LDPC schemes, especially when the erasure probability is large. 

Fig. \ref{Fig_Convergence_Accuracy} presents the test accuracy of the converged CNN, offering a more complete performance comparison. The proposed polar code based schemes, whether employing constant or variable block lengths, consistently achieve substantial gains over both the uncoded and LDPC benchmarks. This marked improvement is largely attributable to the unequal error protection (UEP) property of the polar code, as detailed in Section \ref{scheme1}. Compared with the constant-length configuration, the variable-length strategy introduced in Section \ref{scheme2} yields only a modest additional gain, suggesting that the optimization over block lengths  may be safely omitted when its computational overhead is prohibitively expensive. Furthermore, consistent with the trend observed in Fig. \ref{Fig_Test_accuracy}, the results in Fig. \ref{Fig_Convergence_Accuracy} demonstrate that the performance advantage of the proposed scheme over the benchmarks becomes increasingly pronounced as the channel erasure probability increases. 

\begin{figure}[t]
	\centering
	\includegraphics[width=1\linewidth]{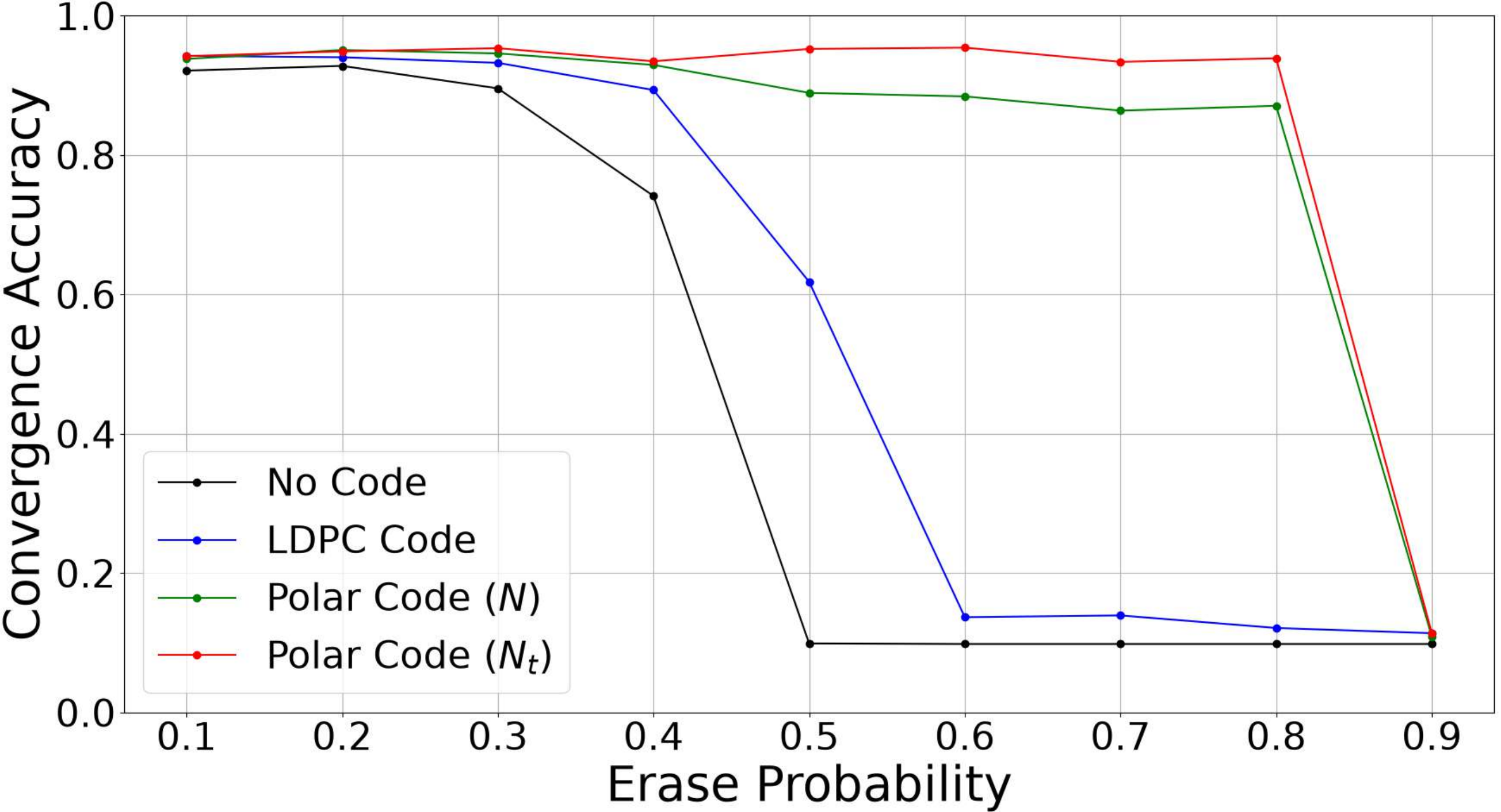}
	\caption{The test accuracy after convergence of different schemes under different erase probabilities.}
	\label{Fig_Convergence_Accuracy}
\end{figure}

\section{Conclusion}\label{section5}
In this paper, we propose a polar code based federated learning (FL) scheme that leverages the unequal error protection (UEP) property of polar codes under finite block lengths to protect the quantization bits of the local model according to their relative significance. We further conduct a convergence analysis of the proposed scheme and optimize the upper bound on the convergence gap over the number of quantization bits and the polar code block length across all training iterations. Experimental results demonstrate that the proposed polar code based schemes, whether with constant or variable block lengths, achieve significant performance gains over both uncoded and LDPC-based benchmarks.

\bibliographystyle{IEEEtran}
\bibliography{paper}	
\end{document}